\renewcommand{\epsilon}{\varepsilon}
\def\th@plain{%
  \thm@notefont{}
  \itshape 
}
\def\th@definition{%
  \thm@notefont{}
  \normalfont 
}
\DeclareMathOperator*{\Argmin}{arg\,min}
\DeclareMathOperator{\Diag}{diag}
\DeclareMathOperator{\distHamming}{d_H} 
\DeclareMathOperator{\Exp}{exp}
\DeclareMathOperator{\Exps}{e}
\DeclareMathOperator{\Identity}{I}
\DeclareMathOperator*{\Minimize}{Minimize}
\DeclareMathOperator{\Vecspan}{Vec}
\newcommand{\abs}[1]{\left\lvert#1\right\rvert}
\newcommand{\ballHamming}[2]{B_{#2}(#1)}
\newcommand{\card}[1]{\left\lvert#1\right\rvert} 
\newcommand{\condproba}[2]{\mathbb{P}\left(#1\middle|#2\right)}
\newcommand{\diag}[1]{\Diag\left(#1\right)}
\newcommand{\defeq}{\vcentcolon =}
\renewcommand{\exp}[1]{\Exp\left(#1\right)}
\newcommand{\exps}[1]{\Exps^{#1}}
\newcommand{\Indic}{\mathds{1}}
\newcommand{\indic}[1]{\Indic_{#1}}
\newcommand{\kernel}[1]{\mathrm{Ker}\left(#1\right)}
\newcommand{\lambdamax}{\lambda_{\max}}
\newcommand{\lambdamin}{\lambda_{\min}}
\newcommand{\mutilde}{\tilde{\mu}}
\newcommand{\Nnint}{\mathbb{N}}
\newcommand{\norm}[1]{\left\lVert#1\right\rVert}
\newcommand{\opnorm}[1]{\norm{#1}_{\mathrm{op}}}
\newcommand{\scalar}[2]{\langle#1,#2\rangle}
\newcommand{\range}[1]{\mathrm{Im}(#1)}
\newcommand{\Reals}{\mathbb{R}}
\newcommand{\RR}{\mathbb{R}}
\newcommand{\spec}[1]{\mathrm{Spec}\left(#1\right)} 
\newcommand{\vecspan}[1]{\Vecspan\left(#1\right)}
\newcommand{\bigo}[1]{\mathcal{O}\left(#1\right)}
\theoremstyle{plain}
\newtheorem{theorem}{Theorem}[section]
\newtheorem{proposition}[theorem]{Proposition}
\newtheorem{lemma}[theorem]{Lemma}
\newtheorem{corollary}[theorem]{Corollary}
\theoremstyle{definition}
\newtheorem{definition}[theorem]{Definition}
\newtheorem{assumption}[theorem]{Assumption}
\theoremstyle{remark}
\newtheorem{remark}[theorem]{Remark}
\newcommand{\temps}{\mu}
\newcommand{\eigen}{\lambda}
\newcommand{\sol}{q}
\newcommand{\radius}{\rho}
\newcommand{\supgradient}{M}
\newcommand{\lipconstant}{L}
\newcommand{\dimim}{ N_0 }
\newcommand{\dd}{\mathop{}\!\mathrm{d}}
\newcommand{\rank}[1]{\mathrm{Rank}\left(#1\right)}
\newcommand{\funclip}{\Lambda}
\DeclareMathOperator{\dimension}{dim}
\newcommand{\subspace}{E}
\newcommand{\cst}{\gamma}
\newcommand{\Cst}{\Gamma}
\newcommand{\Concat}{\varphi}
\newcommand{\concat}[1]{\Concat (#1)}
\newcommand{\context}[1]{c(#1)}
\newcommand{\Corpus}{\mathcal{C}}
\newcommand{\Dico}{\mathcal{D}}
\newcommand{\Docs}{[D]^{*}}
\newcommand{\docs}[1]{[D]^{#1}}
\newcommand{\Doctovec}{\varphi}
\newcommand{\doctovec}[1]{\Doctovec(#1)}
\newcommand{\Indset}{\mathcal{S}}
\newcommand{\Indsetchange}{\mathcal{E}}
\newcommand{\freq}{f}
\newcommand{\idf}{v}
\newcommand{\maxidf}{\idf_{\max}}
\newcommand{\minidf}{\idf_{\min}}
\newcommand{\maxmult}{\mult_{\max}}
\newcommand{\minsingular}[1]{\sigma_{\min}(#1)}
\newcommand{\maxsingular}[1]{\sigma_{\max}(#1)}
\newcommand{\mult}{m}
\newcommand{\mtilde}{\tilde{m}}
\newcommand{\nhood}[1]{\gamma(#1)}
\newcommand{\Normtfidf}{\phi}
\newcommand{\normtfidf}[1]{\Normtfidf (#1)}
\newcommand{\onehot}[1]{\indic{#1}}
\newcommand{\Philinear}{\Phi^{\text{lin}}}
\newcommand{\Psilinear}{\Psi^{\text{lin}}}
\newcommand{\pitilde}{\tilde{\pi}}
\newcommand{\Proj}{P}
\newcommand{\Red}{R}
\newcommand{\Softmax}{\sigma}
\newcommand{\softmax}[1]{\Softmax\left(#1\right)}
\newcommand{\Tfidf}{\varphi}
\newcommand{\tfidf}[1]{\Tfidf (#1)}
\newcommand{\tmax}{T_{\max}}
\newcommand{\Token}{Q}
\newcommand{\winsize}{\nu}
\newcommand{\xtilde}{\tilde{x}}
\def\restriction#1#2{\mathchoice
              {\setbox1\hbox{${\displaystyle #1}_{\scriptstyle #2}$}
              \restrictionaux{#1}{#2}}
              {\setbox1\hbox{${\textstyle #1}_{\scriptstyle #2}$}
              \restrictionaux{#1}{#2}}
              {\setbox1\hbox{${\scriptstyle #1}_{\scriptscriptstyle #2}$}
              \restrictionaux{#1}{#2}}
              {\setbox1\hbox{${\scriptscriptstyle #1}_{\scriptscriptstyle #2}$}
              \restrictionaux{#1}{#2}}}
\def\restrictionaux#1#2{{#1\,\smash{\vrule height .8\ht1 depth .85\dp1}}_{\,#2}} 
\newcommand{\cQ}{\mathcal{Q}}
\newcommand{\ie}{{\em i.e.,~}}
\icmltitlerunning{On the Robustness of Text Vectorizers}
\begin{document}

\twocolumn[
\icmltitle{On the Robustness of Text Vectorizers}



\icmlsetsymbol{equal}{*}

\begin{icmlauthorlist}
\icmlauthor{R\'emi Catellier}{uca,inria}
\icmlauthor{Samuel Vaiter}{uca,cnrs}
\icmlauthor{Damien Garreau}{uca,inria}
\end{icmlauthorlist}

\icmlaffiliation{uca}{Universit\'e C\^ote d'Azur, CNRS, LJAD, France}
\icmlaffiliation{inria}{Inria, France}
\icmlaffiliation{cnrs}{CNRS, France}

\icmlcorrespondingauthor{Damien Garreau}{damien.garreau@unice.fr}

\icmlkeywords{Natural Language Processing, Theory, Robustness, Embedding, Tokenization}

\vskip 0.3in
]



\printAffiliationsAndNotice{} 

\begin{abstract}
A fundamental issue in machine learning is the robustness of the model with respect to changes in the input. 
In natural language processing, models typically contain a first embedding layer, transforming a sequence of tokens into vector representations.
While the robustness with respect to changes of continuous inputs is well-understood, the situation is less clear when considering discrete changes, for instance replacing a word by another in an input sentence. 
Our work formally proves that popular embedding schemes, such as concatenation, TF-IDF, and Paragraph Vector ({\em a.k.a.}~\texttt{doc2vec}), exhibit robustness in the H\"older or Lipschitz sense with respect to the Hamming distance.
We provide quantitative bounds for these schemes and demonstrate how the constants involved are affected by the length of the document.
These findings are exemplified through a series of numerical examples.
\end{abstract}


\section{Introduction}
\label{sec:introduction}

Recent advances in natural language processing (NLP) have exceeded all expectations. 
In particular, the advent of large language models such as BERT \citep{devlin_et_al_2018} and GPT \citep{brown_et_al_2020} are transforming radically the way we interact with computers. 
They typically rely on a deep neural network (DNN) architecture and are trained on a variety of tasks such as sentiment analysis, translation, and text summarization. 

A known issue with DNNs is the existence of \emph{adversarial examples}: examples modified in order to radically change the output of the model. 
Initially popularized in the context of image classification \citep{szegedy_et_al_2014}, such examples also exist in NLP and a flourishing literature exists on this topic \citep{zhang_et_al_2020}. 
This problem has sparked a tremendous interest into the \emph{robustness} of models with respect to small changes in the input. 
In this paper, we focus on the robustness of the \emph{vectorization} NLP pipelines: the transformation of the input document into a vector representation. 
We will consider documents as ordered sequences of tokens, not necessarily corresponding to words. 
For instance, GPT 2 uses Byte Pair encoding \citep{gage_1994,sennrich_et_al_2016}, which relies on tokens corresponding to sub-words. 

As far as we reckon, there are essentially three main schools of thought when it comes to vectorization:\\
\emph{(i)} \textbf{concatenation} of vectors corresponding to each token of the document.
These vectors are often called \emph{word vectors} when the tokens are individual words.
They can either be one-hot representations of the tokens, or obtained by a mapping learned from data. 
A celebrated approach to produce word vectors is \texttt{word2vec} \citep{mikolov_et_al_2013,mikolov_et_al_2013_a}, which transports semantic properties to the embedding space. 
Many other methods exist, such as GloVe \citep{pennington_et_al_2014}, EMF \citep{li_et_al_2015}, WordPiece \citep{wu_et_al_2016}, FastText \citep{bojanowski_et_al_2017}, and ELMo \citep{peters_et_al_2018}. 
Positional information is typically added to the token embeddings. 
\\
\emph{(ii)} \textbf{TF-IDF (term frequency - inverse document frequency)}, taking words as tokens and simply considering the frequencies of each individual word in the document. 
These frequencies are reweighted by an overall importance term to take into account the lesser importance of frequently appearing words such as articles. 
This is the historical approach to text vectorization \citep{luhn_1957,jones_1972}.\\
\emph{(iii)} \textbf{\emph{ad hoc} approaches}. 
Notably, Paragraph Vector (also known as \texttt{doc2vec}  \citep{le_mikolov_2014}) extends the ideas of \texttt{word2vec}. 
Although we will focus on \texttt{doc2vec} in this work, we emphasize that there exists other \emph{ad hoc} approaches, such as skip-thought vectors \citep{kiros_et_al_2015}, quick-thought \citep{logeswaran_honglak_2018}, or universal sentence encoder \citep{cer_et_al_2018}. 

\emph{A priori}, vectorizers are not designed to be robust to small changes. 
Even when modifying a single word of the input document, the embedding could change drastically.
Thus, we ask the following question: 
\begin{quote}
    \emph{Are text vectorizers \textbf{provably} robust with respect to modifying a small subset of the document?}
\end{quote}

Typical notions of robustness in machine learning deals with \emph{continuous} input data: changing slightly the observation means that for instance its $\ell^2$-norm evolves infinitesimally.
The challenge of our analysis is the fundamentally  \emph{discrete} nature of text data. 
Changing a word in a document is usually not innocuous -- one can think of extreme cases where the meaning of this word is flipped -- and vectorizers sensitive to the semantics of input documents should capture this phenomenon.  
Nevertheless, we show that the answer is positive for all vectorizers that we study. 
Another difficulty is that the mathematical formalization of some of these vectorizers was not the main concern of the community.
A necessary first step is thus to give an unequivocal definition of our objects of interest.


\paragraph{Contributions.}
In this paper, we analyze the robustness of vectorizers as their local regularity (Lipschitz, H\"older) with respect to the \textbf{Hamming distance} (Section~\ref{sec:framework}). We prove:\\
$\bullet$ the $1/2$-H\"older continuity of \textbf{concatenation of token and positional embeddings} (Proposition~\ref{prop:concatenation-robustness}); \\
$\bullet$ the Lipschitz continuity of \textbf{TF-IDF} (Proposition~\ref{prop:robustness-non-normalized-tfidf}), and the $1/2$-H\"older continuity of it normalized variant (Proposition~\ref{prop:robustness-normalized-tfidf});\\
$\bullet$ the Lipschitz continuity of \textbf{\texttt{doc2vec}} (Theorem~\ref{th:bounded-traj}).
As a necessary step to derive the latter, we make two new mathematical contributions (see Appendix), we propose:\\
$\bullet$ a \textbf{local Lipschitz analysis of the softmax} (Theorem~\ref{lemma:softmax-lipschitz-appendix});\\
$\bullet$ a \textbf{Gr\"onwall--Bellman--Bahouri result} (Theorem~\ref{lemma:CS-appendix}) needed when casting the \texttt{doc2vec} analysis as an ODE problem.
The code for all experiments of the paper is available at \url{https://github.com/dgarreau/vectorizer-robustness}. 


\paragraph{Related work.} 
\emph{(Adversarial examples).} A major motivation for studying robustness is its impact on the existence of adversarial examples. 
In the case of DNNs, robustness often means Lipschitz continuity with respect to the inputs. 
For instance, one can show that a network having a small Lipschitz constant prevents the existence of small adversarial changes. 
More precisely, \citet{hein_andriushchenko_2017} provide a lower bound on the norm of the input manipulation needed to change the classifier decision inversely proportional to the Lipschitz constant of the network. 
This was later extended by \citet{weng_et_al_2018_a} to DNNs with ReLU activations. 
Quantitatively, \citet{weng_et_al_2018} show that fully connected layers have a Lipschitz constant potentially as large as the operator norm of the weight matrix. 
From a practical point of view, it has also been noticed that enforcing the Lipschitz constants of the layers to remain low does improve the robustness~\citep{cisse_et_al_2017}.\\
\emph{(Generalization \& interpolation).} It is known that robust algorithms generalize better. 
In particular, \citet{xu_mannor_2012} derive generalization bounds for generic algorithms depending in their robustness. 
The definition of robustness here includes Lipschitz continuous DNNs.  
More recently, \citet{bubeck_selke_2021} extending \citep{bubeck_et_al_2020} showed that in order to train Lipschitz continuous models, one has to take a large number of parameters.\\
\emph{(Theory of vectorizers).} Surprisingly, the robustness of vectorizers received little attention until now on the theoretical side, and all previous works on robustness assume \emph{continuous} input. 
Nevertheless, there exist some theoretical works on similar problems. 
Most notably, \citet{arora_et_al_2016} analyze a large class of word vectorizers and explain how the intriguing alignment properties observed experimentally appear. 

\paragraph{Notations.}
For $u\in\Reals^p$, we denote by $\norm{u}$ its Euclidean norm.
Let $g : \Reals \times \Reals^d \to \Reals$ be a function. 
The derivative in the time variable ($\temps$) is denoted by $\partial_\mu g$ whereas $\nabla g$ (resp. $\nabla^2 g$) denotes the Jacobian (resp. the Hessian) of $g$ in the space variable. 
We let $\Indic = (1,\ldots,1)^\top \in \Reals^d$.
For a matrix $R$, $\minsingular{R}$ is its smallest singular value.
For a given set $\Indset$, $\card{\Indset}$ is its cardinal.


\section{Framework}
\label{sec:framework}

Let us now present the mathematical framework in which we perform our analysis. 
We consider tokens from a finite dictionary~$\Dico$, identified as $[D]\defeq \{1,\ldots,D\}$. 
A \emph{document} $x$ built on $\Dico$ is a finite sequence of elements of $\Dico$, and we write $\Docs$ for the set of all documents. 
Thus the central object of our work, a vectorizer, is simply a mapping $\varphi : \Docs \to \Reals^d$, where $d$ is the dimension of the embedding. 
The \emph{length} of $x$ will be denoted by $T(x)$, and therefore $x$ can be written as $(x_1,\ldots,x_{T(x)})$. 
The set of all documents over $\Dico$ of length $T$ will be denoted $\docs{T} \subset \Docs$. 
When there is no ambiguity, we remove the dependency in $x$ from our notation, \emph{e.g.}, $T(x)$ becomes $T$.

As discussed in the related work, robustness is often synonym with \emph{Lipschitz continuity} of the model -- distance between outputs lies within a constant factor of the distance between inputs.
As distance between input documents $x$ and $\tilde{x}$ of same length, we consider the \emph{Hamming distance}, which is the number of indices such that $x_t$ and $\tilde{x}_t$ differ:
\[
\distHamming(x,\tilde{x}) \defeq \card{\{ t \in [T] : x_t \neq \tilde{x}_t \}}
\, .
\]
The distance between outputs will simply be measured by the Euclidean norm in $\Reals^d$.
In definitive, for a given document length $T$, what we call \textbf{Lipschitz continuity} of the vectorizer $\varphi$ can be written as
\begin{equation}
\label{eq:def-lipschitz-continuity}
\forall x,\tilde{x}\in\docs{T}, \quad \norm{\varphi(x)-\varphi(\tilde{x})} \leq C \distHamming(x,\tilde{x})
\, ,
\end{equation}
where $C$ is called the Lipschitz constant. 
Another way to quantify robustness is to allow for an exponent in Eq.~\eqref{eq:def-lipschitz-continuity}: 
\begin{equation}
\label{eq:def-holder}
\forall x,\tilde{x}\in\docs{T}, \quad \norm{\varphi(x)-\varphi(\tilde{x})} \leq C \distHamming(x,\tilde{x})^\beta 
\, ,
\end{equation}
with $1\geq \beta >0$. 
This is known as \textbf{H\"older continuity}, and coincides with Lipschitz continuity whenever $\beta=1$. 
While it is known that Lipschitz continuity implies H\"older continuity on the real line when $\beta \leq 1$, this is not the case here, since $\distHamming$ takes values in $\Nnint$. 
Thus in our setting, \textbf{Lipschitz continuity is a weaker notion of robustness than H\"older continuity. }

Often we obtain more precise results, depending explicitly on the set of indices such that the documents differ. 
To this extent, for a given subset $\Indset$ of $[T]$, we define the \emph{set of $\Indset$-close documents} $\ballHamming{x}{\Indset}$ of $x \in \docs{T}$ as
\[
    \ballHamming{x}{\Indset} =
    \{
        \tilde x \in [D]^T \, : \,
        x_i = \tilde x_i \text{ for } i \not\in \Indset
    \} 
\, .
\]
Said alternatively, $\xtilde \in \ballHamming{x}{\Indset}$ if it is  obtained by replacing the tokens of $x$ with indices belonging to $\Indset$ by arbitrary tokens in $\Dico$. 
We note that $\ballHamming{x}{\Indset}$ is a subset of the Hamming ball of radius $\card{S}$. 
Let us consider for instance the document $x=$ ``the quick brown fox'' and the set of perturbed indices $\Indset=\{2,3\}$ 
Here, $x$ has length $T=4$, $\card{\Indset}=2$, and an element of $\ballHamming{x}{\Indset}$ is the document $\xtilde=$ ``the slow blue fox.''


\section{Warm-up: concatenation}
\label{sec:concatenation}

Concatenation embeddings generally proceed by first mapping each token $x_t$ of $x$ to a vector $u(x_t,t)\in\Reals^d$. 
In a second step, these vector representations are concatenated together to form $\concat{x}$. 
We assume that the representation $u(x_t,t)$ can be written as 
\begin{equation}
\label{eq:def-concat-first-step}
u(x_t,t) = [u_e(x_t) ; u_p(t)]\in\Reals^d
\, ,
\end{equation}
where $u_e\in\Reals^{d_e}$ denotes vector representations of individual tokens, 
while $u_p\in\Reals^{d_p}$ encodes positional information, and we define $d\defeq d_e+d_p$. 

\paragraph{Token embeddings.}
As noted in the introduction, there are essentially two widespread choices for~$u_e$: either use \emph{sparse} representations for individual tokens or use \emph{dense} representations. 
The first approach is often synonymous with the use of \emph{one-hot encodings}, hence considering the mapping $u_e:j\mapsto \onehot{j}$ as a building brick, 
where, for any $j\in\Dico$, we define $\onehot{j}$ the $j$-th vector of the canonical basis of $\Reals^D$. 
This has the advantage of simplicity. 
One caveat is that, although sparse, one-hot vectors have dimensionality $d_e=D$---the size of the dictionary.
Regarding dense embeddings, as discussed in the introduction, the mapping $j\mapsto u_e(j)$ is learned from data and can encompass some semantic properties. 
In all these examples, $u_e(j)$ typically has dimensionality $d_e\ll D$ (for instance, \texttt{gensim} takes $d_e=100$ in its \texttt{word2vec} implementation).

\paragraph{Positional embeddings.}
A common choice is to learn positional embeddings, jointly with token embeddings. 
It is also possible to use deterministic positional embeddings, such as one-hot vectors --- $u_p(t)=\indic{t}\in\Reals^{\tmax}$, where $\tmax$ is a maximal document size, or more complicated functions of~$t$. 
For instance, the original transformers architecture uses a sinusoidal transformation of~$t$ as positional embedding \citep{vaswani_et_al_2017}. 
Further, it is also possible to incorporate additional positional information in the embedding -- for instance BERT incorporates segment position information corresponding to the index of the sentence the token belongs to \citep[Figure~2]{devlin_et_al_2018}. 
Finally, one can simply ignore $u_p$ altogether, relying simply on the order of the $u(x_t)$ to convey the positional information. 
Let us note that when $d_e=d_p$, one can add $u_e$ and $u_p$ in Eq.~\eqref{eq:def-concat-first-step} instead of concatenating them, a possibility to which our analysis is robust. 

\paragraph{Concatenation.}
For a given $u$, the embedding $\concat{x}$ of a document $x$ is formed by \emph{concatenating} the $u(x_t,t)$s for $t\in [T]$. 
Formally, if $T\geq \tmax$, then the concatenation $\concat{x}$ of $(x_1,\dots,x_{T})$ is defined as
\[
\concat{x} \defeq [u(x_1,1) ; \ldots ; u(x_{\tmax},\tmax)] \in \Reals^{d\tmax}
\, ,
\]
and if $T < \tmax$, as (\emph{zero-padding}),
\[
\concat{x} \defeq [u(x_1,1) ; \ldots ; u(x_T,T) ; 0 ; \ldots ; 0] \in \Reals^{d\tmax}
\, .
\]
Since the embedding is explicit in this case, it is straightforward to show the following:

\begin{proposition}[Robustness of concatenation]
\label{prop:concatenation-robustness}
Let $x \in \docs{T}$, $\Indset \subseteq [T]$, and $\xtilde \in \ballHamming{x}{\Indset}$.
Then 
\[
\norm{\concat{x} - \concat{\xtilde}} \leq \max_{j\neq k} \norm{u_e(j)-u_e(k)} \cdot \sqrt{\card{\Indset} \wedge \tmax}
\, .
\]
\end{proposition}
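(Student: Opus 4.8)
The plan is to exploit the fully explicit block structure of $\varphi$. First I would note that both $\concat{x}$ and $\concat{\xtilde}$ live in $\Reals^{d\tmax}$ and are obtained by stacking the blocks $u(\cdot,t)=[u_e(\cdot);u_p(t)]$ for $t\in[\min(T,\tmax)]$, followed by zeros if $T<\tmax$. Since $\xtilde\in\ballHamming{x}{\Indset}$, we have $x_t=\xtilde_t$ for every $t\notin\Indset$; moreover the positional component $u_p(t)$ is the \emph{same} in both embeddings for every $t$, and the zero-padding tails coincide. Consequently the $t$-th block of $\concat{x}-\concat{\xtilde}$ equals $[\,u_e(x_t)-u_e(\xtilde_t)\,;\,0\,]$ when $t\in\Indset\cap[\tmax]$ and is the zero vector otherwise.

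Next I would apply Pythagoras across the blocks to get
\[
\norm{\concat{x}-\concat{\xtilde}}^2
= \sum_{t\in\Indset\cap[\tmax]} \norm{u_e(x_t)-u_e(\xtilde_t)}^2 .
\]
Each summand is bounded by $\max_{j\neq k}\norm{u_e(j)-u_e(k)}^2$ (it is simply $0$ when $x_t=\xtilde_t$, which is permitted since elements of $\ballHamming{x}{\Indset}$ need not differ from $x$ on all of $\Indset$), and the number of summands is $\card{\Indset\cap[\tmax]}\le\card{\Indset}\wedge\tmax$. Taking square roots yields exactly the claimed inequality.

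There is no real obstacle here; the only point requiring a line of care is the bookkeeping of the two regimes $T\ge\tmax$ and $T<\tmax$, so that one sees the contribution is confined to indices in $\Indset\cap[\tmax]$ and that the truncated or zero-padded tails never contribute — this is what produces the $\wedge\,\tmax$ in the bound. (If desired one may also note that $\max_{j\neq k}\norm{u_e(j)-u_e(k)}$ is well defined as soon as $D\ge 2$, the case $D=1$ being trivial since then all documents of a given length coincide.)
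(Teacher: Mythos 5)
Your proof is correct and follows essentially the same route as the paper's: apply Pythagoras across the concatenated blocks, observe that the positional component cancels so only the $u_e$ part of each perturbed block survives, and bound the number of nonzero blocks by $\card{\Indset\cap[\tmax]}\le\card{\Indset}\wedge\tmax$. Your extra remarks on the zero-padding regime and the $x_t=\xtilde_t$ edge case are fine but not needed beyond what the paper already does.
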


In particular, for small perturbation of the input document, \textbf{concatenation is $1/2$-H\"older with respect to the Hamming distance}. 
Closer inspection of the proof reveals that the constant depends only on the perturbed tokens: if the changes made are close from the point of view of $u_e$, then $\concat{x}$ and $\concat{\xtilde}$ remain close.


\section{TF-IDF transform}
\label{sec:tfidf}

Let $x$ be a document of length $T$ built on $\Dico$. 
In this section, we will assume that tokens correspond to individual words. 
Forgetting the sequential nature of natural language, one can simply look at the words appearing in $x$ with repetitions -- this is informally called a \emph{bag-of-words} representation.
Any given word $j\in\Dico$ appears in this representation with \emph{multiplicity} $\mult_j(x)$. 
The TF-IDF transform of $x$ is a vector $\tfidf{x}\in\Reals^D$, with each coordinate of $\tfidf{x}$ corresponding to a word of the dictionary. 
Component-wise, $\tfidf{x}$ is a product of two terms: the \emph{term frequency} $\freq_j$ and the \emph{inverse document frequency}~$\idf_j$:
\begin{equation}
\label{eq:def-tfidf}
\forall j\in\Dico , \quad 
\begin{cases}
\freq_j &\defeq \frac{\mult_j}{T}
\, ,  \\
\idf_j &\defeq \log \frac{\card{\Corpus}}{\card{\{z\in\Corpus \text{ s.t. } j\in z\}}}
\, , 
\end{cases}
\end{equation}
where $\Corpus$ is a set of documents. 
We will assume that $\idf_j >0$. 
The exact expressions appearing in Eq.~\eqref{eq:def-tfidf}  can vary depending on implementation, we use here the most common definitions (in particular, they are the default choices used by \texttt{scikit-learn} \citep{pedregosa_et_al_2011}). 
The (non-normalized) TF-IDF of $x$ can be written $\tfidf{x}_j=\freq_j\idf_j$ for all $j\in\Dico$. 
Intuitively, one wants to quantify the importance of each word in the document, while ignoring common words appearing in many documents such as articles. 
Finally, it is common to normalize $\tfidf{x}$, generally using the Euclidean norm. 
We denote by $\normtfidf{x}\defeq \tfidf{x}/\norm{\tfidf{x}}$ the normalized TF-IDF of $x$. 


\subsection{Robustness results}
\label{sec:tfidf-robustness}

As we saw in the previous section, the TF-IDF transform of a given document can be given \emph{in closed-form} as a function of the word multiplicities and the given coefficients. 
This allows a simple analysis, at least in the non-normalized case.

\begin{proposition}[Robustness of non-normalized TF-IDF]
\label{prop:robustness-non-normalized-tfidf}
Let $x \in \docs{T}$, $\Indset \subseteq [T]$, and $\xtilde \in \ballHamming{x}{\Indset}$.
Let $\maxmult$ be the maximal word multiplicity in $x$ and $\maxidf$ be the maximal inverse document frequency over $\Dico$. 
Then 
\[
\norm{\tfidf{x} - \tfidf{\xtilde}} \leq 4\maxmult \maxidf \frac{\card{\Indset}}{T}
\, .
\]
\end{proposition}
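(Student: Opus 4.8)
The plan is to exploit the fact, recalled just above the statement, that the (non-normalized) TF-IDF admits the closed form $\tfidf{x}_j = \freq_j\idf_j = \frac{\mult_j(x)}{T}\idf_j$ for every $j\in\Dico$. Consequently $\tfidf{x}-\tfidf{\xtilde}$ depends on the two documents only through their word-multiplicity vectors: writing $\delta_j \defeq \mult_j(x) - \mult_j(\xtilde)$, the $j$-th coordinate of $\tfidf{x}-\tfidf{\xtilde}$ equals $\frac{\idf_j}{T}\delta_j$, hence
\[
\norm{\tfidf{x}-\tfidf{\xtilde}}^2 = \frac{1}{T^2}\sum_{j\in\Dico}\idf_j^2\,\delta_j^2 \;\le\; \frac{\maxidf^2}{T^2}\sum_{j\in\Dico}\delta_j^2 .
\]
Everything then reduces to bounding $\sum_j \delta_j^2$ (equivalently, the Euclidean norm of the multiplicity difference) in terms of $\card{\Indset}$ and $\maxmult$.

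The cleanest way to do this — and the one I would take — is to decompose the perturbation position by position rather than word by word. Since $\xtilde\in\ballHamming{x}{\Indset}$, the documents agree off $\Indset$, so
\[
\tfidf{\xtilde}-\tfidf{x} = \frac{1}{T}\sum_{t\in\Indset}\bigl(\idf_{\xtilde_t}\,\onehot{\xtilde_t} - \idf_{x_t}\,\onehot{x_t}\bigr),
\]
where $\onehot{k}$ is the $k$-th canonical basis vector of $\Reals^D$. For $t$ with $x_t\neq\xtilde_t$ the two basis vectors are orthogonal, so the $t$-th summand has norm $\sqrt{\idf_{x_t}^2+\idf_{\xtilde_t}^2}\le\sqrt{2}\,\maxidf$ (and it vanishes when $x_t=\xtilde_t$). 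The triangle inequality then gives $\norm{\tfidf{x}-\tfidf{\xtilde}}\le \sqrt{2}\,\maxidf\,\card{\Indset}/T$, which is a fortiori at most $4\maxmult\maxidf\,\card{\Indset}/T$ since $\maxmult\ge 1$. An equivalent route, if one prefers to argue on the coordinates: removing/inserting one token per index of $\Indset$ yields $\sum_j(\delta_j)_+\le\card{\Indset}$ and $\sum_j(\delta_j)_-\le\card{\Indset}$, hence $\sum_j\abs{\delta_j}\le 2\card{\Indset}$; then either use $\norm{\cdot}_2\le\norm{\cdot}_1$, or combine this $\ell^1$ bound with a crude $\ell^\infty$ bound involving $\maxmult$ through $\sum_j\delta_j^2\le(\max_j\abs{\delta_j})\sum_j\abs{\delta_j}$, and feed the result into the first display.

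There is no real obstacle here: once the closed form is written down, the statement is essentially a counting exercise. The only thing needing a line of care is the bookkeeping of which multiplicities move and by how much when the tokens indexed by $\Indset$ are replaced (in particular noting that an inserted token may land in a coordinate whose multiplicity was small, so the naive bound $\abs{\delta_j}\le\maxmult$ requires a small-perturbation caveat — which is why I would favour the position-wise decomposition, which needs no such caveat). It is worth remarking in passing that the argument in fact yields the sharper $\norm{\tfidf{x}-\tfidf{\xtilde}}\le\sqrt{2}\,\maxidf\,\distHamming(x,\xtilde)/T$; the looser form with the extra factor $4\maxmult$ stated in the proposition is convenient for comparison with the normalized case treated next.
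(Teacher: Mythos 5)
Your proof is correct, and it takes a genuinely different --- and in fact sharper --- route than the paper's. The paper argues word by word: it writes $\norm{\tfidf{x}-\tfidf{\xtilde}}^2=\frac{1}{T^2}\sum_{j}(\mult_j-\mtilde_j)^2\idf_j^2$, bounds the $\ell^2$ norm of the multiplicity differences by their $\ell^1$ norm, and then controls that $\ell^1$ norm by $4\maxmult\card{\Indset}$ using three facts: at most $2\card{\Indset}$ multiplicities change, $\sum_{\mult_j\neq\mtilde_j}\mult_j=\sum_{\mult_j\neq\mtilde_j}\mtilde_j$ because the two documents have the same length, and each $\mult_j\leq\maxmult$. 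This is exactly where the factor $\maxmult$ enters, and the equality of the two sums is the paper's way around the caveat you correctly flag (an inserted token can drive $\mtilde_j$, hence $\abs{\mult_j-\mtilde_j}$, above $\maxmult$). Your position-wise decomposition $\tfidf{\xtilde}-\tfidf{x}=\frac1T\sum_{t\in\Indset}\bigl(\idf_{\xtilde_t}\onehot{\xtilde_t}-\idf_{x_t}\onehot{x_t}\bigr)$ sidesteps the multiplicity bookkeeping altogether: each replaced position contributes at most $\sqrt{2}\,\maxidf/T$ in norm and the triangle inequality finishes the argument, yielding the constant $\sqrt{2}\,\maxidf$ in place of $4\maxmult\maxidf$ (the stated bound then follows a fortiori since $\maxmult\geq 1$). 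What your approach buys is a cleaner proof and a strictly better constant, with no dependence on $\maxmult$; what the paper's approach buys is that the intermediate $\ell^1$ estimate $\sum_j\abs{\mult_j-\mtilde_j}\leq 4\maxmult\card{\Indset}$ is reused verbatim inside the Cauchy--Schwarz step of the cosine-similarity lemma for the normalized case, which is presumably why the looser constant is kept in the statement.
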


In other words, non-normalized \textbf{TF-IDF is Lipschitz continuous for the Hamming distance}, with Lipschitz constant inversely proportional to the common length of the documents. 
In reality, the dependency in $T$ is slightly more complicated since nothing prevents $\maxmult$ from being as large as $T$ in pathological cases (when all the words of the document are identical). 
In any case, we uncover a satisfying fact about TF-IDF: \textbf{small changes in long documents do not matter much.} 
Taking into account the normalization, we have a similar result:

\begin{proposition}[Robustness of normalized TF-IDF]
\label{prop:robustness-normalized-tfidf}
Let $x \in \docs{T}$.
Let $\minidf$ be the minimal inverse document frequency associated to the words of $x$. 
Let $\Indset \subseteq [T]$ such that $\card{\Indset}\leq \norm{\tfidf{x}}/(4\maxmult\maxidf)$ and $\xtilde \in \ballHamming{x}{\Indset}$. 
Then 
\[
\norm{\normtfidf{x} - \normtfidf{\xtilde}} \leq \frac{4\maxmult^{1/2}\maxidf^{1/2}D^{1/4}}{\minidf^{1/2}} \sqrt{\frac{\card{\Indset}}{T}}
\, .
\]
\end{proposition}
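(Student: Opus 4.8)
The plan is to reduce the normalized statement to the non-normalized one (Proposition~\ref{prop:robustness-non-normalized-tfidf}) via a quantitative version of the elementary fact that the map $v \mapsto v/\norm{v}$ is locally Lipschitz away from the origin, with local constant controlled by $1/\norm{v}$. Concretely, for nonzero vectors $a,b$ one has the bound
\[
\norm*{\frac{a}{\norm{a}} - \frac{b}{\norm{b}}} \leq \frac{2\norm{a-b}}{\norm{a}\wedge\norm{b}}
\, ,
\]
which follows by writing $a/\norm{a} - b/\norm{b} = (a-b)/\norm{a} + b(\norm{b}-\norm{a})/(\norm{a}\norm{b})$ and applying the reverse triangle inequality. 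Taking $a = \tfidf{x}$ and $b = \tfidf{\xtilde}$, and using Proposition~\ref{prop:robustness-non-normalized-tfidf} to bound $\norm{a-b} \leq 4\maxmult\maxidf\card{\Indset}/T$, the problem becomes controlling $\norm{\tfidf{x}}\wedge\norm{\tfidf{\xtilde}}$ from below.

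First I would establish a lower bound on $\norm{\tfidf{x}}$. Since at least one word of $x$ is present with multiplicity $\geq 1$ and the corresponding inverse document frequency is at least $\minidf$, we get $\norm{\tfidf{x}} \geq \minidf/T$ — actually one should be a little more careful, but some bound of the form $\norm{\tfidf{x}} \geq c\,\minidf/T$ is available directly from the definition $\tfidf{x}_j = \mult_j\idf_j/T$. Next, for $\xtilde \in \ballHamming{x}{\Indset}$, the triangle inequality combined with Proposition~\ref{prop:robustness-non-normalized-tfidf} gives
\[
\norm{\tfidf{\xtilde}} \geq \norm{\tfidf{x}} - 4\maxmult\maxidf\frac{\card{\Indset}}{T}
\, ,
\]
and here the hypothesis $\card{\Indset} \leq \norm{\tfidf{x}}/(4\maxmult\maxidf)$ is exactly what guarantees the right-hand side is at least $\norm{\tfidf{x}}/2 > 0$; in particular $\xtilde$ has a nonzero TF-IDF and $\norm{\tfidf{x}}\wedge\norm{\tfidf{\xtilde}} \geq \norm{\tfidf{x}}/2$.

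Combining the pieces,
\[
\norm{\normtfidf{x} - \normtfidf{\xtilde}} \leq \frac{2 \cdot 4\maxmult\maxidf\card{\Indset}/T}{\norm{\tfidf{x}}/2} = \frac{16\maxmult\maxidf}{\norm{\tfidf{x}}}\cdot\frac{\card{\Indset}}{T}
\, .
\]
This is Lipschitz-type in $\card{\Indset}/T$, but the target bound is H\"older (a square root) with an extra $\maxmult^{-1/2}\maxidf^{-1/2}\minidf^{-1/2}D^{1/4}$ factor, so the remaining step is to trade one power of $\card{\Indset}/T$ for the square root. The idea is that $\norm{\normtfidf{x} - \normtfidf{\xtilde}} \leq 2$ trivially (both are unit vectors), so one can interpolate: the Lipschitz bound $B := 16\maxmult\maxidf\card{\Indset}/(T\norm{\tfidf{x}})$ satisfies $\norm{\normtfidf{x}-\normtfidf{\xtilde}} \leq \min(B,2) \leq \sqrt{2B}$, and plugging in $\norm{\tfidf{x}} \geq c\minidf/T$ from the first step converts $\sqrt{2B}$ into $C\,\maxmult^{1/2}\maxidf^{1/2}\minidf^{-1/2}\sqrt{\card{\Indset}/T}$ — the factor $D^{1/4}$ in the statement presumably comes from a slightly different (dimension-dependent) lower bound on $\norm{\tfidf{x}}$ or from bounding a norm ratio, and I would check which normalization the authors use. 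The main obstacle is getting the lower bound on $\norm{\tfidf{x}}$ sharp enough to produce precisely the claimed constant, and tracking where the $D^{1/4}$ enters; the rest is the standard normalization-Lipschitz estimate plus the trivial-bound interpolation trick.
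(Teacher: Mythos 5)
Your route is genuinely different from the paper's and is essentially viable. The paper works with the squared distance via the identity $\norm{\normtfidf{x}-\normtfidf{\xtilde}}^2 = 2 - 2\,\tfidf{x}^\top\tfidf{\xtilde}/(\norm{\tfidf{x}}\,\norm{\tfidf{\xtilde}})$, lower-bounds the cosine similarity through a dedicated lemma (Cauchy--Schwarz on the cross term $\sum_j \mult_j(\mtilde_j-\mult_j)\idf_j^2$, reusing the non-normalized bound), and only then takes a square root, so the H\"older exponent $1/2$ appears automatically from the squaring. You instead apply the generic estimate $\norm{a/\norm{a}-b/\norm{b}}\leq 2\norm{a-b}/(\norm{a}\wedge\norm{b})$ together with Proposition~\ref{prop:robustness-non-normalized-tfidf}, which first produces a \emph{Lipschitz-type} bound $B=16\maxmult\maxidf\card{\Indset}/(T\norm{\tfidf{x}})$, and then convert it to H\"older form by interpolating against the trivial bound $2$ via $\min(B,2)\leq\sqrt{2B}$. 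Both are sound, and your use of the hypothesis on $\card{\Indset}$ (to keep $\norm{\tfidf{\xtilde}}\geq\norm{\tfidf{x}}/2$) mirrors the paper's use of it (to keep the cosine lower bound meaningful); your intermediate Lipschitz bound is even sharper than the paper's squared bound in the regime $B\leq 1$. The price of the interpolation is a constant $\sqrt{32}=4\sqrt{2}$ where the paper gets $\sqrt{16}=4$, so as written you do not quite recover the stated constant.

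The one genuine gap is the lower bound on $\norm{\tfidf{x}}$, and it is not a detail: that step is the sole source of both the $D^{1/4}$ and the $1/\sqrt{T}$ decay. The bound you propose, $\norm{\tfidf{x}}\geq c\,\minidf/T$, is true but far too weak --- substituted into $\sqrt{2B}$ it yields a bound with no decay in $T$ at all, which is not the claimed result. What is needed (and what the paper isolates as Lemma~\ref{lemma:lower-bound-norm-non-normalized-tfidf}) is the $\ell^1$--$\ell^2$ comparison over the $D(x)$ nonzero coordinates: since $\sum_j\mult_j=T$, one has $\norm{\tfidf{x}}\geq \frac{1}{\sqrt{D(x)}}\sum_j\frac{\mult_j\idf_j}{T}\geq\frac{\minidf}{\sqrt{D(x)}}$, a lower bound that is \emph{independent of $T$} (equivalently $T\minidf/\sqrt{D(x)}$ in the multiplicity normalization used in the appendix). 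With that substitution, $\sqrt{2B}\leq 4\sqrt{2}\,\maxmult^{1/2}\maxidf^{1/2}D^{1/4}\minidf^{-1/2}\sqrt{\card{\Indset}/T}$ and your argument closes, up to the extra factor $\sqrt{2}$ in the constant.
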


In plain words, \textbf{normalized TF-IDF is $1/2$-H\"older with respect to the Hamming distance}. 
Again, the constant appearing decreases with the length of the base document. 
A close inspection of the proof also reveals that the $D$ is actually equal to $D(x)$, the size of the local dictionary.


\begin{figure}[t]
\centering
\includegraphics[scale=0.3]{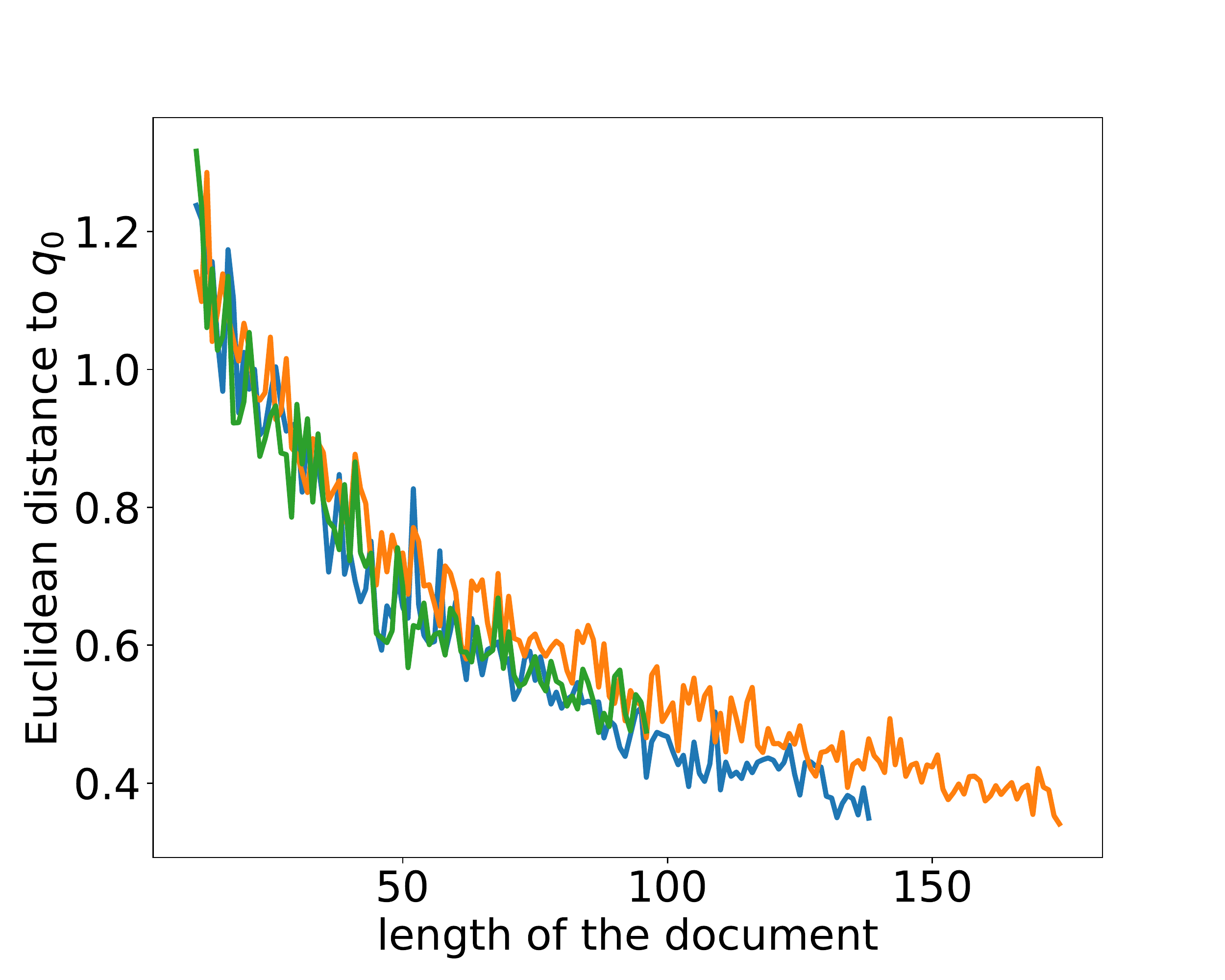}
\vspace{-0.2in}
\caption{\label{fig:influence-length-document-tfidf}Normalized TF-IDF, influence of $T$. Documents of increasing length $t$, $5$ random replacements. Proposition~\ref{prop:robustness-normalized-tfidf} gives a bound in $\mathcal{O}(1/\sqrt{T})$.} 
\end{figure}


\subsection{Experimental validation}

In order to check the accuracy of Proposition~\ref{prop:robustness-normalized-tfidf}, we ran some numerical experiments.
We considered movie reviews from the IMDB dataset as documents and the TF-IDF implementation from \texttt{scikit-learn} with $L^2$ normalization.

\paragraph{Influence of the document length.}
In a first set of experiments, we investigated the behavior of $\norm{\normtfidf{x}-\normtfidf{\xtilde}}$ with respect to the length $T$ of $x$. 
To this extent, for several documents, we created a sequence of growing documents by considering the first $t$ words of the documents, with $t$ ranging from $5$ to $T$. 
For each value of $t$, we replaced $5$ words in the intermediary document and repeated this experiment several time.
The words to replace were chosen uniformly at random in the document, and the replacements uniformly at random in $\Dico$, and we estimated the supremum of $\norm{\doctovec{x}-\doctovec{\xtilde}}$ by taking the maximum over these repetitions. 
Proposition~\ref{prop:robustness-normalized-tfidf} predicts that, since $\card{\Indset}$ is kept constant here, the supremum of $\norm{\doctovec{x}-\doctovec{\xtilde}}$ over all possible replacements should be upper bounded by $1/\sqrt{T}$ (up to numerical constants). 
This appears to be empirically true (see Figure~\ref{fig:influence-length-document-tfidf}). 

\paragraph{Influence of the number of removals.}
In a second set of experiments, we looked at the dependency of $\norm{\normtfidf{x}-\normtfidf{\xtilde}}$ with respect to $\card{\Indset}$. 
This time keeping $x$ fixed, we gradually increased the number of replaced words from $1$ to $T$. 
Since $T$ is fixed, Proposition~\ref{prop:robustness-normalized-tfidf} predicts that the supremum of $\norm{\normtfidf{x}-\normtfidf{\xtilde}}$ over all possible replacements should behave at most as $\sqrt{\card{\Indset}}$. 
This also appears to be empirically true, see Figure~\ref{fig:influence-number-replacements-tfidf}.


\begin{figure}[t]
\centering
\includegraphics[scale=0.3]{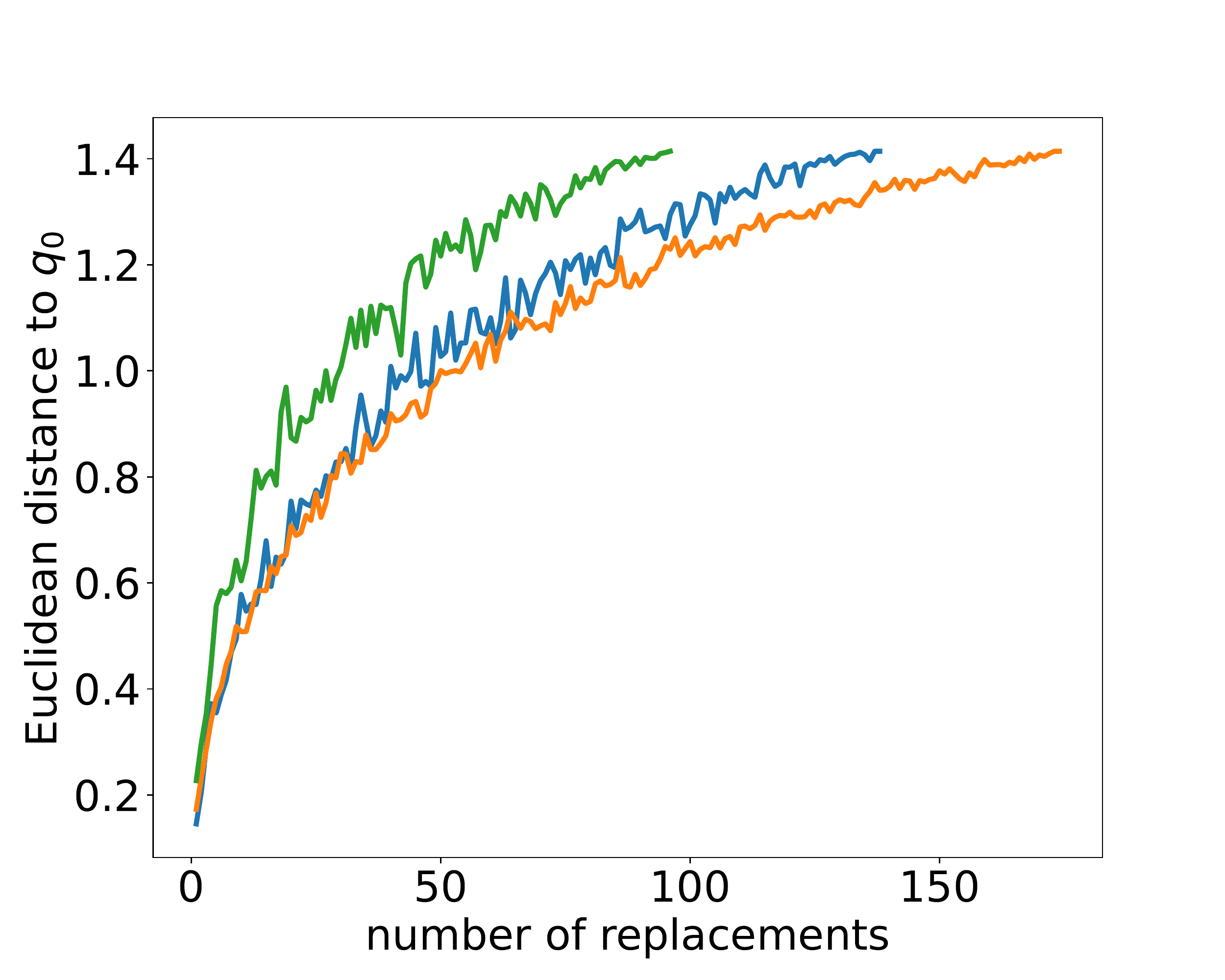}
\vspace{-0.2in}
\caption{\label{fig:influence-number-replacements-tfidf}Normalized TF-IDF, influence of $\card{S}$. For a given document, $s$ words are replaced at random with $s$ ranging from $1$ to $T$. Proposition~\ref{prop:robustness-normalized-tfidf} gives a bound in $\bigo{\sqrt{\card{\Indset}}}$.} 
\end{figure}


\section{Paragraph Vector (\texttt{doc2vec})}
\label{sec:doc2vec}

We now turn to the most challenging part of our analysis, \texttt{doc2vec}. 
On a high level, a token embedding matrix is learned jointly with a document embedding matrix on a corpus, aiming to predict correctly a missing token in a given context. 
The key difference with other vectorizers is that, at inference time, \textbf{another minimization problem is solved} by the model. 
Different documents yield different optimization problems, and therefore it is quite challenging to see where the resulting minimizer is located with respect to the original embedding. 


\subsection{A primer on \texttt{doc2vec}}
\label{sec:doc2vec-description}

The key idea underlying paragraph vector is neural probabilistic language modeling \citep{bengio_et_al_2000}: \textbf{predict words of a document} knowing (i) the \textbf{context} of the missing word in the document, and (ii) some \textbf{global information} about the document, \textbf{encoded as a vector} $q\in\Reals^d$.
Thus the key concept is the probability of observing word $j$ at position $t$ given some context $\context{t}$ and vector $q$. 
This is written informally as $\condproba{j}{\context{t},q}$, and we describe its exact formulation in the next paragraphs. 
Two models are proposed in \citet{le_mikolov_2014}: \emph{distributed memory} (PVDM) model, similar to the \emph{continuous bag of words} model of \citet{mikolov_et_al_2013}, and \emph{distributed bag of words} (PVDBOW) model, similar to the \emph{skip gram} model. 
We first focus on the PVDM model, PVDBOW being a simplified version thereof, referring to Figure~\ref{fig:doc2vec} for a visual help. 


\begin{figure*}[!th]
    \centering
    \includegraphics[width=.8\textwidth]{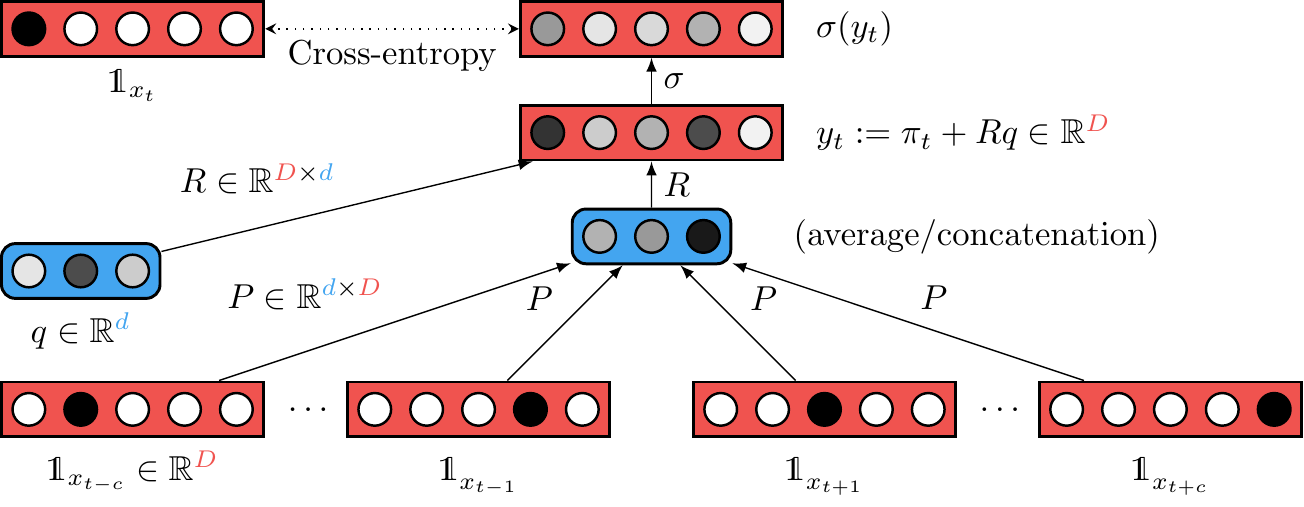}
    \vspace{-0.1in}
    \caption{\label{fig:doc2vec}Overview of the \texttt{doc2vec} vectorizer, PVDM model. For a given document, for each token position $t$, the model considers the context $\context{t}$ of $x_t$. The one-hot representation of the tokens in $\context{t}$ (which are of size $D$) are either average or concatenated, then projected to the embedding layer ($\Reals^d$, in blue). At this stage, the document embedding $q\in\Reals^d$ is added to this local representation, which is then lifted back to $y_t\in \Reals^D$. Taking a softmax transform of $y_t$ yields a discrete distribution on $\Dico$, which is compared to the truth ($x_t$) using cross-entropy (top part, dotted line). During training, PV minimizes objective \eqref{eq:PV-training} to find satisfying token embeddings $\Proj$, document embedding $q$, and lifting $\Red$. At inference time, $\Proj$ and $\Red$ are frozen and only $q$ is allowed to vary. 
    }
\end{figure*}


\paragraph{Local information.}
For a document $x$ with length $T$, for any  $\winsize < t < T-\winsize$, we define the \emph{neighborhood} of $t$ as 
\begin{equation}
\label{eq:def-neighborhood}
\nhood{t} \defeq (t-\winsize,\ldots,t-1,t+1,\ldots,t+\winsize)
\, .
\end{equation}
Here, $\winsize$ is an hyperparameter often called \emph{context size} (or window size), quantifying the breath of the context considered by the model. 
To this neighborhood corresponds the \emph{context} 
\begin{equation}
\label{eq:def-context}
\context{t} \defeq (x_{t-\winsize},\ldots,x_{t-1},x_{t+1},\ldots,x_{t+\winsize})
\, .
\end{equation}
Intuitively, $\context{t}$ corresponds to the tokens surrounding $x_t$ in the document $x$. 
The tokens contained in $\context{t}$ are then mapped to their one-hot representations, which are aggregated together. 
There are two natural ways to do this, either computing the \emph{mean} (PVDMmean) or the \emph{concatenation} of these vectors (PVDMconcat). 
Thus, at this stage, the local information at index $t$ is summarized as a vector~$h_t$, with 
\[
h_t\defeq  \frac{1}{2\winsize} \sum_{s\in \nhood{t}} \indic{x_s} \in\Reals^D
\]
if average is used, and 
\[
h_t \defeq  [\indic{x_{t-\winsize}};\ldots; \indic{x_{t-1}} ;\indic{x_{t+1}};\ldots;\indic{x_{t+\winsize}}] \in\Reals^{2\winsize D}
\]
if concatenation is used (see bottom layer of Figure~\ref{fig:doc2vec}). 

\paragraph{Projecting and lifting.}
This local information is then projected into $\Reals^d$, with $d\ll D$, the embedding space. 
At this stage, the document vector $q\in\Reals^d$ is added to the local representation. 
This intermediary representation is lifted back to $\Reals^D$. 
PVDM relies on two matrices $\Proj$ and $\Red$ such that each context is mapped to
\[
y_t \defeq \Red (\Proj h_t + q) = \pi_t + \Red q \in\Reals^D
\, ,
\]
where $\pi_t \defeq \Red \Proj h_t \in\Reals^D$. 
Here, $\Proj$ has size $d\times D$ for PVDMmean, and $d\times 2\winsize D$ for PVDMconcat, while $\Red$ has size $D\times d$. 
When tokens are words, the columns of $\Proj$ are called \emph{word vectors}, since they correspond to $d$ dimensional embeddings for individual words. 
We refer to the intermediate layers of Figure~\ref{fig:doc2vec} for a visual help. 

\paragraph{Prediction.}
Finally, the prediction for $x_t$ is encoded as the \emph{softmax} of $y_t$, where the softmax $\sigma : \Reals^D \to \Reals^D$ is defined for $u \in \Reals^D$ as
\begin{align} \label{eq:def-softmax}
\sigma(u) = \left( \frac{\exps{u_j}}{\sum_{k=1}^D \exps{u_k}}\right)_{1\leq j\leq D}
\, .
\end{align}
In particular, all components of $\sigma(y_t)$ lie between $0$ and $1$ and sum to one, and reading coordinate $j$ of $\sigma(y_t)$ can be interpreted as reading the predicted probability of token~$j$. 
To summarize, $\sigma(y_t)$ encodes a discrete distribution over $\Dico$ that depends on the context of $x_t$ and the document vector~$q$ (topmost layer of Figure~\ref{fig:doc2vec}).

\paragraph{Training.}
Let us call $x^{(1)},\ldots,x^{(N)}$ the documents in our training set, with lengths $T_1,\ldots,T_N$. 
To each of these documents correspond an embedding $q^{(i)}\in\Reals^d$, which can be seen as the columns of a matrix $\Token\in \Reals^{d\times N}$, each giving rise to $y_{t}^{(i)}$. 
The columns of $\Token$ are often referred to as \emph{document vectors}. 
The key idea here is to learn $\Proj,\Token,$ and $\Red$ so that the predicted tokens at position $t$ are accurate for all documents. 
Seeing $\sigma(y_t^{(i)})$ as a discrete probability distribution on $\Dico$, a natural way to compare it to the groundtruth ($x_t^{(i)}$) is to compute the \emph{cross-entropy} between the distribution putting mass one at $x_t^{(i)}$ and $\sigma(y_t^{(i)})$, that is, 
\[
\ell_t^{(i)} \defeq -\log \sigma(y_t^{(i)})_{x_t^{(i)}} \defeq \psi_{x_t^{(i)}}(y_t^{(i)})
\, ,
\]
where we defined $\psi\defeq -\log \sigma$ coordinate-wise. 
The optimization problem solved by PV is written 
\begin{equation}
\label{eq:PV-training}
\Minimize_{\Proj,\Token,\Red} \sum_{i=1}^N \frac{1}{T_i} \sum_{t\in x^{(i)}} \psi_{x_{t}^{(t)}}(y_{t}^{(i)})
\, ,
\end{equation}
 where $t\in x^{(i)}$ means $t$ ranging from $\winsize+1$ to $T_i-\winsize-1$.  
Problem~\eqref{eq:PV-training} is solved by stochastic gradient descent, or ADAM~\citep{kingma_ba_2015}.

\paragraph{Inference.}
Let us describe the embedding of a new document $x$, assuming that the model was trained on a corpus. 
The way inference works for the PV model is \textbf{to keep $\Proj$ and $\Red$ fixed}, and to optimize solely in $q\in\Reals^d$
\begin{equation}
\label{eq:PV-inference}
\Minimize_{q\in\Reals^d} \frac{1}{T}\sum_{t\in x} \psi_{x_t}(y_t)
\, .
\end{equation}
An important observation is that $q\mapsto \psi_{x_t}(\pi_t+\Red q)$ is a convex function, although not strictly (see Appendix). 
Therefore, a regularization term is often added to Eq.~\eqref{eq:PV-inference}, a point which we will clarify in the next section. 
Also noting that $q$ has only $d$ parameters, solving PV inference \eqref{eq:PV-inference} efficiently is not too challenging.

\paragraph{The case of PVDBOW.}
PVDBOW is another model falling under the PV umbrella. 
In a nutshell, following the idea of the distributed bag of word model, PVDBOW works the other way around and uses only the representation of the document to predict tokens. 
At position $t$, no local information is taken into account and we put $\pi_t=0$ in that case. 
The predicted token distribution for the document is encoded as before (as $\sigma(y_t)=\sigma(\Red q)$), and its quality  also measured as $\psi_{x_t}(y_t)$ for all tokens in the document, leading to the same optimization problems. 
To summarize, PVDBOW is a simplified, lightweight version of PVDM, simply obtained by taking $\pi_t=0$ in our framework. 
In particular, there is no matrix $\Proj$, which leads to fewer parameters, and thus easier training and inference, a fact which was pointed out by \citet{le_mikolov_2014}. 
Nevertheless, they recognize that PVDBOW still performs well as an embedding, and recommend considering as an embedding the concatenation of PVDM and PVDBOW. 

\paragraph{Hierarchical softmax and negative sampling.}
In practice, as advocated by \citet{le_mikolov_2014}, two additional expedients are used. 
First, the softmax is replaced by \emph{hierarchical softmax} \citep{morin_bengio_2005}. 
In a nutshell, each call of $\sigma$ has a computational cost linear in $D$, which can be as large as $10^5$ in practice. 
A solution is to replace the softmax by a tree-based approximation thereof, which computation is much faster. 
Second, following \citet{mikolov_et_al_2013}, it is common to incorporate tokens with a negative association to the token to predict when computing $\ell_t$, leading to faster training. 
These two possibilities are non-trivial modifications to the PV model and we do not consider them in our analysis. 


\subsection{Robustness result}
\label{sec:doc2vec-robustness}

Before stating our robustness result, let us explain why it is challenging and outline the proof technique. 
As detailed in the previous section, the embedding of a document $x$ of length $T$ is found by solving 
\begin{equation}
\label{eq:optim-problem}
q_0 = 
\Argmin_{q\in \Reals^d} \left\{ F(q)  + \frac{\alpha}{2} \norm{q}^2
\right\}
\, ,
\end{equation}
where $F(q)\defeq \frac{1}{T}\sum_{t\in x} \psi_{x_t}(\pi_t + \Red q)$. 
The regularization term  $\alpha\norm{q}^2/2$ with $\alpha >0$ ensures uniqueness of the solution. 
Indeed, the softmax is invariant by translation by a vector proportional to $\Indic$, and solutions to \eqref{eq:PV-inference} are not unique. 
As before, consider $\xtilde$, a modified version of $x$ where tokens with indices in $\Indset$ have been replaced by others.
The embedding $q_1$ of $\xtilde$ is found by solving
\begin{equation}
\label{eq:optim.problem.modified-appendix}
q_1 = \Argmin_{t\in x} \left\{ G(q) +\frac{\alpha}{2}\norm{q}^2 \right\}
\, ,
\end{equation}
where $G(q)\defeq \frac{1}{T}\sum_{t\in x} \psi_{\xtilde_t}(\pitilde_t + \Red q)$, and $\pitilde_t$ is defined analogously to $\pi_t$. 
The main challenge here is that \textbf{$q_0$ and $q_1$ are solutions of distinct optimization problems, which can be quite different if $\card{\Indset}$ is large}. 

\paragraph{From discrete to continuous.}
The solution we propose to connect between these two problems is to interpolate smoothly between them. 
There are many ways to do this, and we settle for the simplest: linear interpolation. 
More precisely, we define for all $\temps \in [0,1]$ and $q\in \Reals^d$ by 
\begin{equation}
\label{eq:def-interpolation-scheme}
\Psilinear(\temps,q) \defeq (1-\temps) F(q) + \temps G(q)
\, .
\end{equation}
Subsequently, for all $\temps \in [0,1]$, we can solve the following regularized optimization problem:
\begin{equation}
\label{eq:ODE-papereq:optim.problem.general}
q(\temps) \defeq \Argmin_{q\in\Reals^d} \left\{ \Psilinear(\temps,q) + \frac{\alpha}{2}\norm{q}^2 \right\}
\, ,
\end{equation}
giving rise to a continuous trajectory in the embedding space (see Figure~\ref{fig:interpolation} for an illustration). 
One can think of $q(\temps)$ as the embedding of a fictitious document traveling halfway between $x$ and $\xtilde$ as $\temps$ ranges from $0$ to $1$. 


\begin{figure}
    \centering
    \includegraphics[scale=0.3]{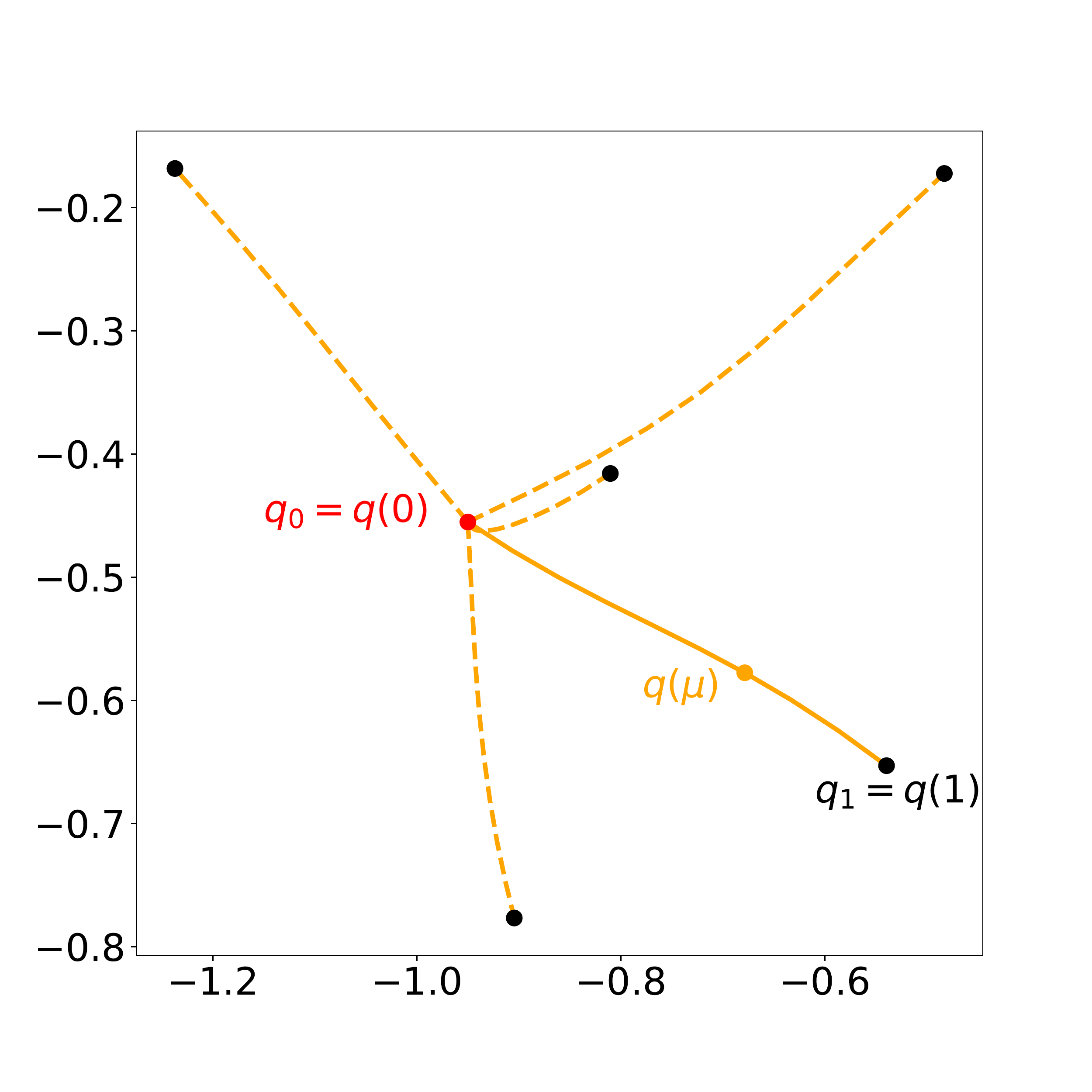}
    \vspace{-0.2in}
    \caption{\label{fig:interpolation}Continuously interpolating between $q_0$, the embedding of $x$ (in red), and $q_1$, the embedding of $\xtilde$ (in black). Visualization in a $2$-dimensional slice of $\Reals^d$. To each $\temps\in [0,1]$ corresponds a solution to \eqref{eq:ODE-papereq:optim.problem.general}, appearing here as a point of the trajectory between $q_0$ and $q_1$ (solid orange line). 
    Dynamics of this trajectory are described by Eq.~\eqref{eq:ODE-paper}. Different document perturbations lead to different embeddings and associated trajectories (dotted lines).}
\end{figure}


\paragraph{Dynamics of interpolation.}
This approach is powerful, since it allows us to transform a problem which is discrete in nature (elements of a sum are modified) to a continuous one (time parameter varies). 
In particular, the dynamics of $\temps\mapsto q(\temps)$ \textbf{are described by an ordinary differential equation (ODE)}. 
Indeed, for all $\temps \in [0,1]$, since $q\mapsto \Psilinear(\temps,q) +\frac{\alpha}{2}\norm{q}^2 $ is a strongly convex function, $q(\temps)$ is the (unique) critical point of
$q\to \nabla\Psilinear(\temps,q) + \alpha q$, where $\nabla$ denotes derivative with respect to the \emph{space} coordinate ($q$). 
That is, for all $\temps \in[0,1]$,
\begin{equation*}
\nabla \Psilinear(\temps,q(\temps) ) + \alpha q(\temps) = 0
\, .
\end{equation*}
Differentiating, we get that for all $\temps \in [0,1]$, 
\begin{equation}
\label{eq:ODE-paper}
\left(\nabla^2 \Psilinear(\temps,q(\temps) ) + \alpha \Identity \right)
q'(\temps) + 
\partial_\temps \nabla \Psilinear(\temps,q(\temps)) = 0
\, ,
\end{equation}
where $g'$ denotes derivative with respect to the \emph{time} coordinate ($\temps$) and $\Identity$ the identity matrix. 
Let us set 
\begin{equation}
\label{eq:def-philinear}
\Philinear(\temps,q) \defeq - \left( \nabla^2 \Psilinear(\temps,q) + \alpha \Identity\right)^{-1} \partial_\temps \nabla \Psilinear(\temps,q)
\, .
\end{equation}
Then, Eq.~\eqref{eq:ODE-paper} can be rewritten as $q'(\temps)=\Philinear(q(\temps),\temps)$. 

\paragraph{Spectrum of the Hessian of the log-softmax. }
Looking back at the ODE problem, it appears that one needs to understand precisely the behavior of $\Philinear$. 
Intuitively, an ill-behaved function could lead to the explosion of the solution of the ODE, preventing the existence of reasonable bounds on $\norm{q(\temps)-q(0)}$ for large $\mu$. 
This understanding relies on the control of the smallest positive eigenvalue of $\nabla^2\Psilinear$, $\lambda_1(\temps,q)$.   
Coming back to the definition of $\Psilinear$ (Eq.~\eqref{eq:def-interpolation-scheme}), $F$, and $G$, we see that $\lambda_1$ closely related to $\lambdamin$, the smallest positive eigenvalue of the Hessian of the log-softmax, for which we have precise results (Lemma~\ref{lemma:smallest-eigenvalue-appendix} and Theorem~\ref{theorem:min_softmax-appendix}). 

\paragraph{Gr\"onwall-type result.}
Once that a precise control is achieved on $\Philinear$, one may have hoped to use standard Gr\"onwall type inequalities such as \citet{pachpatteJournalInequalitiesPure2004} to obtain quantitative bounds on $\norm{q(1)-q(0)}$. 
However, in our setting, the growth of $\Philinear$ prevents us from getting explicit bounds and we had to prove a new result (Theorem~\ref{lemma:CS-appendix}) which is actually true in a more general setting than that of \texttt{doc2vec}. 
Specifying this result, we get:


\begin{figure*}[ht]
    \centering
    \includegraphics[scale=0.22]{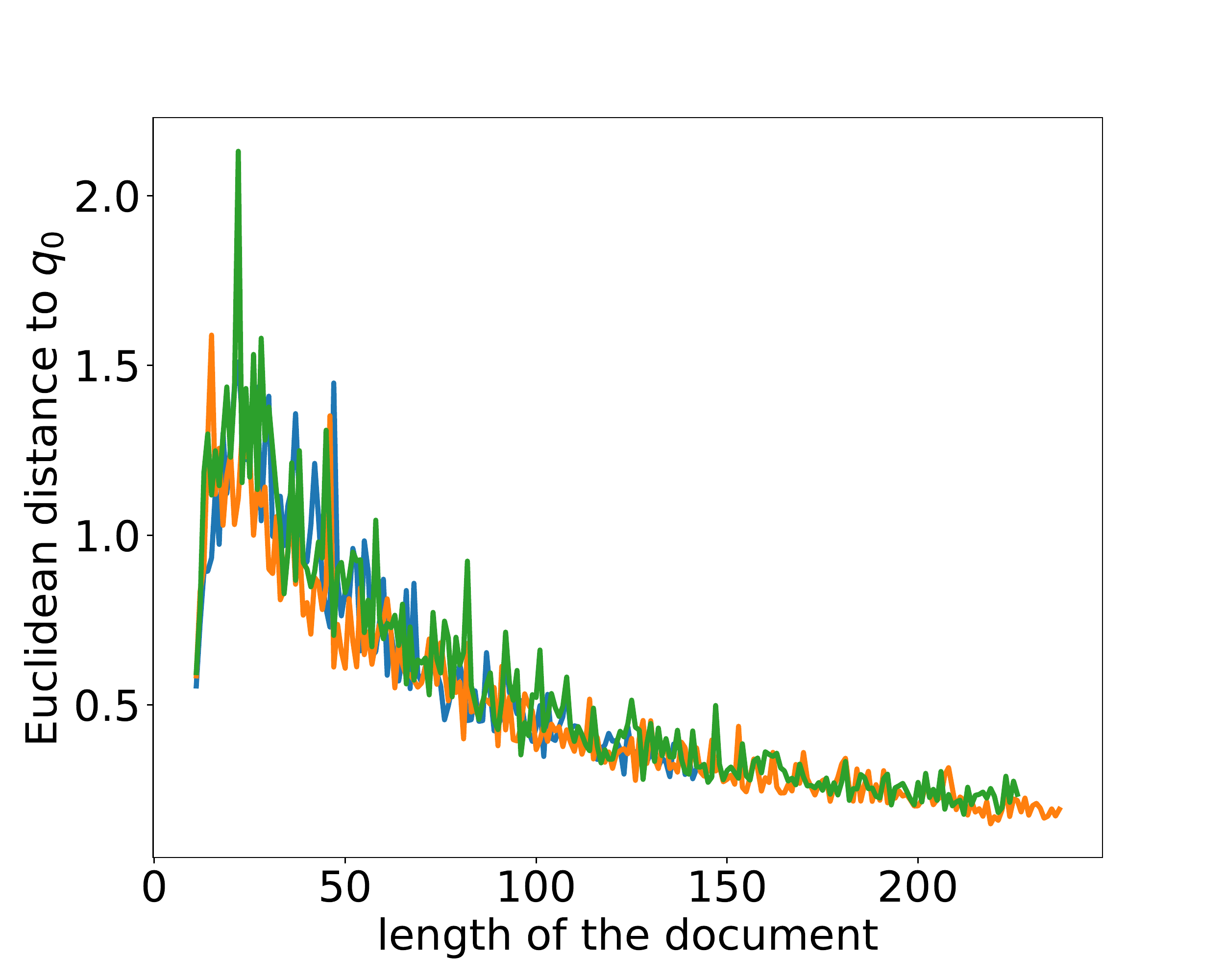}
    \hfill
    \includegraphics[scale=0.22]{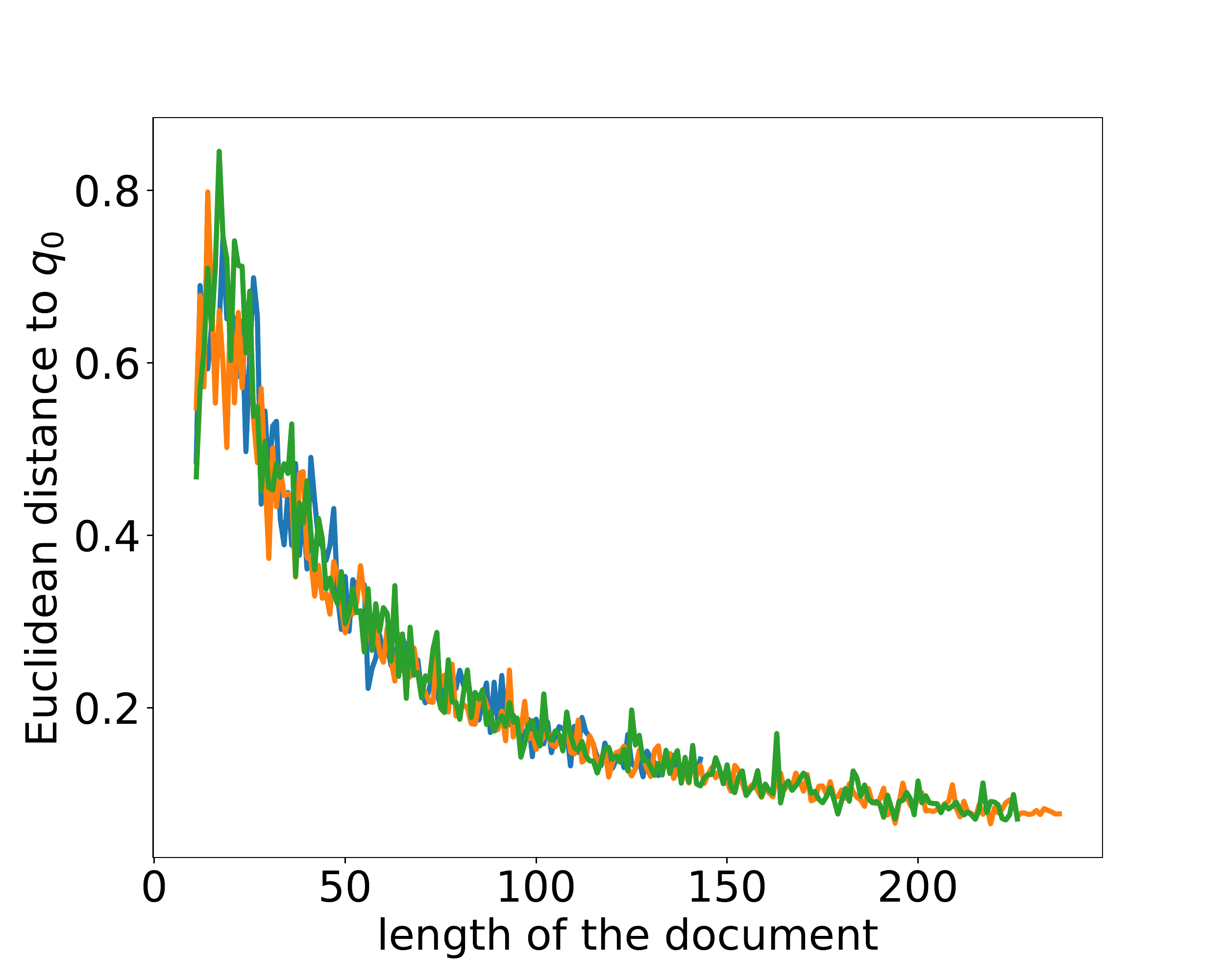}
 \hfill 
        \includegraphics[scale=0.22]{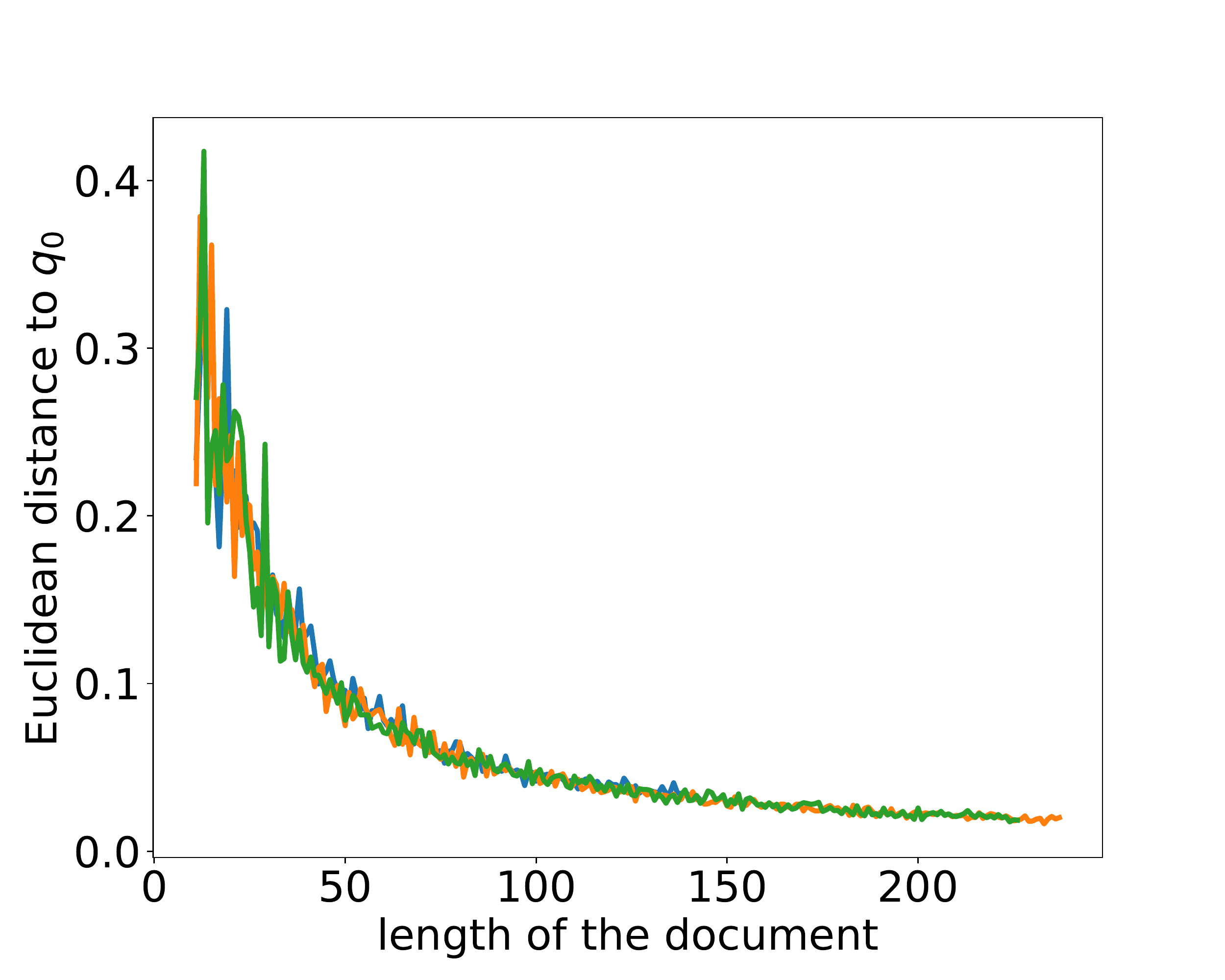}
\vspace{-0.2in}
    \caption{\label{fig:influence-length-document-perso}Influence of the length of the document on the robustness of \texttt{doc2vec}. Five random replacements, from left to right: PVDMmean, PVDMconcat, and PVDBOW.}
\end{figure*}

\begin{figure*}[!ht]
    \centering
\includegraphics[scale=0.22]{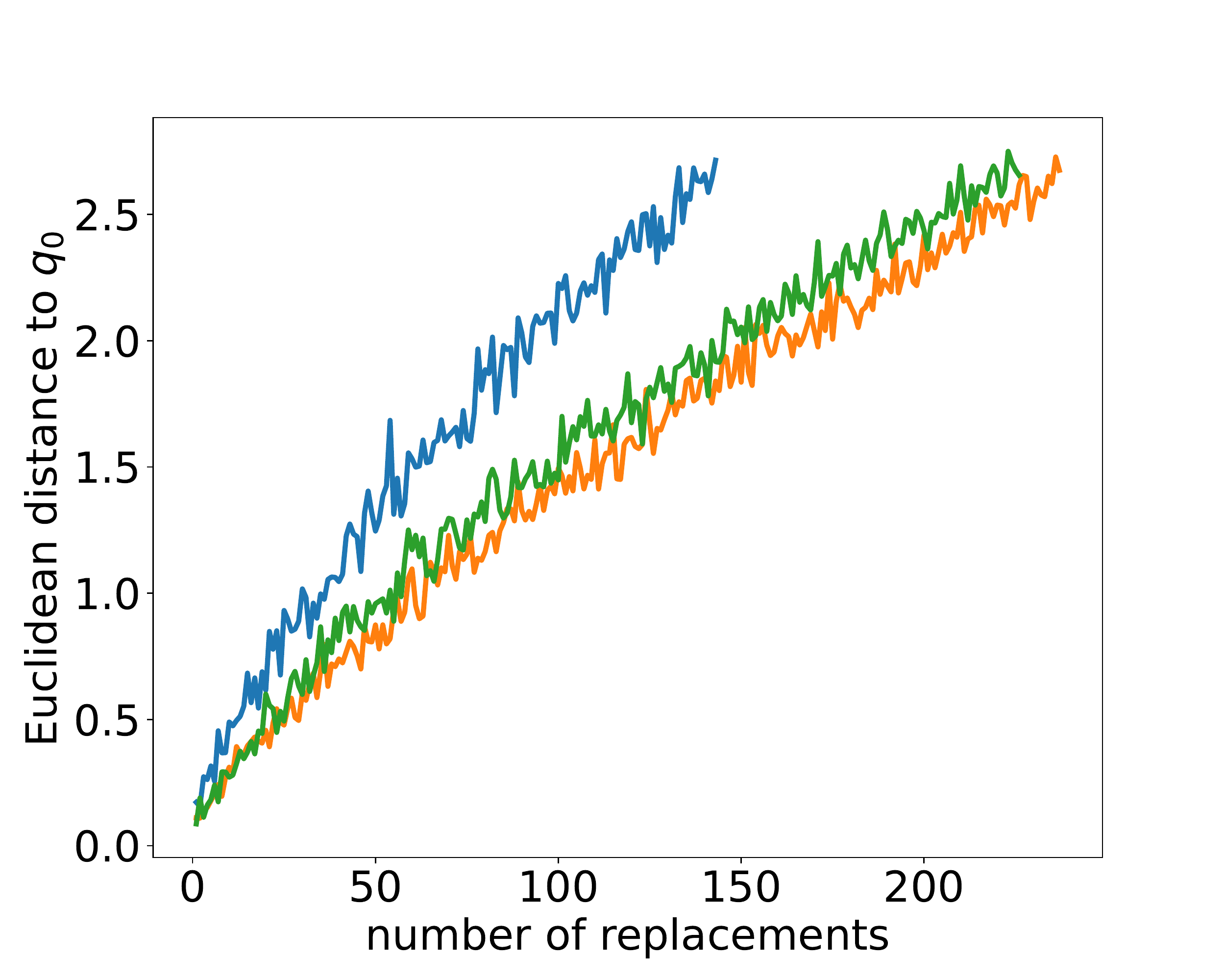}
\hfill
\includegraphics[scale=0.22]{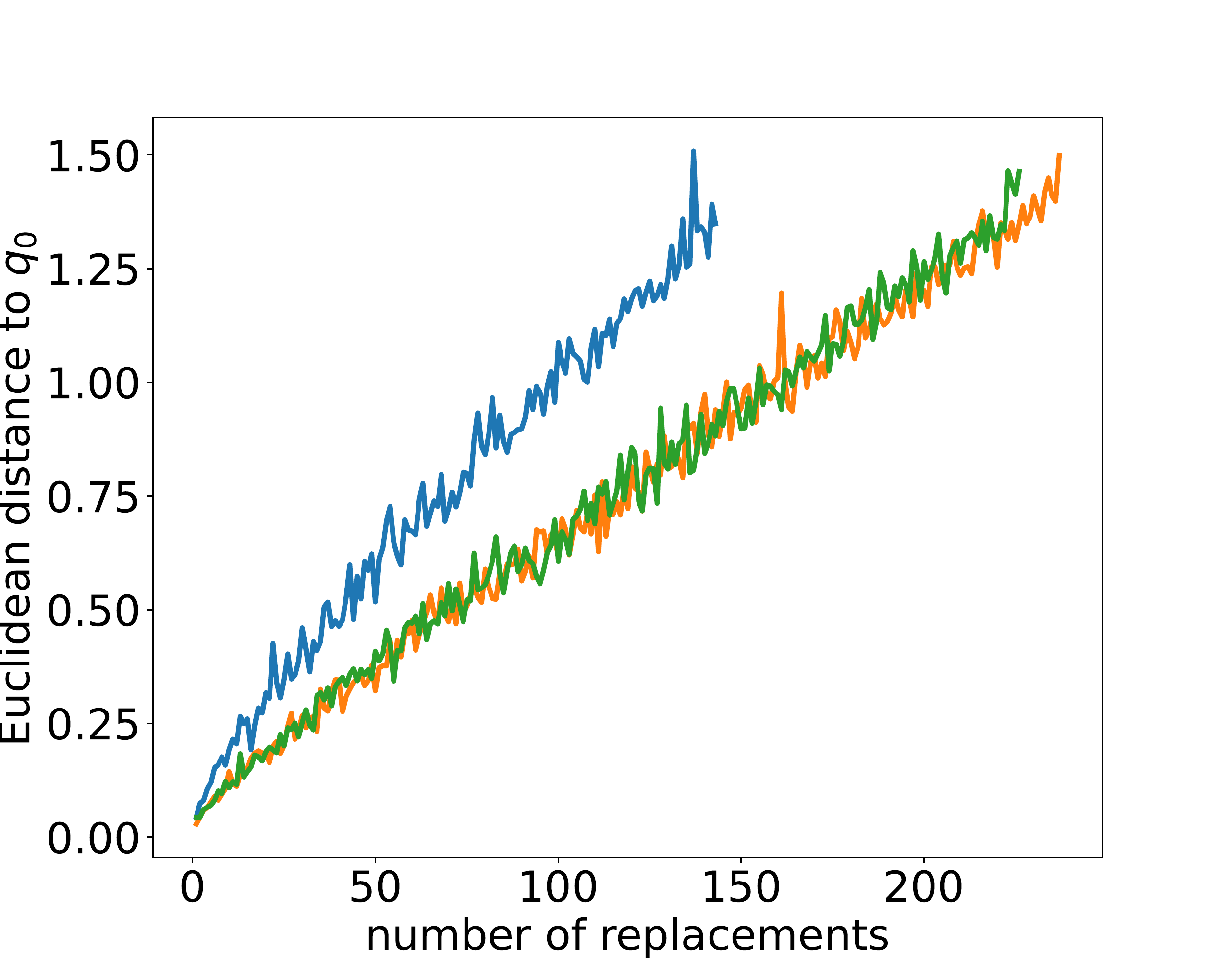}
\hfill 
\includegraphics[scale=0.22]{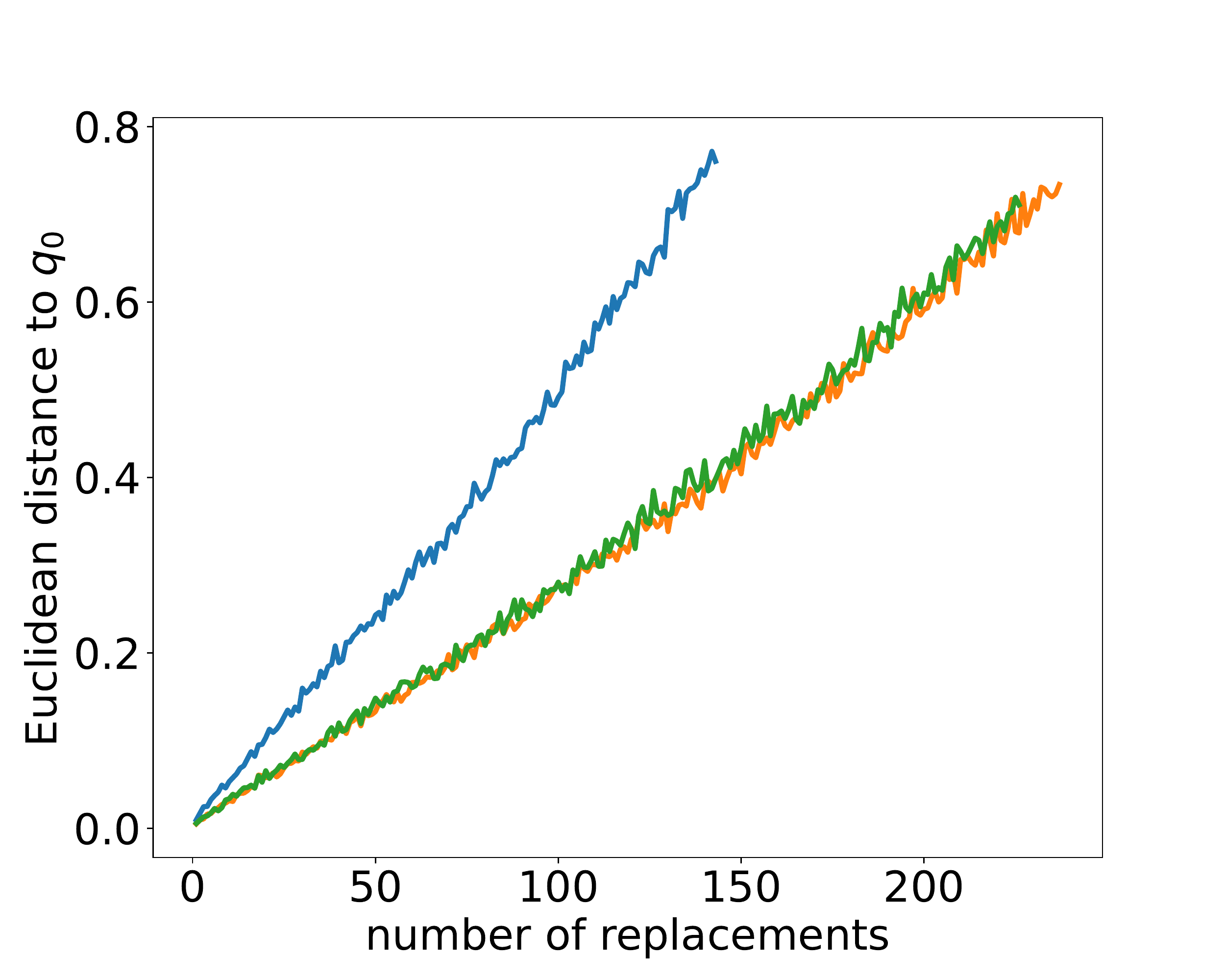}
\vspace{-0.2in}
    \caption{\label{fig:influence-number-replaced-perso}Influence of the number of words replaced on the robustness of \texttt{doc2vec}. From left to right: PVDMmean, PVDMconcat, and PVDBOW.}
\end{figure*}


\begin{theorem}[Bounded trajectories]
\label{th:bounded-traj}
Let $x \in \docs{T}$, $\Indset \subseteq [T]$, and $\xtilde \in \ballHamming{x}{\Indset}$. 
Suppose that $\Red \in \Reals^{D\times d}$ is 
such that $\minsingular{R} >0$ and $\range{R} \subset \Indic^{\perp}$. 
Let $\mu\mapsto q(\mu)$ be the solution of ODE \eqref{eq:ODE-paper}. 
Then, there exist two constants $c=c(\alpha)>0$ and $L=L(\norm{q(0)})>0$ depending explicitly on $\Proj,\Red$, $\winsize$, and $D$ such that, whenever $\card{\Indset}/T\leq c$, 
\[
\sup_{\temps\in [0,1]} \norm{q(\mu) - q(0)} \leq L \frac{\card{\Indset}}{T}
\, .
\]
\end{theorem}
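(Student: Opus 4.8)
The plan is to analyze the ODE $q'(\temps) = \Philinear(\temps, q(\temps))$ from Eq.~\eqref{eq:ODE-paper} directly, by bounding $\norm{\Philinear(\temps, q(\temps))}$ uniformly along the trajectory and integrating. Writing $r(\temps) \defeq \norm{q(\temps) - q(0)}$, we have $r'(\temps) \leq \norm{\Philinear(\temps, q(\temps))}$, so it suffices to control the right-hand side. From the definition \eqref{eq:def-philinear}, this splits into two factors: the inverse Hessian $(\nabla^2 \Psilinear(\temps,q) + \alpha \Identity)^{-1}$, whose operator norm is at most $1/(\lambda_1(\temps,q) + \alpha) \leq 1/\alpha$ by strong convexity of $\Psilinear(\temps,\cdot) + \frac{\alpha}{2}\norm{\cdot}^2$; and the mixed derivative $\partial_\temps \nabla \Psilinear(\temps,q)$. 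For the second factor, I would compute from \eqref{eq:def-interpolation-scheme} that $\partial_\temps \nabla \Psilinear(\temps,q) = \nabla G(q) - \nabla F(q)$, which by the chain rule equals $\frac{1}{T}\sum_{t\in x} \Red^\top \big[ \nabla\psi_{\xtilde_t}(\pitilde_t + \Red q) - \nabla\psi_{x_t}(\pi_t + \Red q) \big]$. Since the summand vanishes whenever $t$ and its neighborhood $\nhood{t}$ are untouched by the perturbation — i.e. when $x_s = \xtilde_s$ for all $s \in \{t\} \cup \nhood{t}$, forcing $\pi_t = \pitilde_t$ and $x_t = \xtilde_t$ — only $\bigo{\winsize \card{\Indset}}$ terms survive. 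Each surviving term is bounded by $\opnorm{\Red}$ times a uniform bound on $\norm{\nabla \psi_j}$; since $\psi = -\log\sigma$ and $\nabla\psi_j(y) = \sigma(y) - \onehot{j}$, we have $\norm{\nabla\psi_j} \leq \sqrt{2}$ globally. This yields $\norm{\partial_\temps \nabla \Psilinear(\temps,q)} \leq \frac{C' \winsize \opnorm{\Red} \card{\Indset}}{T}$ for an absolute constant $C'$, uniformly in $q$, hence $r'(\temps) \leq \frac{C' \winsize \opnorm{\Red}}{\alpha} \cdot \frac{\card{\Indset}}{T}$, and integrating over $[0,1]$ gives the claim with $L$ depending only on $\winsize, \opnorm{\Red}, \alpha$.

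The subtlety — and the reason the theorem is stated with the Gr\"onwall--Bellman--Bahouri result (Theorem~\ref{lemma:CS-appendix}) and the hypotheses $\minsingular{R} > 0$, $\range{R} \subset \Indic^\perp$ — is that the crude bound $\lambda_1 + \alpha \geq \alpha$ throws away the structure and, more importantly, that one genuinely needs a bound on $\norm{q(\temps)}$ itself (not just on increments) to invoke the sharp eigenvalue estimates of Lemma~\ref{lemma:smallest-eigenvalue-appendix} and Theorem~\ref{theorem:min_softmax-appendix}: the relevant lower bound on $\lambda_1(\temps, q(\temps))$ degrades as $\norm{q(\temps)}$ grows, because far from the origin the log-softmax Hessian flattens in certain directions. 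So the real argument is a bootstrap: one sets up a differential inequality of the form $r'(\temps) \leq \frac{\card{\Indset}}{T} \cdot \Lambda(\norm{q(0)} + r(\temps))$ where $\Lambda$ captures how the inverse smallest-eigenvalue bound grows with the norm, and then the condition $\card{\Indset}/T \leq c(\alpha)$ is exactly what is needed to keep the solution of this inequality from escaping to infinity — this is where Theorem~\ref{lemma:CS-appendix} replaces the classical Gr\"onwall lemma, since $\Lambda$ grows too fast for the standard version to close.

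Concretely, the steps I would carry out in order are: (1) derive the ODE as already done in the excerpt and record $q'(\temps) = \Philinear(\temps, q(\temps))$; (2) compute $\partial_\temps \nabla \Psilinear(\temps,q) = \nabla G(q) - \nabla F(q)$ and establish the sparsity-in-$\Indset$ bound $\norm{\nabla G(q) - \nabla F(q)} \leq \frac{C'\winsize\opnorm{\Red}\card{\Indset}}{T}$ uniformly in $q$, using $\norm{\nabla\psi_j} = \norm{\sigma(y) - \onehot{j}} \leq \sqrt 2$; (3) invoke the eigenvalue lemmas to get a lower bound $\lambda_1(\temps, q) + \alpha \geq \ell(\norm{q})$ with $\ell$ explicit in $\Proj, \Red, \winsize, D, \alpha$, using $\range{R} \subset \Indic^\perp$ to rule out the trivial null direction and $\minsingular{R} > 0$ to transfer the log-softmax eigenvalue bound through $\Red$; (4) combine (2) and (3) into the differential inequality $r'(\temps) \leq \frac{\card{\Indset}}{T} \cdot \frac{C'\winsize\opnorm{\Red}}{\ell(\norm{q(0)} + r(\temps))}$; (5) apply Theorem~\ref{lemma:CS-appendix} to this inequality, reading off the threshold $c(\alpha)$ and the final constant $L = L(\norm{q(0)})$. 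The main obstacle is step (3) together with the bootstrap in (5): making the norm-dependence of the smallest-eigenvalue bound quantitative enough that the Gr\"onwall--Bellman--Bahouri machinery applies, and verifying that the smallness condition $\card{\Indset}/T \leq c$ suffices to prevent blow-up, rather than just the increment bound of step (2), which is routine.
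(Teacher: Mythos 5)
Your proposal is correct and follows essentially the same route as the paper: the appendix derives the ODE, bounds $\partial_\temps\nabla\Psilinear$ by $\bigo{\winsize\maxsingular{\Red}\card{\Indset}/T}$ via exactly the cancellation-outside-$\Indset$ argument you describe, replaces the crude $1/\alpha$ control of the inverse Hessian by a lower bound on the smallest positive eigenvalue (Lemma~\ref{lemma:psilin_assumption2-appendix}, built on Theorem~\ref{theorem:min_softmax-appendix} and the hypotheses on $\Red$) that degrades exponentially in $\norm{q}$, and closes the bootstrap with the Gr\"onwall--Bellman--Bahouri result under the smallness condition on $\card{\Indset}/T$. Your first-paragraph ``crude'' bound with $L\propto 1/\alpha$ is precisely what the paper's Remark~\ref{rem:crude-bounds-appendix} notes and discards, and you correctly identify why the refined argument is the one actually needed.
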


Since $\doctovec{x}=q(0)$ and $\doctovec{\xtilde}=q(1)$, a corollary of Theorem~\ref{th:bounded-traj} is that the \textbf{\texttt{doc2vec} embedding is Lipschitz continuous with respect to the Hamming distance}, with Lipschitz constant at most inversely proportional to the document lengTheorem 
Coming back to our initial question, Theorem~\ref{th:bounded-traj} guarantees that, for documents of reasonable length and small perturbations, \texttt{doc2vec} embeddings can not vary too greatly. 
We emphasize that Theorem~\ref{th:bounded-traj} is true for all three \texttt{doc2vec} models. 

The key assumption here is that $\card{\Indset}$ is small enough. 
We argue that it is only natural to ask so: indeed, if one is allowed to modify every single token of $x$, this yield a completely different document (although having the same length), which could \emph{a priori} be embedded anywhere. 
The other main assumptions concern the matrix $\Red$. 
Experimentally, we observe that $\minsingular{\Red}>0$ holds (see Section~\ref{sec:svd-appendix}). 
Requiring that $\range{\Red} \subset \Indic^{\perp}$ is not too restricting: because of the translation invariance by $\Indic$ of the softmax, one can always normalize $\Red$ by removing the average line from each line. 
The main limitation of Theorem~\ref{th:bounded-traj} is the dependency of $c$ and $L$ in the problems parameters. 
Exact expression can be found in Appendix (Theorem~\ref{theorem:gronwall_doc2vec-appendix}). 


\subsection{Experimental validation}

In order to verify the validity of Theorem~\ref{th:bounded-traj}, we ran similar experiments to those presented in Section~\ref{sec:tfidf}. 
We considered again movie reviews from the IMDB dataset. 
As vectorizer, we trained \texttt{doc2vec} models from scratch on a subset of the IMDB dataset ($10^3$ reviews). 
The associated dictionary has size $D=18,416$: we took tokens as words of the English dictionary. 
Note that one can also consider sub-word tokens, but in that case replacing a word in the document usually implies replacing several tokens. 
We chose $d=50$ as dimension of the embedding. 
We took $\winsize=5$ as context size parameter. 

We present results of experiments regarding the influence of the document length in Figure~\ref{fig:influence-length-document-perso}. 
Theorem~\ref{th:bounded-traj} predicts that, since $\card{\Indset}$ is kept constant here, the supremum of $\norm{\doctovec{x}-\doctovec{\xtilde}}$ over all replacements should be upper bounded by $1/T$ (up to numerical constants). 
This appears to be empirically true. 

We present results of experiments regarding the influence of the number of replaced words in Figure~\ref{fig:influence-number-replaced-perso}. 
Here we took the number of replaced words from $\winsize + 1$ to $T-\winsize-1$ to avoid for border effects. 
Since $T$ is fixed, Theorem~\ref{th:bounded-traj} predicts that the supremum of $\norm{\doctovec{x}-\doctovec{\xtilde}}$ over all possible replacements should behave at most linearly in $\card{\Indset}$. 
This appears to be empirically true.

We present in Appendix (Section~\ref{sec:more-results-appendix}) additional results with another implementation, \texttt{gensim} \citep{rehurek_sojka_2010}. 
In particular, this implementation uses hierarchical softmax. 
The results are consistent with the behavior presented here.


\section{Conclusion}
\label{sec:conclusion}

In this paper, we proved that several popular text vectorizers are robust, in the sense that they are either Lipschitz or H\"older continuous with respect to the Hamming distance.
Proving this robustness was possible for concatenation and TF-IDF thanks to elementary computations, but required a much more challenging mathematical analysis for \texttt{doc2vec} requiring  two new results (local Lipschitz continuity of the softmax and a new Gr\"onwall--Bellman--Bahouri non-explosion lemma).

Let us outline future research directions. 
First, we studied the robustness of the \emph{true} solution of~\eqref{eq:PV-training} and~\eqref{eq:PV-inference}. 
In practice, this problem is solved thanks to gradient descent, and it would be interesting to measure the impact of this approximation.
A second line of work would consist in obtaining refined results when we put a random model on the distribution of the words of the document, similarly to what is done in \citep{arora_et_al_2016}. 


\section*{Acknowledgements}

This work was supported by the French government under the management of the Agence Nationale de la Recherche, grant agreements GraVa ANR-18-CE40-0005, NEMATIC ANR-21-CE45-0010, and NIM-ML ANR-21-CE23-0005-01. 
D.G. acknowledges the support of EU Horizon 2020 project AI4Media (contract no. 951911) and would like to thank Charbel Yachouchi for his preliminary work.
S.V. would like to thank Nicolas Patry for fruitful discussion about NLP embeddings.


\bibliography{biblio}
\bibliographystyle{icml2023}

\newpage
\appendix
\onecolumn

\section{General organization}
\label{sec:orga-appendix}

This Appendix is organized as follows: in Section~\ref{sec:concatenation-appendix} (resp.~\ref{sec:tfidf-appendix}) we collect the missing proofs for Section~\ref{sec:concatenation} (resp.~\ref{sec:tfidf}) of the main paper. 

The next five sections are dedicated to the proof of Theorem~\ref{th:bounded-traj}: 
First, in Section~\ref{sec:minimizarion.ode-appendix}, we formally prove that the dynamics of the interpolation scheme between two minimizers follow an ordinary differential equation (ODE). 
We actually show a more general result and provide technical conditions on the interpolation $\Psi$ under which we are able to formulate the interpolation between minimization problems as an ODE. 
Next, in Section~\ref{sec:quantitative-bounds-appendix}, we derive quantitative bounds for the solution of this ODE. 
We show how to specialize this result in the \texttt{doc2vec} setting in Section~\ref{sec:doc2vec-appendix}, proving Theorem~\ref{th:bounded-traj} in the process. 
The main tool used to obtain these bounds is a general Gr\"onwall-Bellman-Bahouri type result for ODE with exponentially-growing coefficients.
This result (Theorem~\ref{lemma:CS-appendix}), as well as all other technical results concerning ODEs, is stated and proved in Section~\ref{sec:gronwall-appendix} .
In order to specialize our result to the \texttt{doc2vec} setting, we needed a fine-grained study of the (log-)softmax function. 
In particular, we derive a new bound on the softmax function (Theorem~\ref{theorem:min_softmax-appendix}), which is proved in Section~\ref{sec:softmax-appendix}. 

We conclude this Appendix with additional experimental results supporting our claims in Section~\ref{sec:more-results-appendix}. 


\section{Omitted proofs for concatenation}
\label{sec:concatenation-appendix}

\subsection{Proof of Proposition~\ref{prop:concatenation-robustness}}

By definition of $\Concat$ and Pythagoras theorem, 
\[
\norm{\concat{x} - \concat{\xtilde}}^2 = \sum_{t\in \Indset \cap [\tmax]} \norm{u(x_t,t)-u(\xtilde_t,t)}^2
\, .
\]
By definition of $u$ (Eq.~\eqref{eq:def-concat-first-step}), one has 
\begin{equation}
\label{eq:aux-proof-concat-1}
u(x_t,t)-u(\xtilde_t,t) = [u_e(x_t)-u(\xtilde_t) ; 0]
\, ,
\end{equation}
and therefore 
\[
\norm{u(x_t,t)-u(\xtilde_t,t)}^2 = \norm{u_e(x_t)-u(\xtilde_t)}^2
\, .
\]
We deduce that 
\[
\norm{\concat{x} - \concat{\xtilde}}^2 \leq \card{\Indset \cap [\tmax]} \cdot \max_{j\neq k} \norm{u_e(j)-u_e(k)}^2
\, .
\]
\qed 

\begin{remark}[Concatenation \emph{v.s.} sum]
Replacing the concatenation by a sum in the definition of $u$ (Eq.~\eqref{eq:def-concat-first-step}) does not change the proof. 
Indeed, the key step Eq.~\eqref{eq:aux-proof-concat-1} remains unchanged in that case: the key idea here is that position tokens are the same for words in the same position, and cancel out when forming the difference. 
\end{remark}


\section{Omitted proofs for TF-IDF vectorization}
\label{sec:tfidf-appendix}


\subsection{Proof of Proposition~\ref{prop:robustness-non-normalized-tfidf}}

By definition, we can write 
\[
\tfidf{x} = \sum_{j=1}^D \freq_j \idf_j\onehot{j} = \frac{1}{T}\sum_{j=1}^D \mult_j\idf_j\onehot{j}
\, .
\]
Similarly, since $\xtilde$ has same length as $x$,
\[
\tfidf{\xtilde} = \frac{1}{T}\sum_{j=1}^D \mtilde_j\idf_j\onehot{j}
\, ,
\]
where we let $\mtilde_j$ denote the multiplicity of word $j$ in document $\xtilde$. 
We deduce that 
\[
\norm{\tfidf{x}-\tfidf{\xtilde}}^2 = \frac{1}{T^2}\sum_{j=1}^D (\mult_j-\mtilde_j)^2\idf_j^2
\, .
\]
By letting $\maxidf$ be the maximal inverse document frequency on $\Dico$, we already see that
\[
\norm{\tfidf{x}-\tfidf{\xtilde}}^2 \leq \frac{\maxidf^2}{T^2}\sum_{j=1}^D (\mult_j-\mtilde_j)^2
\, .
\]
In the previous display, only terms such that $\mult_j\neq \mtilde_j$ count. 
Using the inequality between $p$-norms, we have
\[
\sum_{\mult_j\neq \mtilde_j}(\mult_j-\mtilde_j)^2 \leq \left(\sum_{\mult_j\neq \mtilde_j} \abs{\mult_j-\mtilde_j}\right)^2
\, .
\]
Now, by the triangle inequality, 
\[
\sum_{\mult_j\neq \mtilde_j} \abs{\mult_j-\mtilde_j} \leq \sum_{\mult_j\neq \mtilde_j} \mult_j + \sum_{\mult_j\neq \mtilde_j} \mtilde_j
\, .
\]
We notice that these two sums are equal: every removed word has to appear somewhere. 
Moreover, $\card{\{j \text{ s.t. } \mult_j\neq \mtilde_j}\leq 2\card{\Indset}$, since modifying one word changes at most two multiplicities, and this happens at most $\card{\Indset}$ times. 
Therefore, we have proved that 
\begin{equation}
\label{eq:aux-proof-robustness-non-normalized-tfidf}
\sum_{\mult_j\neq \mtilde_j} \abs{\mult_j-\mtilde_j} \leq 4\maxmult \card{\Indset} 
\, ,
\end{equation}
where we recall that $\maxmult$ is the maximal multiplicity of words of $x$. 
Backtracking, we have 
\[
\norm{\tfidf{x}-\tfidf{\xtilde}}^2 \leq \frac{\maxidf^2}{T^2} \cdot 16\maxmult^2 \card{\Indset}^2
\, ,
\]
and we can conclude by simply taking the square root of this last display. 
\qed 


\subsection{Proof of Proposition~\ref{prop:robustness-normalized-tfidf}}

We notice that 
\begin{equation}
\label{eq:euclidean-to-cosine}
\norm{\normtfidf{x}-\normtfidf{\xtilde}}^2 = 1+1-2\normtfidf{x}^\top \normtfidf{\xtilde} = 2 - 2\frac{\tfidf{x}^\top\tfidf{\xtilde}}{\norm{\tfidf{x}} \norm{\tfidf{\xtilde}}}
\, .
\end{equation}
In this last term we recognize the \emph{cosine similarity} between $\tfidf{x}$ and $\tfidf{\xtilde}$.  
Since we are working under the assumptions of Lemma~\ref{lemma:cosine-distance-robustness}, we have 
\[
\frac{\tfidf{x}^\top\tfidf{\xtilde}}{\norm{\tfidf{x}} \norm{\tfidf{\xtilde}}} \geq 1-\frac{8\maxmult\maxidf\card{\Indset}}{\norm{\tfidf{x}}}
\, .
\]
Coming back to Eq.~\eqref{eq:euclidean-to-cosine}, we see that 
\[
\norm{\normtfidf{x}-\normtfidf{\xtilde}}^2 \leq \frac{16\maxmult\maxidf\card{\Indset}}{\norm{\tfidf{x}}}
\, .
\]
We conclude by using Lemma~\ref{lemma:lower-bound-norm-non-normalized-tfidf} and taking the square root. 
\qed 


\subsection{Auxilliary results}

We have the following result, key to the proof of Prop.~\ref{prop:robustness-normalized-tfidf}, and of independent interest:

\begin{lemma}[Cosine similarity robustness]
\label{lemma:cosine-distance-robustness}
Let $x$ be a document. 
Let $\Indset \subseteq [T]$ such that $\card{\Indset}\leq \norm{\tfidf{x}}/(4\maxmult\maxidf)$ and $\xtilde \in \ballHamming{x}{\Indset}$. 
Then 
\begin{equation}
\label{eq:cosine-distance-similarity}
\frac{\tfidf{x}^\top\tfidf{\xtilde}}{\norm{\tfidf{x}} \norm{\tfidf{\xtilde}}} \geq 1-\frac{8\maxmult\maxidf\card{\Indset}}{\norm{\tfidf{x}}}
\, .
\end{equation}
\end{lemma}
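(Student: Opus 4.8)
\textbf{Proof plan for Lemma~\ref{lemma:cosine-distance-robustness}.}
The plan is to bound the cosine similarity from below by controlling the numerator $\tfidf{x}^\top\tfidf{\xtilde}$ and the denominator $\norm{\tfidf{x}}\norm{\tfidf{\xtilde}}$ separately, leveraging the closed-form expression of TF-IDF and the Lipschitz estimate already proved in Proposition~\ref{prop:robustness-non-normalized-tfidf}. The basic mechanism: writing $\tfidf{\xtilde} = \tfidf{x} + \delta$ with $\delta \defeq \tfidf{\xtilde}-\tfidf{x}$, we have from Proposition~\ref{prop:robustness-non-normalized-tfidf} that $\norm{\delta} \leq 4\maxmult\maxidf\card{\Indset}/T$. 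The hypothesis $\card{\Indset}\leq \norm{\tfidf{x}}/(4\maxmult\maxidf)$ is precisely what guarantees $\norm{\delta}\leq \norm{\tfidf{x}}/T \leq \norm{\tfidf{x}}$, so $\delta$ is a genuinely small perturbation and the perturbed vector does not vanish.

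First I would handle the numerator: $\tfidf{x}^\top\tfidf{\xtilde} = \norm{\tfidf{x}}^2 + \tfidf{x}^\top\delta \geq \norm{\tfidf{x}}^2 - \norm{\tfidf{x}}\norm{\delta}$ by Cauchy--Schwarz. Next, for the denominator I would bound $\norm{\tfidf{\xtilde}} \leq \norm{\tfidf{x}} + \norm{\delta}$ by the triangle inequality. Combining these two estimates,
\[
\frac{\tfidf{x}^\top\tfidf{\xtilde}}{\norm{\tfidf{x}}\norm{\tfidf{\xtilde}}}
\geq \frac{\norm{\tfidf{x}}^2 - \norm{\tfidf{x}}\norm{\delta}}{\norm{\tfidf{x}}\left(\norm{\tfidf{x}}+\norm{\delta}\right)}
= \frac{\norm{\tfidf{x}}-\norm{\delta}}{\norm{\tfidf{x}}+\norm{\delta}}
= 1 - \frac{2\norm{\delta}}{\norm{\tfidf{x}}+\norm{\delta}}
\geq 1 - \frac{2\norm{\delta}}{\norm{\tfidf{x}}}
\, .
\]
Finally I would substitute the bound $\norm{\delta}\leq 4\maxmult\maxidf\card{\Indset}/T \leq 4\maxmult\maxidf\card{\Indset}$ (using $T\geq 1$) into the last display, which yields exactly $1 - 8\maxmult\maxidf\card{\Indset}/\norm{\tfidf{x}}$ and completes the proof.

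The only mild subtlety — and the step I would be most careful about — is making sure the denominator is strictly positive so that the cosine similarity is well-defined: this is where the cardinality hypothesis is used, since it forces $\norm{\delta} < \norm{\tfidf{x}}$ and hence $\norm{\tfidf{\xtilde}} \geq \norm{\tfidf{x}} - \norm{\delta} > 0$. Everything else is a routine chain of Cauchy--Schwarz and triangle inequalities, so I do not anticipate any real obstacle. One could alternatively phrase the argument through the monotonicity of $t\mapsto (a-t)/(a+t)$, but the direct manipulation above is cleaner and self-contained.
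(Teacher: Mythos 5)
Your proof is correct and follows essentially the same route as the paper's: lower-bound the numerator via Cauchy--Schwarz on the cross term $\tfidf{x}^\top\delta$, upper-bound the denominator by the triangle inequality, and finish with the elementary estimate $(1-u)/(1+u)\geq 1-2u$. The only cosmetic difference is that you invoke Proposition~\ref{prop:robustness-non-normalized-tfidf} directly for $\norm{\delta}$ while the paper re-derives the same bound componentwise in multiplicity units (rescaling by $T$); both arguments rely on the same positivity check, guaranteed by the hypothesis on $\card{\Indset}$, before combining the numerator and denominator bounds.
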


\begin{proof}
By homogeneity, we can multiply numerator and denominator in Eq.~\eqref{eq:cosine-distance-similarity} by $T$ and deal with multiplicities instead of frequencies in this proof. 
We first focus on the numerator and write
\begin{equation}
\label{eq:decomp-dot-product-tfidf}
\tfidf{x}^\top \tfidf{\xtilde} = \tfidf{x}^\top (\tfidf{x} + \tfidf{\xtilde} - \tfidf{x}) = \norm{\tfidf{x}}^2 + \sum_{j=1}^D \mult_j (\mtilde_j-\mult_j)\idf_j^2
\, ,
\end{equation}
by definition of $\Tfidf$. 
Using Cauchy-Schwarz inequality, we find that 
\[
\sum_{j=1}^D \mult_j(\mult_j-\mtilde_j)\idf_j^2 \leq \sqrt{\sum_j \mult_j\idf_j^2} \sqrt{\sum_j (\mult_j-\mtilde_j)^2\idf_j^2}
\, .
\]
In the first part of the right-hand side we recognize $\norm{\tfidf{x}}$, and in the second part, the same quantity bounded in the proof of Proposition~\ref{prop:robustness-non-normalized-tfidf}. 
We deduce that 
\[
\sum_j \mult_j(\mult_j-\mtilde_j)\idf_j^2 \leq \norm{\tfidf{x}} \cdot 4\maxmult\maxidf \card{\Indset} 
\, .
\]
Coming back to Eq.~\eqref{eq:decomp-dot-product-tfidf}, we have proved that 
\[
\tfidf{x}^\top \tfidf{\xtilde} \geq \norm{\tfidf{x}}^2 - 4\maxmult\maxidf \norm{\tfidf{x}}\card{\Indset}
\, ,
\]
which is positive under our assumption. 
Let us now look into the denominator of Eq.~\eqref{eq:cosine-distance-similarity}. 
Using the triangle inequality and Proposition~\ref{prop:robustness-non-normalized-tfidf}, we write
\[
\norm{\tfidf{\xtilde}} \leq \norm{\tfidf{x}} + 4\maxmult\maxidf \card{\Indset} 
\, .
\]
Putting everything together, we have
\[
\frac{\tfidf{x}^\top\tfidf{\xtilde}}{\norm{\tfidf{x}} \norm{\tfidf{\xtilde}}} \geq \frac{\norm{\tfidf{x}}^2 - 4\maxmult\maxidf \norm{\tfidf{x}}\card{\Indset}}{\norm{\tfidf{x}}\cdot (\norm{\tfidf{x}} + 4\maxmult\maxidf \card{\Indset} )} = \frac{1-u}{1+u}
\, ,
\]
with $u\defeq 4\maxmult\maxidf\card{\Indset}/\norm{\tfidf{x}}$. 
Again, by our assumption, $u\in (0,1)$. 
It is straightforward to show that $(1-u)/(1+u)\geq 1-2u$ for all $u\in (0,1)$, and we deduce the result. 
\end{proof}

We also have the following: 

\begin{lemma}[Lower bound on $\norm{\tfidf{x}}$]
\label{lemma:lower-bound-norm-non-normalized-tfidf}
Let $x$ be a document. 
Let $\minidf$ be the minimum inverse document frequency for words contained in $x$ and $D(x)$ the size of the local dictionary. 
Then
\[
\norm{\tfidf{x}} \geq \frac{T\minidf}{\sqrt{D(x)}}
\, .
\]
\end{lemma}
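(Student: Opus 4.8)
The plan is to bound $\norm{\tfidf{x}}$ from below by exhibiting enough "mass" in at least one coordinate and then summing. Recall that $\tfidf{x}_j = \freq_j \idf_j = \frac{\mult_j}{T}\idf_j$, so that
\[
\norm{\tfidf{x}}^2 = \frac{1}{T^2}\sum_{j \in \Dico(x)} \mult_j^2 \idf_j^2 \geq \frac{\minidf^2}{T^2}\sum_{j\in\Dico(x)} \mult_j^2
\, ,
\]
where the sum is restricted to words actually appearing in $x$ (those with $\mult_j \geq 1$), and $\minidf$ is by definition the minimum of $\idf_j$ over exactly those words, so the bound $\idf_j \geq \minidf$ is valid for every term retained.

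Next I would lower bound $\sum_{j\in\Dico(x)} \mult_j^2$. The key constraint is that $\sum_{j\in\Dico(x)} \mult_j = T$ (every one of the $T$ tokens of $x$ is counted once), and the sum has exactly $D(x)$ nonzero terms. By the Cauchy--Schwarz inequality (or equivalently the power-mean inequality / QM-AM),
\[
T = \sum_{j\in\Dico(x)} \mult_j \leq \sqrt{D(x)} \cdot \sqrt{\sum_{j\in\Dico(x)} \mult_j^2}
\, ,
\]
which rearranges to $\sum_{j\in\Dico(x)}\mult_j^2 \geq T^2/D(x)$. Substituting this into the previous display gives $\norm{\tfidf{x}}^2 \geq \minidf^2 T^2 / (T^2 D(x)) = \minidf^2 / D(x)$... wait, that drops a factor; let me recompute: $\norm{\tfidf{x}}^2 \geq \frac{\minidf^2}{T^2}\cdot\frac{T^2}{D(x)} = \frac{\minidf^2}{D(x)}$. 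Hmm, this yields $\norm{\tfidf{x}} \geq \minidf/\sqrt{D(x)}$, which is missing the claimed factor of $T$. The resolution is that one should not divide by $T^2$: if the statement uses the \emph{non-normalized} convention $\tfidf{x}_j = \mult_j \idf_j$ (term frequency as raw count, as is one common convention), then $\norm{\tfidf{x}}^2 = \sum \mult_j^2\idf_j^2 \geq \minidf^2 \sum\mult_j^2 \geq \minidf^2 T^2/D(x)$, giving exactly $\norm{\tfidf{x}} \geq T\minidf/\sqrt{D(x)}$. So the proof consists of: (i) factor out $\minidf^2$ using the definition of the local minimum idf; (ii) apply Cauchy--Schwarz to the identity $\sum_j \mult_j = T$ over the $D(x)$ active coordinates; (iii) take square roots.

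The only real subtlety — and the step to be careful about — is bookkeeping of the normalization convention: which power of $T$ appears depends on whether $\freq_j$ denotes $\mult_j/T$ or $\mult_j$ in the $\norm{\tfidf{x}}$ used in this lemma, and one must make sure the convention is consistent with how Lemma~\ref{lemma:cosine-distance-robustness} and Proposition~\ref{prop:robustness-normalized-tfidf} invoke it (in particular, the proof of Lemma~\ref{lemma:cosine-distance-robustness} explicitly rescales by $T$ to "deal with multiplicities instead of frequencies," which is precisely where the $T$ reappears). Everything else is a one-line application of Cauchy--Schwarz; there is no genuine obstacle.
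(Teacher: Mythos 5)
Your proof is correct and is essentially the paper's own (one-line) argument: the paper's proof reads ``straightforward from the definitions and the comparison of $p$-norms,'' and your Cauchy--Schwarz step $\sum_j \mult_j \le \sqrt{D(x)}\bigl(\sum_j \mult_j^2\bigr)^{1/2}$ is precisely that $\ell^1$-versus-$\ell^2$ comparison over the $D(x)$ active coordinates. You also correctly diagnosed the only genuine subtlety, namely that the stated bound holds under the multiplicity (not divided by $T$) convention, which is the one in force after the rescaling performed at the start of the proof of Lemma~\ref{lemma:cosine-distance-robustness}, and which is consistent with how the lemma is invoked in Proposition~\ref{prop:robustness-normalized-tfidf}.
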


\begin{proof}
Straightforward from the definitions and the comparison of $p$-norms. 
\end{proof}


\section{Dynamics of interpolation}
\label{sec:minimizarion.ode-appendix}

Recall that we are considering, for all $\temps\in [0,1]$, the following minimization problem:
\begin{equation}
\label{eq:optim.problem.general-appendix}
q(\temps) \defeq \Argmin_{q\in\Reals^d} \left\{ \Psilinear(\temps,q) + \frac{\alpha}{2}\norm{q}^2 \right\}
\, .
\end{equation}
In this section, we show that under mild regularity assumptions on $\Psi$, $q$ is the unique solution of the following ODE:
\begin{equation}
\label{eq:ODE-appendix}
\left(\nabla^2 \Psilinear(\temps,q(\temps) ) + \alpha \Identity \right)
q'(\temps) + 
\partial_\temps \nabla \Psilinear(\temps,q(\temps)) = 0
\, .
\end{equation}

\paragraph{Notation.}
For any matrix $M\in\Reals^{A\times B}$, let us define the \emph{operator norm} of $M$ as
\[
\opnorm{M} \defeq \sup \left\{ \frac{\norm{Mv}}{\norm{v}}, v\in\Reals^B \setminus \{0\}\right\}
\, .
\]
For any $\rho >0$, we also define $B_d(\rho)$ the open Euclidean ball of center $0$ and radius $\rho$. Finally, for $a_1,a_2>0$, define $a_1 \vee a_2 \defeq \max(a_1,a_2)$.

We can now state the required assumptions on $\Psi$. 

\begin{assumption}[Convexity]
\label{assumption:convex-appendix}
Let $d\ge 1$. 
We suppose that  $\Psi \in \mathcal{C}^{1,2} ([0,1] \times \Reals^d;\Reals)$ and that,
for all $(\temps,q)\in [0,1]\times \Reals^d$, $\nabla^2 \Psi(\temps,q)$ is a positive semi-definite matrix. 
\end{assumption}

Since $\alpha>0$, A.\ref{assumption:convex-appendix} this guarantees that $q(\mu)$ is uniquely-defined for each $\mu$. 
Next, we define some quantities related to the local Lipschitz continuity of $\Psi$ and its derivatives. 

\begin{definition}[Local Lipschitz semi-norms]
\label{def:local-lipschitz-continuous}
Let $\Psi \in \mathcal{C}^{1,2}([0,1]\times\Reals^d;\Reals)$. 
    For all $\radius>0$, let us define 
    \begin{equation}
    \label{eq:lipschitz.norm-appendix}
    \lipconstant_1(\radius) \defeq \sup_{\substack{\temps \in [0,1] \\q\neq \tilde{q} \in B_d(0,\radius) }}  \frac{\opnorm{\nabla^2 \Psi(\temps,q) - \nabla^2 \Psi(\temps,\tilde{q}) }}{\norm{q-\tilde{q}}} 
    \, , \quad 
    \lipconstant_2(\rho)\defeq \sup_{\substack{\temps \in [0,1] \\q\neq \tilde{q} \in B_d(0,\radius) }} \frac{\norm{\partial_\temps \nabla \Psi(\temps,q) - \partial_\temps \nabla \Psi(\temps,\tilde{q}) }}{\norm{q-\tilde{q}}}
\, ,
    \end{equation}
and
    \begin{equation}\label{eq:gradient.norm-appendix}
    \supgradient(\radius) \defeq \sup_{\substack{\temps \in [0,1]\\q\in B_d(0,\radius)}} \norm{\partial_\temps \nabla \Psi(\temps,q)}
    \, .
    \end{equation}
\end{definition}

Our second assumption on $\Psi$ at this stage is that these quantities are all finite. 

\begin{assumption}[Global Lipschitz continuity]
\label{assumption:global.lipschitz-appendix}
Let $\Psi \in \mathcal{C}^{1,2}([0,1]\times\Reals^d;\Reals)$. Suppose that
\[\sup_{\rho > 0} \big(\lipconstant_1(\radius) + \lipconstant_2(\rho)\big)< + \infty \quad \text{and} \quad \sup_{\rho>0}\supgradient(\radius) < +\infty,\]
where 
$\lipconstant_1(\radius)$, $\lipconstant_2(\radius)$,  and $\supgradient(\radius)$ are defined in Eq.~\eqref{eq:lipschitz.norm-appendix} and Eq.~\eqref{eq:gradient.norm-appendix}.
\end{assumption}

In this setting, we are able to prove the following result:

\begin{theorem}[Equivalence ODE/minimization problem]
\label{th:dynamics-interpolation-appendix}
Assume that $\Psi$ satisfies A.\ref{assumption:convex-appendix} and A.\ref{assumption:global.lipschitz-appendix}.
Then $\temps \mapsto \sol(\temps)$ is differentiable on $[0,1]$, and $\sol$ is the unique solution of Eq.~\eqref{eq:ODE-appendix}.
\end{theorem}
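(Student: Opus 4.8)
The strategy is to show that the map $\temps \mapsto \sol(\temps)$ defined by the minimization problem~\eqref{eq:optim.problem.general-appendix} solves the ODE~\eqref{eq:ODE-appendix}, and conversely that the ODE has a unique solution which must therefore coincide with $\sol$. The first half was essentially sketched in the main text: for each $\temps$, strong convexity (A.\ref{assumption:convex-appendix} plus $\alpha>0$) gives the first-order optimality condition $\nabla\Psilinear(\temps,\sol(\temps)) + \alpha \sol(\temps) = 0$; then one differentiates this identity in $\temps$. To make this rigorous one needs differentiability of $\temps\mapsto \sol(\temps)$, which I would obtain from the implicit function theorem applied to $\Phi(\temps,q)\defeq \nabla\Psilinear(\temps,q) + \alpha q$: the assumption $\Psi\in\mathcal{C}^{1,2}$ makes $\Phi$ of class $\mathcal{C}^{1}$ jointly (since $\nabla\Psi$ is $\mathcal{C}^1$ in $q$ and $\mathcal{C}^1$ in $\temps$ with $\partial_\temps\nabla\Psi$ continuous), and $\partial_q\Phi(\temps,q) = \nabla^2\Psilinear(\temps,q) + \alpha\Identity$ is invertible everywhere because it is positive definite. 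Hence $\sol$ is $\mathcal{C}^1$ and chain rule differentiation of $\Phi(\temps,\sol(\temps))=0$ yields exactly~\eqref{eq:ODE-appendix}.

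\textbf{Uniqueness via Picard--Lindel\"of.} The second half is the more delicate part. Rewrite~\eqref{eq:ODE-appendix} as $\sol'(\temps) = \Philinear(\temps,\sol(\temps))$ with $\Philinear(\temps,q) = -(\nabla^2\Psilinear(\temps,q)+\alpha\Identity)^{-1}\partial_\temps\nabla\Psilinear(\temps,q)$ as in~\eqref{eq:def-philinear}. To invoke a Cauchy--Lipschitz / Picard--Lindel\"of argument I must check that $\Philinear$ is (locally) Lipschitz in $q$, uniformly in $\temps$. This is exactly what A.\ref{assumption:global.lipschitz-appendix} is designed for: on any ball $B_d(0,\radius)$, the numerator $\partial_\temps\nabla\Psilinear$ is bounded by $\supgradient(\radius)$ and is $\lipconstant_2(\radius)$-Lipschitz in $q$; the inverse Hessian-plus-$\alpha\Identity$ has operator norm at most $1/\alpha$ (since the matrix is $\succeq \alpha\Identity$), and its Lipschitz continuity in $q$ follows from that of $\nabla^2\Psilinear$ (controlled by $\lipconstant_1(\radius)$) together with the resolvent identity $A^{-1}-B^{-1} = A^{-1}(B-A)B^{-1}$, giving a Lipschitz constant $\le \lipconstant_1(\radius)/\alpha^2$. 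Combining via the product rule for Lipschitz maps, $\Philinear(\temps,\cdot)$ is Lipschitz on $B_d(0,\radius)$ with a constant depending only on $\radius$, $\alpha$, $\lipconstant_1$, $\lipconstant_2$, $\supgradient$. Local existence and uniqueness of the ODE with initial condition $\sol(0) = \Argmin_q\{F(q)+\tfrac{\alpha}{2}\norm{q}^2\}$ then follows; since $\sol$ constructed in the first half is a $\mathcal{C}^1$ solution defined on all of $[0,1]$ with that same initial value, uniqueness forces the ODE solution to be exactly $\sol$, and in particular the ODE solution does not blow up before $\temps=1$.

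\textbf{Main obstacle.} I expect the bookkeeping around the local-to-global passage to be the fishiest point: Picard--Lindel\"of gives a solution only on a short time interval a priori, so to conclude on all of $[0,1]$ one normally needs either a non-explosion bound or a direct argument. Here the cleanest route is simply to note that $\sol$ from the first half already furnishes a global $\mathcal{C}^1$ solution, so one only needs \emph{uniqueness} of the ODE (not global existence), which the local Lipschitz bound delivers via a Gr\"onwall-type comparison on any subinterval where two solutions are both defined; a standard connectedness/continuation argument then propagates agreement across $[0,1]$. The other mild subtlety is verifying the joint continuity and the measurability/continuity in $\temps$ needed for Carath\'eodory-type hypotheses, but $\Psi\in\mathcal{C}^{1,2}$ handles this directly. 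I would also remark in passing that A.\ref{assumption:global.lipschitz-appendix} provides \emph{uniform} (over all $\radius$) bounds, which is more than needed for this equivalence theorem but will be exploited in the quantitative bounds of the next section; for the present statement the local versions suffice.
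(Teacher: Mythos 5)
Your argument is correct, but it runs the equivalence in the opposite direction from the paper, and it is worth noting the difference. The paper never invokes the implicit function theorem: it \emph{defines} $\sol$ as the solution of the ODE $q'(\temps)=\Phi(\temps,\sol(\temps))$ started at the minimizer $q_0$ of the $\temps=0$ problem, obtains global existence and uniqueness on $[0,1]$ from the $c=0$ case of its Gr\"onwall--Bahouri--Bellman result (Theorem~\ref{lemma:CS-appendix}, fed by Lemmas~\ref{lemma:norm_inverse_hessian-appendix} and~\ref{lemma:lipschitz-Phi-appendix}), and then shows that $\temps\mapsto\nabla\Psi(\temps,\sol(\temps))+\alpha\sol(\temps)$ is conserved along the flow, hence identically zero, so the ODE solution is the minimizer at every time. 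You instead start from the minimizer path, get its differentiability from the implicit function theorem (using that $\nabla^2\Psi+\alpha\Identity\succeq\alpha\Identity$ is everywhere invertible), differentiate the optimality condition to produce the ODE, and then appeal to ODE uniqueness to identify the two. Your route has the advantage that global existence of the ODE is free --- the minimizer path is already defined on all of $[0,1]$ --- so only local uniqueness is needed, exactly as you observe in your ``main obstacle'' paragraph; the paper's route has the advantage of reusing verbatim the machinery (global Lipschitz bound on $\Phi$ via the resolvent identity $(A+\alpha\Identity)^{-1}-(B+\alpha\Identity)^{-1}=-(A+\alpha\Identity)^{-1}(A-B)(B+\alpha\Identity)^{-1}$ and the norm bound $1/\alpha$) that is needed anyway for the quantitative bounds of the next section. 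The analytic ingredients you use are the same as the paper's, just assembled in reverse order, and your remark that A.\ref{assumption:global.lipschitz-appendix} gives more (uniform-in-$\radius$) than the equivalence strictly requires is accurate. Two cosmetic points: you reuse the symbol $\Phi$ for the optimality map $\nabla\Psi+\alpha\,\id$, which collides with the paper's $\Phi$ for the ODE right-hand side, and you occasionally write $\Psilinear$ where the theorem concerns a general $\Psi$; neither affects correctness.
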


Note that under assumption A.\ref{assumption:convex-appendix} the matrix $\nabla^2 \Psi(\temps,q) + \alpha \Identity$ is invertible.  
One can then rewrite Eq.~\eqref{eq:ODE-appendix} in a more standard form, namely
\begin{equation}
\label{eq:ODE-standard-form-appendix}
q'(\temps) = - \left( \nabla^2 \Psi(\temps,q(\temps)) + \alpha \Identity\right)^{-1} \partial_\temps \nabla \Psi(\temps, q(\temps))
\, .
\end{equation}
Thus, to study  the ODE problem, one needs the regularity properties (local Lipschitz continuity, boundedness...) of the function 
\begin{equation}
\label{eq:def-phi-appendix}
\Phi : (\temps,q) \in [0,1]\times \Reals^d  \mapsto \Phi(\temps,q) \defeq - \left( \nabla^2 \Psi(\temps,q) + \alpha \Identity\right)^{-1} \partial_\temps \nabla \Psi(\temps, q)
\, .
\end{equation}
The interplay between $\partial_\temps \nabla \Psi$ and $\nabla^2 \Psi$ here is crucial. 
Indeed, in Section~\ref{sec:doc2vec-appendix} we will see that when specified in the \texttt{doc2vec} case, the term in $\partial_\temps$ gives the desired quantity $\frac{\card{\Indset}}{T}$ whereas the term in $\nabla^2 \Psi$ has to be handled using precise properties on the softmax function. 
Theorem~\ref{th:dynamics-interpolation-appendix} is standard in the ODE literature and holds as soon as the quantities appearing in Eq.~\eqref{eq:ODE-standard-form-appendix} are well-behaved. 
More precisely, this is the case $c=0$ of Theorem~\ref{lemma:CS-appendix} in Section~\ref{sec:gronwall-appendix}.
We now simply check that the assumptions of Theorem~\ref{lemma:CS-appendix} are satisfied in the setting of Theorem~\ref{th:dynamics-interpolation-appendix}. 
This is achieved by Lemma~\ref{lemma:norm_inverse_hessian-appendix} and Lemma~\ref{lemma:lipschitz-Phi-appendix}. 
We start by a result upper bounding the norm of the inverse Hessian.   

\begin{lemma}[Norm of inverse Hessian]
\label{lemma:norm_inverse_hessian-appendix}
Let $\Psi : [0,1]\times \Reals^d \to \Reals$. 
Assume that A.\ref{assumption:convex-appendix} holds. 
Then, 
\begin{equation} 
\label{eq:bound_norm_inverse-appendix}
\forall \temps,q \in [0,1] \times \Reals^d , \qquad    \opnorm{( \nabla^2 \Psi(\temps,q) + \alpha \Identity )^{-1}} \le \frac{1}{\alpha}
\, .
\end{equation}
\end{lemma}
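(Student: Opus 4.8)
The plan is to exploit the positive semi-definiteness of the Hessian guaranteed by Assumption~\ref{assumption:convex-appendix}. Fix $\temps \in [0,1]$ and $q \in \Reals^d$, and write $A \defeq \nabla^2 \Psi(\temps,q)$. By Assumption~\ref{assumption:convex-appendix}, $A$ is symmetric and positive semi-definite, hence diagonalizable with eigenvalues $0 \le \eigen_1 \le \cdots \le \eigen_d$. Then $A + \alpha \Identity$ is symmetric with eigenvalues $\eigen_i + \alpha \ge \alpha > 0$; in particular it is invertible, which justifies writing $(A + \alpha \Identity)^{-1}$. Its inverse is symmetric with eigenvalues $(\eigen_i + \alpha)^{-1}$, each of which is at most $1/\alpha$ since $\eigen_i \ge 0$.

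The second step is to recall that for a symmetric matrix the operator norm equals the largest absolute value of its eigenvalues. Applying this to $(A + \alpha\Identity)^{-1}$ gives
\[
\opnorm{(A + \alpha \Identity)^{-1}} = \max_{1 \le i \le d} \frac{1}{\eigen_i + \alpha} \le \frac{1}{\alpha}
\, ,
\]
which is precisely Eq.~\eqref{eq:bound_norm_inverse-appendix}. Since $\temps$ and $q$ were arbitrary, the bound holds uniformly, as claimed.

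Alternatively, and perhaps more self-containedly, one can argue directly from the variational characterization of the operator norm: for any $v \in \Reals^d$, using $\langle Av, v\rangle \ge 0$,
\[
\norm{(A + \alpha\Identity) v}\,\norm{v} \ge \scalar{(A + \alpha \Identity) v}{v} = \scalar{Av}{v} + \alpha \norm{v}^2 \ge \alpha \norm{v}^2
\, ,
\]
so $\norm{(A+\alpha\Identity)v} \ge \alpha \norm{v}$ for all $v$; substituting $v = (A+\alpha\Identity)^{-1}w$ yields $\norm{w} \ge \alpha \norm{(A+\alpha\Identity)^{-1} w}$ for all $w$, which is the desired inequality.

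There is essentially no obstacle here: the statement is a standard spectral fact, and the only point requiring the hypothesis is the positive semi-definiteness of $\nabla^2\Psi$, which ensures the shift by $\alpha\Identity$ lands all eigenvalues in $[\alpha, \infty)$ rather than merely away from zero. The lemma is included because this uniform bound $1/\alpha$ on $\opnorm{(\nabla^2\Psi + \alpha\Identity)^{-1}}$ is exactly what feeds into the boundedness and Lipschitz estimates on $\Phi$ needed to invoke Theorem~\ref{lemma:CS-appendix}.
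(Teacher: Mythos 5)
Your proof is correct and follows essentially the same route as the paper: diagonalize the symmetric positive semi-definite Hessian, observe that the shift by $\alpha\Identity$ pushes all eigenvalues into $[\alpha,\infty)$, and read off that the largest eigenvalue of the inverse is at most $1/\alpha$. The additional variational argument you give is a nice self-contained alternative but does not change the substance.
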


The proof of Lemma~\ref{lemma:norm_inverse_hessian-appendix} exploits the fact that $\nabla^2 \Psi$ is a non-negative symmetric matrix and can be diagonalized in orthonormal basis with non-negative eigenvalues. 
The regularization of the minimization problem with the addition of the term $\frac{\alpha}{2} \norm{q}$ can be translated with the addition of the term $\alpha \Identity$ to the previous Hessian matrix, which then becomes a positive definite symmetric matrix. 
One then only has to estimate the smallest eigenvalue of the matrix to conclude.

\begin{proof}
By A.\ref{assumption:convex-appendix}, for all $\temps \in [0,1]$,  $q\mapsto \Psi(\temps,q)$ is convex and, for any $\temps,q\in[0,1]\times \Reals^d$, $\nabla^2 \Psi(\temps,q)$ is a positive semi-definite matrix with non-negative eigenvalues. 
From these, $\dimim(\temps,q) = \rank{\nabla^2 \Psi(\temps,q)}$ of them are non-zero, and they can be ranked as
\[
0<\eigen_{1}(\temps,q) \le \cdots \le \eigen_{\dimim(\temps,q)}(\temps,q) 
\, .
\]
Moreover, there exists an orthogonal matrix $P(\temps,q)$ (meaning that $P(\temps,q)P(\temps,q)^\top = \Identity$) such that 
 \[
 P(\temps,q) \nabla^2 \Psi(\temps,q) P(\temps,q)^\top = \Diag(0,\ldots,0,\eigen_{1}(\temps,q),\ldots,\eigen_{\dimim}(\temps,q))
 \, .
 \]
Furthermore since $\nabla^2 \Psi(\temps,q)$ is a symmetric matrix, its range and its kernel are orthogonal complements, $\kernel{\nabla^2 \Psi(\temps,q)} \oplus^{\perp} \range{\nabla^2 \Psi(\temps,q)} = \Reals^d$ and 
\[h\in \range{\nabla^2 \Psi(\temps,q)}\quad \text{if, and only if,} \quad P(\temps,q)h=(0,\ldots,0,h_1,\cdots,h_{\dimim}).\]
 Hence 
 \[
 P(\temps,q) \left( \nabla^2 \Psi(\temps,q) + \alpha \Identity  \right) P(\temps,q)^\top =\Diag(\alpha,\ldots,\alpha,\eigen_{1}(\temps,q)+\alpha,\ldots,\eigen_{\dimim}(\temps,q)+\alpha)
 \, , 
 \]
 which implies that $ \nabla^2 \Psi(\temps,q) + \alpha \Identity $ is an invertible positive definite matrix such that 
 \[
 P(\temps,q)\left( \nabla^2 \Psi(\temps,q) + \alpha \Identity \right)^{-1} P(\temps,q)^\top = \diag{\frac1\alpha,\ldots,\frac1\alpha,\frac{1}{\eigen_{1}(\temps,q)+\alpha},\ldots,\frac1{\eigen_{\dimim}(\temps,q)+\alpha}}
 \, .
 \]
From the last display, one readily sees that the maximum eigenvalue of $\left( \nabla^2 \Psi(\temps,q) + \alpha \Identity \right)^{-1}$ is $1/\alpha$, proving our claim. 
\end{proof}

The next lemma shows how regularity assumptions on $\Psi$ translate into regularity conditions for $\Phi$.  

\begin{lemma}[Global-Lispchitz continuity of $\Phi$]
\label{lemma:lipschitz-Phi-appendix}
    Let $\Psi$ such that  A.\ref{assumption:convex-appendix} and A.\ref{assumption:global.lipschitz-appendix} hold.
    Then $\Phi$ is globally Lipschitz continuous in $q$ uniformly in $\temps \in[0,1]$.
    Moreover, for all $\radius>0$, 
    \begin{equation}
    \label{eq:crude_lip_bound-appendix}
    \sup_{\substack{\temps\in[0,1]\\q\neq \tilde{q} \in \RR^d}} \frac{\norm{\Phi(\temps,q) - \Phi(\temps,\tilde{q})}}{\norm{q-\tilde{q}}} 
    \le 
    \frac{1}{\alpha}\left(\sup_{\rho>0} \lipconstant_2(\radius)+\frac{\big(\sup_{\rho>0} \lipconstant_1(\radius)\big) \big(\sup_{\rho>0}\supgradient(\radius)\big)}{\alpha}\right)
    \, .
    \end{equation}
\end{lemma}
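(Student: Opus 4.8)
The plan is to write $\Phi(\temps,q) = -A(\temps,q)^{-1} b(\temps,q)$ where $A(\temps,q) \defeq \nabla^2 \Psilinear(\temps,q) + \alpha \Identity$ and $b(\temps,q) \defeq \partial_\temps \nabla \Psilinear(\temps,q)$, and to estimate the difference $\Phi(\temps,q) - \Phi(\temps,\tilde q)$ for fixed $\temps$ by a standard add-and-subtract trick. First I would write
\[
\Phi(\temps,q) - \Phi(\temps,\tilde q) = -A(\temps,q)^{-1} b(\temps,q) + A(\temps,\tilde q)^{-1} b(\temps,\tilde q),
\]
and decompose this as
\[
-A(\temps,q)^{-1}\bigl(b(\temps,q) - b(\temps,\tilde q)\bigr) + \bigl(A(\temps,\tilde q)^{-1} - A(\temps,q)^{-1}\bigr) b(\temps,\tilde q).
\]
The first term is controlled by $\opnorm{A(\temps,q)^{-1}} \le 1/\alpha$ (Lemma~\ref{lemma:norm_inverse_hessian-appendix}) together with the Lipschitz bound $\norm{b(\temps,q) - b(\temps,\tilde q)} \le \lipconstant_2(\radius)\norm{q-\tilde q}$ valid for $q,\tilde q \in B_d(0,\radius)$, which by A.\ref{assumption:global.lipschitz-appendix} may be replaced by $\sup_{\rho>0}\lipconstant_2(\rho)$ (taking $\radius$ large enough to contain both points; this is where the global, radius-free finiteness assumption is used).

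For the second term I would use the resolvent-type identity $A(\temps,\tilde q)^{-1} - A(\temps,q)^{-1} = A(\temps,\tilde q)^{-1}\bigl(A(\temps,q) - A(\temps,\tilde q)\bigr)A(\temps,q)^{-1}$, noting that $A(\temps,q) - A(\temps,\tilde q) = \nabla^2\Psilinear(\temps,q) - \nabla^2\Psilinear(\temps,\tilde q)$ since the $\alpha\Identity$ cancels. Hence its operator norm is at most $\tfrac{1}{\alpha}\lipconstant_1(\radius)\norm{q-\tilde q}\tfrac{1}{\alpha}$ by two applications of Lemma~\ref{lemma:norm_inverse_hessian-appendix} and the Lipschitz bound on the Hessian, and $\norm{b(\temps,\tilde q)} \le \supgradient(\radius)$ by Eq.~\eqref{eq:gradient.norm-appendix}. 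Again passing to suprema over $\radius>0$ under A.\ref{assumption:global.lipschitz-appendix}, this term is bounded by $\tfrac{1}{\alpha^2}\bigl(\sup_{\rho>0}\lipconstant_1(\rho)\bigr)\bigl(\sup_{\rho>0}\supgradient(\rho)\bigr)\norm{q-\tilde q}$. Adding the two contributions and factoring out $1/\alpha$ yields exactly the bound~\eqref{eq:crude_lip_bound-appendix}, and since the right-hand side is finite and independent of $\temps$, this simultaneously establishes that $\Phi$ is globally Lipschitz in $q$ uniformly in $\temps$.

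The only mild subtlety — and the one place to be careful rather than a genuine obstacle — is the quantifier juggling around the local semi-norms: the Lipschitz estimates $\lipconstant_1, \lipconstant_2$ and the bound $\supgradient$ are \emph{a priori} only available on balls $B_d(0,\radius)$, so for a fixed pair $q,\tilde q \in \Reals^d$ one must first choose $\radius \ge \norm{q}\vee\norm{\tilde q}$, apply the bounds on $B_d(0,\radius)$, and only then take $\sup_{\rho>0}$ to obtain a constant uniform over all of $\Reals^d$; this is legitimate precisely because A.\ref{assumption:global.lipschitz-appendix} guarantees these suprema are finite. Everything else is routine submultiplicativity of the operator norm and the triangle inequality; no convexity beyond the invertibility and norm bound already packaged in Lemma~\ref{lemma:norm_inverse_hessian-appendix} is needed.
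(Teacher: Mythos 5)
Your proof is correct and follows essentially the same route as the paper: the same add-and-subtract decomposition combined with the resolvent identity (the paper's Eq.~\eqref{eq:product-trick}), the bound $\opnorm{(\nabla^2\Psi+\alpha\Identity)^{-1}}\le 1/\alpha$ from Lemma~\ref{lemma:norm_inverse_hessian-appendix}, and the local semi-norms followed by a supremum over $\radius$. The only (immaterial) difference is that you attach the difference-of-inverses term to $b(\temps,\tilde q)$ and the single inverse to $b(\temps,q)-b(\temps,\tilde q)$, whereas the paper does the symmetric split; the resulting bound is identical.
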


The proof of Lemma~\ref{lemma:lipschitz-Phi-appendix} relies on the following identity, which is true for any non-negative symmetric matrices $A,B \in \Reals^{d \times d}$ and vectors $X,Y\in \Reals^d$:
\begin{equation}
\label{eq:product-trick}
(A+\alpha \Identity)^{-1}X - (B+\alpha \Identity)^{-1}Y  = -(A+\alpha \Identity)^{-1} (A-B) (B+\alpha \Identity)^{-1} X + (B+\alpha \Identity)^{-1}(X-Y)
\, .
\end{equation}
Lemma~\ref{lemma:norm_inverse_hessian-appendix} allows us to conclude.

\begin{proof}
Let $q,\tilde{q} \in B_d(0,\radius)$.
Using Eq.~\eqref{eq:product-trick}, we have 
    \begin{align}
    \Phi(\temps,q) - \Phi(\temps,\tilde{q}) = & -\left( (\nabla^2 \Psi(\temps,q) + \alpha I )^{-1} - (\nabla^2 \Psi(\temps,\tilde{q}) + \alpha I )^{-1} \right) \partial_\temps \nabla \Psi(\temps,q) \nonumber\\
    & - (\nabla^2 \Psi(\temps,\tilde{q}) + \alpha I )^{-1} \left(\partial_\temps \nabla \Psi(\temps,q)-\partial_\temps \nabla \Psi(\temps,\tilde{q})\right) \nonumber \\
    =& -(\nabla^2 \Psi(\temps,q) + \alpha I )^{-1} \left(\nabla^2 \Psi(\temps,\tilde{q})-\nabla^2 \Psi(\temps,q)\right) (\nabla^2 \Psi(\temps,\tilde{q}) + \alpha I )^{-1} \partial_\temps \nabla \Psi(\temps,q)\label{eq:local_lip_bound_1-appendix} \\
    & - (\nabla^2 \Psi(\temps,\tilde{q}) + \alpha I )^{-1} \left(\partial_\temps \nabla \Psi(\temps,q)-\partial_\temps \nabla \Psi(\temps,\tilde{q})\right)
    \, .
    \label{eq:local_lip_bound_2-appendix}
    \end{align}
Taking the norm and using Lemma~\ref{lemma:norm_inverse_hessian-appendix} (in particular Inequality~\eqref{eq:bound_norm_inverse-appendix}), we have for $\rho=\norm{q}\vee\norm{\tilde{q}}$,
    \begin{align*}
        \norm{\Phi(\temps,q) - \Phi(\temps,\tilde{q})} 
            \le &
        \frac{1}{\alpha^2} 
            \opnorm{ \nabla^2 \Psi(\temps,q)-\nabla^2 \Psi(\temps,\tilde{q})} 
            \norm{\partial_\temps \nabla \Psi(\temps,q)} \\
            & + \frac1\alpha \norm{ \partial_\temps \nabla \Psi(\temps,q)-\partial_\temps \nabla \Psi(\temps,\tilde{q})} \\
\norm{\Phi(\temps,q) - \Phi(\temps,\tilde{q})}  \le & \frac{1}{\alpha} \left(\frac{\lipconstant_1(\radius) \supgradient(\radius)}{\alpha} + \lipconstant_2(\radius)\right)\norm{q-\tilde{q}}
\, .
\end{align*}
Taking the supremum for $\temps\in [0,1]$, $q\neq \tilde{q}$ belonging to $B_d(0,\rho)$ and $\rho>0$  yields the claim. 
\end{proof}

We now have all the tools to prove Theorem~\ref{th:dynamics-interpolation-appendix}.

\begin{proof}[Proof of Theorem~\ref{th:dynamics-interpolation-appendix}]
Note that in that setting, using Lemma~\ref{lemma:norm_inverse_hessian-appendix}, for all $\temps\in[0,1]$,  $q\mapsto \Psi(\temps,q)+ \frac{\alpha}{2}\norm{q}^2$ is a strongly convex function and has a unique minimum, which is also the unique critical point of the gradient  $q \mapsto \nabla \Psi(\temps,q) + \alpha q$. Let $q_0\in \Reals^d$ be  such that
\[
\{q_0\} = \Argmin \Psi(0,q) + \frac{\alpha}{2} \norm{q}^2
\, .
\]
Thanks to Lemma~\ref{lemma:lipschitz-Phi-appendix}, $\Phi$ satisfies the hypothesis of Theorem~\ref{lemma:CS-appendix}, with
\begin{equation}\label{eq:Gronwall.classical-appendix}
a = \frac{1}{\alpha} \sup_{\rho>0}\supgradient(\rho),
\quad b =  \frac{1}{\alpha} \left(  \frac{\sup_{\rho>0} \lipconstant_1(\rho)\sup_{\rho>0}\supgradient(\rho)}{\alpha} + \sup_{\rho>0}\lipconstant_2(\rho)\right), \quad \text{and} \quad c = 0.
\end{equation}

Let $\temps \in [0,1] \to \sol(\temps)$ be the unique solution up to time $1$ to the ODE
\[
q'(\temps) = -\left(\nabla^2 \Psi(\temps, \sol(\temps))+ \alpha \Identity\right)^{-1} \partial_\mu \nabla \Psi(\temps,\sol(\temps)) = \Phi(\temps,\sol(\temps)),\quad \sol(0) = q_0
\, .
\]
According to Theorem~\ref{lemma:CS-appendix} applied to $\Lambda=\Phi$, it exists and is well-defined up until $\temps=1$. 

Remark that when differentiating in $\temps\in[0,1]$ the function $\temps \mapsto \nabla \Psi(\temps,\sol(\temps)) + \alpha \sol(\temps)$, we have 
\[
\left(\nabla^2 \Psi(\temps,\sol(\temps)) + \alpha \Identity\right) \sol'(\temps) + \partial_\temps \nabla \Psi(\temps,\sol(\temps)) = \left(\nabla^2 \Psi(\temps,\sol(\temps)) + \alpha \Identity\right)\left(\sol'(\temps) - \Phi(\temps,\sol(\temps))\right) = 0
\, . 
\]
Hence 
\[
\nabla \Psi(\temps,\sol(\temps)) + \alpha \sol(\temps)  = \nabla \Psi(0,\sol(0)) + \alpha \sol(0) = 0
\, . 
\]
Thus, for any $\temps \in [0,1]$, 
\[
\{\sol(\temps)\} = \Argmin \left\{\Psi(\temps,q) + \frac{\alpha}{2} \norm{q}^2\right\}
\, ,
\]
which is the promised result. 
\end{proof}

\begin{remark}[Crude bounds under mild assumptions]
\label{rem:crude-bounds-appendix}
Using the same standard result (condition $c=0$ in Theorem~\ref{lemma:CS-appendix}) could naturally give us some crude bounds on $\norm{q(\temps) - q(0)} $, relying only on assumptions A.\ref{assumption:convex-appendix} and A.\ref{assumption:global.lipschitz-appendix}. 
More precisely, these bounds would strongly depend on $\alpha$ and improve as $\alpha \to \infty$. Namely, using Eq.~\eqref{eq:Gronwall.classical-appendix} and Theorem~\ref{th:dynamics-interpolation-appendix} one have for all $\temps \in [0,1]$,
\[
\norm{q(\temps) - q(0)} \le \frac{\temps}{\alpha} \cdot \sup_{\rho} \supgradient(\rho) \cdot \exp{\frac{1}{\alpha} \left(\frac{1}{\alpha} \big(\sup_{\rho>0}\lipconstant_1(\rho)\big)\big(\sup_{\rho} \supgradient(\rho)\big)+\sup_{\rho>0}\lipconstant_1(\rho)\big)\right)\temps}
\, .
\]
This is not the regime we aim at, since $\alpha$ is a small, fixed regularization constant whose role is simply to ensure that the minimization problem is well-posed. 
\end{remark}


\section{Quantitative bounds on the trajectory}
\label{sec:quantitative-bounds-appendix}

Let us recall that $q$ is the minimizer of the interpolated problem \eqref{eq:optim.problem.general-appendix}. 
In the previous section, we have made two assumptions (A.\ref{assumption:convex-appendix} and A.\ref{assumption:global.lipschitz-appendix}), guaranteeing that $q$ is well-defined and is the unique solution to the ODE \eqref{eq:ODE-appendix}. 
In this section, we show how to obtain quantitative bounds on $\norm{q(0)-q(\temps)}$ by studying the ODE \eqref{eq:ODE-appendix}. 
To derive these bounds, we now make two additional assumptions on $\Psi$. 
The first one is an algebraic assumption which greatly improves the computations. 

\begin{assumption}[Common kernel]
\label{assumption:linear-appendix}
We assume that there exists a fixed subspace $\subspace \subset \Reals^d$ such that $\dimension \subspace = \dimim$ and for all $(\temps,q)\in[0,1]\times\Reals^d$
\[ 
\kernel{\nabla^2 \Psi(\temps,q)} = \subspace^\perp, \quad \range{\nabla^2 \Psi(\temps,q)} = \subspace,
\quad \text{ and } \quad \partial_\temps \nabla \Psi(\temps,q) \in \subspace
\, .
\]
\end{assumption} 

The second one is a refined local-Lipschitz assumption (a quantitative version of A.\ref{assumption:global.lipschitz-appendix}), which will allow us to use the case $c\neq0$ in the Gronwall-Bahouri-Bellman type result Theorem~\ref{theorem:gronwall_doc2vec-appendix}. 

\begin{assumption}[quantitative (local)-Lipschitz continuity]
\label{assumption:local_lipschitz-appendix}
Recall $\lipconstant_1$ and $\lipconstant_2$ from Definition~\ref{def:local-lipschitz-continuous}, and $\supgradient$ from Eq.~\eqref{eq:gradient.norm-appendix}. 
For any $\temps,q$, define $\eigen_1(\temps,q)$ the smallest positive eigenvalue of $\nabla^2 \Psi(\temps,q)$. 
For any $\rho >0$, define
\[
w_{-1}(\rho) \defeq \inf_{\substack{ \temps \in [0,1] \\ q\in B_d(0,\rho)}} \eigen_{1}(\temps,q)
\, .
\]
We assume that there exist positive constants $(\Cst_i)_{i\in{-1,\ldots,2}}$ and non negative constants $(\cst_i)_{i\in{-1,\ldots,2}}$, such that for all $\radius>0$, 
\[
\lipconstant_1(\rho)\le \Cst_1 \exps{\cst_1 \rho}
\, , \quad 
\lipconstant_1(\rho)\le \Cst_2 \exps{\cst_2 \rho}
\, , \quad 
\supgradient(\radius)\le \Cst_0 \exps{\cst_0 \rho}
\, ,
\]
and
\[
w_{-1}(\rho) \ge \frac{1}{\Cst{-1}}\exps{-\cst_{-1}\rho}
\, .
\]
\end{assumption}

Under these stronger assumptions, we can obtain the following:

\begin{theorem}[Quantitative bounds on the trajectory]
\label{theorem:minimizer.quantitative-appendix}
Assume that $\Psi$ satisfies A.\ref{assumption:convex-appendix}, A.\ref{assumption:linear-appendix}, and A.\ref{assumption:local_lipschitz-appendix}.
Suppose furthermore that 
\begin{equation}
\label{eq:final-appendix}
4 \Cst_{-1} (\Cst_{0}\Cst_{-1}\Cst_1+\Cst_2) < \exp{- 2 \big((\cst_{-1} + \cst_{0} + \cst_{1})\vee \cst_2\big) \left(\norm{q_0} + \Cst_{-1} \Cst_0 \exps{(\cst_{-1} + \cst_{0} + \cst_{1})\vee \cst_2 \norm{q_0}}\right)}
\, .
\end{equation}
Then $\temps \mapsto \sol(\temps)$ is differentiable on $[0,1]$, it is the unique solution of Eq.~\eqref{eq:ODE-appendix} and furthermore
\[
\forall \temps \in [0,1], \qquad \norm{q(\mu) - q_0} \le 2 \temps \Cst_{-1} \Cst_{0} \exps{(\sum_{i=-1}^2 \cst_i) \norm{q_0}}
\, .
\]
\end{theorem}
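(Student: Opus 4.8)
The plan is to derive Theorem~\ref{theorem:minimizer.quantitative-appendix} by feeding the ODE~\eqref{eq:ODE-standard-form-appendix} into the general Gr\"onwall--Bellman--Bahouri result (Theorem~\ref{lemma:CS-appendix}) in the nontrivial regime $c\neq 0$. The first step is to record that, under A.\ref{assumption:convex-appendix}, Theorem~\ref{th:dynamics-interpolation-appendix} already tells us that $q$ is well-defined, differentiable, and solves $q'(\temps)=\Phi(\temps,q(\temps))$ with $q(0)=q_0$; so it only remains to produce the quantitative bound on $\norm{q(\temps)-q_0}$. For this I would bound $\norm{\Phi(\temps,q)}$ and its local Lipschitz constant in $q$ \emph{with explicit exponential-in-$\radius$ dependence}. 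Using Lemma~\ref{lemma:norm_inverse_hessian-appendix} one gets $\norm{\Phi(\temps,q)}\le \frac1\alpha\supgradient(\radius)$, but that is the crude bound of Remark~\ref{rem:crude-bounds-appendix}; the improvement is that A.\ref{assumption:linear-appendix} forces $\partial_\temps\nabla\Psi(\temps,q)\in\subspace=\range{\nabla^2\Psi(\temps,q)}$, so the relevant inverse acts only on $\subspace$, where the smallest eigenvalue of $\nabla^2\Psi+\alpha\Identity$ is $\eigen_1(\temps,q)+\alpha\ge \eigen_1(\temps,q)\ge w_{-1}(\radius)\ge \Cst_{-1}^{-1}\exps{-\cst_{-1}\radius}$. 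Hence $\norm{\Phi(\temps,q)}\le \Cst_{-1}\exps{\cst_{-1}\radius}\cdot\Cst_0\exps{\cst_0\radius}=\Cst_{-1}\Cst_0\exps{(\cst_{-1}+\cst_0)\radius}$ for $\norm{q}\le\radius$, and similarly, reusing the identity~\eqref{eq:product-trick} as in the proof of Lemma~\ref{lemma:lipschitz-Phi-appendix} but with $w_{-1}$ in place of $\alpha$ on the $\subspace$-part, the local Lipschitz constant of $\Phi(\temps,\cdot)$ on $B_d(0,\radius)$ is at most $\Cst_{-1}\bigl(\Cst_0\Cst_{-1}\Cst_1+\Cst_2\bigr)\exps{(\cst_{-1}+\cst_0+\cst_1\vee\cst_2\cdots)\radius}$, i.e. of the form $(b+c\,\text{stuff})$ in the language of Theorem~\ref{lemma:CS-appendix}.

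The second step is to match these bounds to the hypotheses of Theorem~\ref{lemma:CS-appendix}: identify the constant term $a$ (the size of $\Phi$ at $\radius=\norm{q_0}$, roughly $\Cst_{-1}\Cst_0\exps{(\cst_{-1}+\cst_0)\norm{q_0}}$), the linear-growth coefficient, and the exponential-growth coefficient $c$ coming from the $\exps{\cst_i\radius}$ factors. Condition~\eqref{eq:final-appendix} is precisely the smallness condition that the Gr\"onwall--Bellman--Bahouri lemma needs to guarantee non-explosion up to $\temps=1$: it says that the exponential-growth contribution $4\Cst_{-1}(\Cst_0\Cst_{-1}\Cst_1+\Cst_2)$ is dominated by $\exps{-2(\text{rate})(\norm{q_0}+\text{initial displacement budget})}$, where the ``initial displacement budget'' $\Cst_{-1}\Cst_0\exps{(\cdots)\norm{q_0}}$ is exactly what one expects $\norm{q(\temps)-q_0}$ to be on the order of. So I would plug $a,b,c$ into Theorem~\ref{lemma:CS-appendix}, check its hypothesis reduces to~\eqref{eq:final-appendix}, and read off its conclusion, which gives $\norm{q(\temps)-q_0}\le 2\temps\,\Cst_{-1}\Cst_0\exps{(\sum_{i=-1}^2\cst_i)\norm{q_0}}$; a final harmless step is to relax the rate $(\cst_{-1}+\cst_0+\cst_1)\vee\cst_2$ appearing naturally in the estimate to $\sum_{i=-1}^2\cst_i$ so that the statement is clean.

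The main obstacle I anticipate is \emph{bootstrapping the $\radius$ in the local bounds}: all of $\norm{\Phi}\le\Cst_{-1}\Cst_0\exps{(\cdots)\radius}$ and the Lipschitz bound hold only on $B_d(0,\radius)$, yet to know $q(\temps)$ stays in such a ball we must already control $\norm{q(\temps)}\le\norm{q_0}+\norm{q(\temps)-q_0}$, which is what we are trying to bound. This is exactly the circularity that Theorem~\ref{lemma:CS-appendix} is designed to break via a continuity/maximal-interval argument: one shows the set of times where $\norm{q(\temps)-q_0}$ stays below the target is open, closed, and nonempty, using that whenever the bound holds the differential inequality $\norm{q'(\temps)}\le (\text{expression in }\norm{q(\temps)})$ can be integrated to re-derive a \emph{strictly better} bound, with~\eqref{eq:final-appendix} ensuring the feedback loop converges rather than diverges. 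Since that self-improving mechanism is entirely encapsulated in Theorem~\ref{lemma:CS-appendix}, the work here is purely to verify its hypotheses with the explicit constants; no genuinely new estimate is needed beyond the $\subspace$-restricted eigenvalue bound from A.\ref{assumption:linear-appendix}. The remaining steps (invoking Theorem~\ref{th:dynamics-interpolation-appendix} for existence/uniqueness, and the $\vee$-to-$\sum$ slackening) are routine.
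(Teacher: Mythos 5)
Your proposal matches the paper's proof: the paper verifies the hypotheses of Theorem~\ref{lemma:CS-appendix} via Lemma~\ref{lemma:local.lipschitz-Phi-appendix} (the $\subspace$-restricted eigenvalue bound, with $w_{-1}$ replacing $\alpha$), sets $a=\Cst_{-1}\Cst_0$, $b=2\Cst_{-1}(\Cst_0\Cst_{-1}\Cst_1+\Cst_2)$, $c=(\cst_{-1}+\cst_0+\cst_1)\vee\cst_2$, identifies~\eqref{eq:final-appendix} with the non-explosion condition of that theorem, and reads off the bound, exactly as you describe. The only nuance is that existence and differentiability cannot be imported directly from Theorem~\ref{th:dynamics-interpolation-appendix} (which requires the global Lipschitz assumption A.\ref{assumption:global.lipschitz-appendix}, not assumed here); they come from Theorem~\ref{lemma:CS-appendix} itself under the smallness condition, after which one repeats the argument identifying the ODE solution with the minimizer --- a point your discussion of the bootstrapping effectively acknowledges.
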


The proof of Theorem~\ref{theorem:minimizer.quantitative-appendix} follows the same path as the proof of Theorem~\ref{th:dynamics-interpolation-appendix}, with analogues of Lemmas \ref{lemma:norm_inverse_hessian-appendix} and \ref{lemma:lipschitz-Phi-appendix}. 
The crucial differences come from the fundamental use of A.\ref{assumption:linear-appendix}, which somehow allows us to diagonalize the Hessian $\nabla^2 \Psi$ for all $\temps,q$, and thus allows is to use estimates on the smallest positive eigenvalue of the Hessian. 
In practical cases, this assumption will not allow us to use global-Lipchitz estimates. 
We therefore introduce A.\ref{assumption:local_lipschitz-appendix} to deal with that. 
These two ingredients allow us to use the case $c>0$ in the Gr\"onwall-Bahouri-Bellman type lemma (Theorem~\ref{lemma:CS-appendix}).

The following Lemma gives an improve bounds for the norm of the inverse of the Hessian, using the algebraic requirement on the Hessian. Its proof is similar to the proof of Lemma~\ref{lemma:norm_inverse_hessian-appendix}, and we only point out how to modify it. 

\begin{lemma}[Quantitative norm of inverse Hessian]
\label{lemma:quantitative.norm_inverse_hessian-appendix}
Let $\Psi : [0,1]\times \Reals^d \to \Reals$. 
Assume that A.\ref{assumption:convex-appendix} and A.\ref{assumption:linear-appendix} hold. 
Then 
\begin{equation} \label{eq:bound_restriction-appendix}  \opnorm{
            \restriction{(\nabla^2 \Psi(\temps,q) + \alpha \Identity )^{-1}}{\range{\nabla^2 \Psi(\temps,q)}}
        }
     \le \frac1{\eigen_{1}(\temps,q)}
    \, ,
\end{equation}
where $f|_\subspace$ denotes the restriction of $f$ to the set $\subspace$.
\end{lemma}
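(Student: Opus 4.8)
The plan is to mimic the proof of Lemma~\ref{lemma:norm_inverse_hessian-appendix}, but to exploit A.\ref{assumption:linear-appendix} so as to control not the full inverse $(\nabla^2\Psi(\temps,q)+\alpha\Identity)^{-1}$ but its restriction to the range $\range{\nabla^2\Psi(\temps,q)}=\subspace$. First I would fix $(\temps,q)\in[0,1]\times\Reals^d$, write $H\defeq\nabla^2\Psi(\temps,q)$, and invoke A.\ref{assumption:convex-appendix} to diagonalize $H$ in an orthonormal basis: there is an orthogonal matrix $P=P(\temps,q)$ with
\[
P H P^\top = \Diag(0,\ldots,0,\eigen_1(\temps,q),\ldots,\eigen_{\dimim}(\temps,q))
\, ,
\]
the zeros occupying the $d-\dimim$ coordinates spanning $\kernel{H}=\subspace^\perp$ (here A.\ref{assumption:linear-appendix} guarantees $\dimim$ is constant and that the kernel/range split is the fixed one $\subspace^\perp\oplus^\perp\subspace$). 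Adding $\alpha\Identity$ and inverting, exactly as in the proof of Lemma~\ref{lemma:norm_inverse_hessian-appendix}, gives
\[
P(H+\alpha\Identity)^{-1}P^\top=\diag{\tfrac1\alpha,\ldots,\tfrac1\alpha,\tfrac1{\eigen_1(\temps,q)+\alpha},\ldots,\tfrac1{\eigen_{\dimim}(\temps,q)+\alpha}}
\, .
\]

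Next I would observe that the operator norm of the restriction of $(H+\alpha\Identity)^{-1}$ to $\range{H}=\subspace$ equals the largest of the diagonal entries \emph{indexed by the range coordinates}, namely $\max_{1\le i\le\dimim}\bigl(\eigen_i(\temps,q)+\alpha\bigr)^{-1}=\bigl(\eigen_1(\temps,q)+\alpha\bigr)^{-1}$, since $\eigen_1$ is the smallest positive eigenvalue. To make this precise I would take any $h\in\subspace$: then $Ph$ has nonzero entries only in the last $\dimim$ coordinates, so $(H+\alpha\Identity)^{-1}h$ lies again in $\subspace$ and $\norm{(H+\alpha\Identity)^{-1}h}\le(\eigen_1(\temps,q)+\alpha)^{-1}\norm{h}$, with equality attained along the eigenvector for $\eigen_1$. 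Finally, bounding crudely $\eigen_1(\temps,q)+\alpha\ge\eigen_1(\temps,q)$ (using $\alpha>0$) yields $\opnorm{\restriction{(H+\alpha\Identity)^{-1}}{\range{H}}}\le 1/\eigen_1(\temps,q)$, which is Eq.~\eqref{eq:bound_restriction-appendix}.

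There is essentially no obstacle here: the only subtlety is the bookkeeping that A.\ref{assumption:linear-appendix} makes the kernel and range $(\temps,q)$-independent, so that "restriction to $\range{H}$" is a meaningful, basis-compatible operation and the range coordinates in the diagonalization above are precisely the ones carrying the eigenvalues $\eigen_i(\temps,q)$. Once that is noted, the statement is an immediate consequence of the diagonalization, exactly as in Lemma~\ref{lemma:norm_inverse_hessian-appendix}, and I would simply remark that the proof is identical except for reading off the maximal diagonal entry over the range block rather than over all of $\Reals^d$ (which would have given $1/\alpha$, the bound of Lemma~\ref{lemma:norm_inverse_hessian-appendix}, recovered here when one does not restrict). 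The point of keeping $1/\eigen_1(\temps,q)$ rather than $1/\alpha$ is that, combined with the lower bound on $\eigen_1$ from A.\ref{assumption:local_lipschitz-appendix}, it is what ultimately produces the $\card{\Indset}/T$ scaling in Theorem~\ref{th:bounded-traj} rather than an $\alpha$-dependent constant.
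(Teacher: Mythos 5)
Your proof is correct and follows essentially the same route as the paper's: diagonalize the Hessian in an orthonormal basis, read off the diagonal of $(\nabla^2\Psi+\alpha\Identity)^{-1}$, and observe that restricting to $\range{\nabla^2\Psi(\temps,q)}=\subspace$ replaces the overall maximum $1/\alpha$ by $1/(\eigen_1(\temps,q)+\alpha)\le 1/\eigen_1(\temps,q)$. Your extra bookkeeping (that A.\ref{assumption:linear-appendix} makes the kernel/range split fixed, and that the inverse maps $\subspace$ into $\subspace$) is exactly what the paper leaves implicit, so there is nothing to add.
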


\begin{proof}
    Remind that from the proof of Lemma~\ref{lemma:norm_inverse_hessian-appendix}, for all $(q,\temps)\in \RR^d \times [0,1]$, we have
     \[
 P(\temps,q)\left( \nabla^2 \Psi(\temps,q) + \alpha \Identity \right)^{-1} P(\temps,q)^\top = \diag{\frac1\alpha,\ldots,\frac1\alpha,\frac{1}{\eigen_{1}(\temps,q)+\alpha},\ldots,\frac1{\eigen_{\dimim}(\temps,q)+\alpha}}
 \, .
 \]
Assuming that A.\ref{assumption:linear-appendix} holds, we have for all $(\temps,q)\in [0,1]\times \Reals^d$, $\dimim(\temps,q) = \dimim$. 
Restricting to $\subspace$, we see readily that the largest eigenvalue becomes $1/(\alpha+\eigen_1(\temps,q))$.
\end{proof}

Here again, by using the algebraic requirements on $\Psi$ and the local-Lipshcitz bound we are able to derive a local-Lipschitz continuity result for $\Phi$. Here again, the proof is quite similar to the one of Lemma~\ref{lemma:local.lipschitz-Phi-appendix}. 
\begin{lemma}[Local-Lispchitz continuity of $\Phi$]
\label{lemma:local.lipschitz-Phi-appendix}
Let $\Psi$ such that  A.\ref{assumption:convex-appendix}, and A.\ref{assumption:linear-appendix} hold. 
Then $\Phi$ is locally-Lipschitz continuous in $q$ uniformly in $\temps \in[0,1]$. 
More precisely, for all $q,\tilde{q}\in \Reals^d$ and all $\temps \in [0,1]$;
\[
\norm{\Phi(\temps,q) - \Phi(\temps,\tilde{q})} \le \frac{1}{w_{-1}(\norm{\tilde{q}})} \left(\frac{\lipconstant_1(\norm{q} \vee \norm{\tilde{q}} )\supgradient(\norm{q})}{w_{-1}(\norm{q})} + \lipconstant_2(\norm{q} \vee \norm{\tilde{q}} ) \right)\norm{q-\tilde{q}}
\, .
\]
If additionally A.\ref{assumption:local_lipschitz-appendix} holds, we get 
\begin{equation}
\label{eq:condition.gamma-appendix}
\norm{\Phi(\temps,q) - \Phi(\temps,\tilde{q})} \le 2\Cst_{-1} (\Cst_{0}\Cst_{-1}\Cst_1+\Cst_2) \exps{\big((\cst_{-1} + \cst_{0} + \cst_{1})\vee \cst_2\big) \norm{q} \vee \norm{\tilde{q}}} \norm{q-\tilde{q}}
\, ,
\end{equation}
and
\[
\norm{\Phi(\temps,q)}\le \Cst_{-1} \Cst_{0} \exps{(\cst_{-1}+\cst_{0}) \norm{q}}
\, .
\]
\end{lemma}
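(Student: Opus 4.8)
The plan is to follow the proof of Lemma~\ref{lemma:lipschitz-Phi-appendix} almost verbatim, but to replace every invocation of the crude resolvent estimate $\opnorm{(\nabla^2\Psi+\alpha\Identity)^{-1}}\le 1/\alpha$ of Lemma~\ref{lemma:norm_inverse_hessian-appendix} by its sharpened, range-restricted counterpart $\opnorm{\restriction{(\nabla^2\Psi(\temps,q)+\alpha\Identity)^{-1}}{\subspace}}\le 1/\eigen_1(\temps,q)$ from Lemma~\ref{lemma:quantitative.norm_inverse_hessian-appendix}. What licenses this swap is precisely assumption A.\ref{assumption:linear-appendix}: since $\range{\nabla^2\Psi(\temps,q)}=\subspace$ and $\partial_\temps\nabla\Psi(\temps,q)\in\subspace$ for \emph{every} $(\temps,q)$, with $\subspace$ a \emph{fixed} subspace, all the vectors produced during the computation stay inside $\subspace$, where the inverse Hessian is governed by $1/\eigen_1\le 1/w_{-1}(\cdot)$ rather than by $1/\alpha$.

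Concretely, I fix $\temps\in[0,1]$ and $q,\tilde q\in\Reals^d$, and set $A=\nabla^2\Psi(\temps,q)$, $B=\nabla^2\Psi(\temps,\tilde q)$, $X=\partial_\temps\nabla\Psi(\temps,q)$, $Y=\partial_\temps\nabla\Psi(\temps,\tilde q)$, so that $\Phi(\temps,q)-\Phi(\temps,\tilde q)=-\big[(A+\alpha\Identity)^{-1}X-(B+\alpha\Identity)^{-1}Y\big]$. Expanding via the algebraic identity Eq.~\eqref{eq:product-trick} produces the two terms of Eq.~\eqref{eq:local_lip_bound_1-appendix}--\eqref{eq:local_lip_bound_2-appendix}. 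By A.\ref{assumption:linear-appendix}, $X,Y\in\subspace$ and hence $X-Y\in\subspace$; moreover $A-B$ has range contained in $\subspace$ (and kernel containing $\subspace^\perp$), so that $(A-B)(B+\alpha\Identity)^{-1}X\in\subspace$ as well. Therefore each application of an inverse Hessian hits a vector of $\subspace$ and, by Lemma~\ref{lemma:quantitative.norm_inverse_hessian-appendix}, contributes a factor $1/\eigen_1(\temps,q)\le 1/w_{-1}(\norm q)$ for $A$ and $1/\eigen_1(\temps,\tilde q)\le 1/w_{-1}(\norm{\tilde q})$ for $B$; the intermediate factor obeys $\opnorm{A-B}\le\lipconstant_1(\norm q\vee\norm{\tilde q})\norm{q-\tilde q}$ by Definition~\ref{def:local-lipschitz-continuous}, while $\norm X\le\supgradient(\norm q)$ and $\norm{X-Y}\le\lipconstant_2(\norm q\vee\norm{\tilde q})\norm{q-\tilde q}$. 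Collecting the two estimates gives exactly the first displayed inequality of the lemma.

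For the closed-form version, I substitute the exponential envelopes of A.\ref{assumption:local_lipschitz-appendix} --- $\lipconstant_1(\rho)\le\Cst_1\exps{\cst_1\rho}$, $\lipconstant_2(\rho)\le\Cst_2\exps{\cst_2\rho}$, $\supgradient(\rho)\le\Cst_0\exps{\cst_0\rho}$, and $1/w_{-1}(\rho)\le\Cst_{-1}\exps{\cst_{-1}\rho}$ --- into the inequality just obtained, evaluated at $\rho=\norm q\vee\norm{\tilde q}$, and then merge the resulting exponential terms into a single one using $u\exps{a\rho}+v\exps{b\rho}\le(u+v)\exps{(a\vee b)\rho}$ together with $\norm q\vee\norm{\tilde q}\le\rho$; this yields a bound of the announced form $2\Cst_{-1}(\Cst_0\Cst_{-1}\Cst_1+\Cst_2)\exps{\big((\cst_{-1}+\cst_0+\cst_1)\vee\cst_2\big)\rho}\norm{q-\tilde q}$, i.e.\ Eq.~\eqref{eq:condition.gamma-appendix}. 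The pointwise bound on $\Phi$ is simpler still and bypasses the resolvent identity: since $\partial_\temps\nabla\Psi(\temps,q)\in\subspace$, Lemma~\ref{lemma:quantitative.norm_inverse_hessian-appendix} gives $\norm{\Phi(\temps,q)}\le\supgradient(\norm q)/\eigen_1(\temps,q)\le\Cst_{-1}\Cst_0\exps{(\cst_{-1}+\cst_0)\norm q}$.

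The only genuinely delicate point is the subspace bookkeeping in the second paragraph: at each step of the resolvent identity one must check that the vector to which $(\nabla^2\Psi+\alpha\Identity)^{-1}$ is applied really lies in the common range $\subspace$, so that the sharp factor $1/\eigen_1$ --- and not merely $1/\alpha$ --- is available; this is the entire point of introducing A.\ref{assumption:linear-appendix}, without which one is stuck with the $\alpha$-dependent estimates of Lemma~\ref{lemma:lipschitz-Phi-appendix}, which (as in Remark~\ref{rem:crude-bounds-appendix}) only improve as $\alpha\to\infty$, the wrong regime. A purely cosmetic issue is the open/closed-ball convention in Definition~\ref{def:local-lipschitz-continuous} and in the definition of $w_{-1}$, handled by evaluating the local semi-norms at a radius equal to (or an arbitrarily small enlargement of) $\norm q\vee\norm{\tilde q}$.
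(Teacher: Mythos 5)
Your proposal is correct and follows essentially the same route as the paper: reuse the resolvent-identity decomposition of Eqs.~\eqref{eq:local_lip_bound_1-appendix}--\eqref{eq:local_lip_bound_2-appendix}, observe via A.\ref{assumption:linear-appendix} that every vector hit by an inverse Hessian lies in the common range $\subspace$, and replace the crude $1/\alpha$ bound by the restricted bound $1/(\eigen_1+\alpha)\le 1/w_{-1}$ of Lemma~\ref{lemma:quantitative.norm_inverse_hessian-appendix} before inserting the exponential envelopes of A.\ref{assumption:local_lipschitz-appendix}. The only looseness in your final merge (the two $1/w_{-1}$ factors contribute $\cst_{-1}(\norm{q}+\norm{\tilde q})$ rather than $\cst_{-1}(\norm{q}\vee\norm{\tilde q})$ to the exponent) is exactly the same elision the paper makes, so it is not a gap relative to the reference proof.
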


\begin{proof}
Since $\Psi$ satisfies A.\ref{assumption:linear-appendix}, for all $(\temps,q)\in[0,1]\times \Reals^d$ and all $\tilde{q}\in \Reals^d$ $\partial_\temps \nabla \Psi(\temps,q) \in \range{\nabla^2 \Psi(\temps,\tilde{q})}$, and we can use we can use the second part of Lemma~\ref{lemma:norm_inverse_hessian-appendix}, namely Inequality~\eqref{eq:bound_restriction-appendix}.
Indeed, Eq.~\eqref{eq:local_lip_bound_1-appendix}, in norm, is upper bounded by 
\[
\frac{1}{\lambda_1(\temps,q)+\alpha} \lipconstant_1(\norm{q}\vee\norm{\tilde{q}})\frac{1}{\lambda_1(\temps,\tilde{q})+\alpha}\supgradient(\norm{q}) \norm{q-\tilde{q}}
\, ,
\]
while \eqref{eq:local_lip_bound_2-appendix} is bounded by
\[ 
\frac{1}{\lambda_1(\temps,\tilde{q})+\alpha} \lipconstant_2(\norm{q}\vee\norm{\tilde{q}}) \norm{q-\tilde{q}}
\, .
\]
Summing these last two displays and using the definition of $w_{-1}$ and the bounds of A.\ref{assumption:local_lipschitz-appendix} allows us to conclude. 
\end{proof}

\begin{proof}[Proof of Theorem~\ref{theorem:minimizer.quantitative-appendix}]
Remark that thanks to Lemma~\ref{lemma:local.lipschitz-Phi-appendix}, $\Phi$ satisfies the condition of Theorem~\ref{lemma:CS-appendix} with
\[a= \Cst_{-1}\Cst_0,\quad b=2\Cst_{-1} (\Cst_{0}\Cst_{-1}\Cst_1+\Cst_2)\quad \text{and} \quad c=(\cst_{-1} + \cst_{0} + \cst_{1})\vee \cst_2.\]
Furthermore, Eq.~\eqref{eq:condition.gamma-appendix} can be translated into 
\begin{equation*}   
2 b < \exp{- 2 c \left(\norm{q_0} + a \exps{c \norm{q_0}}\right)}.
\end{equation*}
which is exactly the condition of application of Theorem~\ref{lemma:CS-appendix}. It ensure that there exists a unique solution $\temps\mapsto q(\temps)$ to Eq.~\eqref{eq:ODE-appendix}. Following the proof of Theorem~\ref{th:dynamics-interpolation-appendix} we can conclude easily.
\end{proof}


\section{Specializing our results for \texttt{doc2vec}}
\label{sec:doc2vec-appendix}

In the previous sections, we have seen that, under some technical assumptions on $\Psi$, the mapping $q$ is solution to an ODE, and we proved some bounds on $\norm{q(\mu)-q(0)}$ (by means of Theorem~\ref{theorem:minimizer.quantitative-appendix}). 
In this section, we check that these assumptions are satisfied for the $\Psi$ occurring when considering \texttt{doc2vec} embeddings. 
That is, $\Psi=\Psilinear$, where $\Psilinear$ is defined by Eq.~\eqref{eq:def-interpolation-scheme}. 
This is embodied as Theorem~\ref{theorem:gronwall_doc2vec-appendix}, which is Theorem~\ref{th:bounded-traj} with explicit constants. 
We first prove a useful bound on the norm of $\pi_t$:

\begin{lemma}[Bound on $\pi_t$]
\label{lemma:bound-pit}
Define 
\[
\Pi \defeq 2\winsize \maxsingular{\Red} \cdot \sup_{i} \norm{P_{:,i}}
\, .
\]
Then, for any document $x$ and any position $t\in x$, it holds that
\[
\norm{\pi_t} \leq \Pi
\, .
\]
\end{lemma}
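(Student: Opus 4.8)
The plan is to unwind the definition of $\pi_t$ and bound it by peeling off one operation at a time: the lifting $\Red$, then the projection $\Proj$, then the aggregation of one-hot vectors.

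Recall that $\pi_t = \Red \Proj h_t$, so by submultiplicativity of the operator norm,
\[
\norm{\pi_t} \le \opnorm{\Red} \cdot \norm{\Proj h_t} = \maxsingular{\Red} \cdot \norm{\Proj h_t}
\, .
\]
It remains to bound $\norm{\Proj h_t}$. The idea is that $h_t$ is either an average or a concatenation of $2\winsize$ one-hot vectors $\indic{x_s}$, so $\Proj h_t$ is, respectively, an average or a sum of the corresponding columns of $\Proj$ (after suitable reindexing in the concatenation case, since then $\Proj$ has $2\winsize$ column-blocks of size $D$). In either case, $\Proj h_t$ is a sum of $2\winsize$ vectors, each of which is a column of $\Proj$ (times a nonnegative coefficient that is at most $1$: either $1/(2\winsize) \le 1$ for the mean, or exactly $1$ for the concatenation — wait, for the mean the coefficients sum to $1$, so actually the mean case is even better). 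By the triangle inequality,
\[
\norm{\Proj h_t} \le \sum_{s \in \nhood{t}} \norm{(\Proj)_{:,i(s)}} \le 2\winsize \cdot \sup_i \norm{P_{:,i}}
\, ,
\]
where $i(s)$ denotes the relevant column index for position $s$. Combining the two displays gives $\norm{\pi_t} \le 2\winsize \maxsingular{\Red} \sup_i \norm{P_{:,i}} = \Pi$, as claimed.

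The only mild subtlety — hardly an obstacle — is keeping the bookkeeping straight between the PVDMmean and PVDMconcat cases, since $\Proj$ has different shapes ($d \times D$ versus $d \times 2\winsize D$) and $h_t$ lives in different spaces. In both cases the key fact is the same: $h_t$ is a nonnegative combination of indicator vectors with coefficients summing to at most $2\winsize$, hence $\Proj h_t$ is a combination of columns of $\Proj$ with the same coefficient bound. (For PVDBOW the statement is vacuous since $\pi_t = 0 \le \Pi$.) Everything else is just the triangle inequality and submultiplicativity of $\opnorm{\cdot}$.
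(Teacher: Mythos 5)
Your proof is correct and follows essentially the same route as the paper's: bound $\norm{\pi_t}$ by $\maxsingular{\Red}\norm{\Proj h_t}$, then observe that $\Proj h_t$ is a (sub)convex combination or sum of at most $2\winsize$ columns of $\Proj$ and apply the triangle inequality. Your remark that the PVDMmean case gives an even better constant (coefficients summing to $1$) is a minor sharpening the paper absorbs by treating the $1/(2\winsize)$ factor as part of $\Proj$; otherwise the arguments coincide.
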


We emphasize that Lemma~\ref{lemma:bound-pit} is true regardless of the model used (PVDMmean, PVDMconcat, PVDBOW), even though this bound can be strengthened for specific models. 
Moreover, it only depends on the $\Proj$ and $\Red$ matrices, which are fixed matrices after training. 

\begin{proof}
Recall that we defined $\pi_t = \Red \Proj h_t$. 
For PVDBOW, $h_t=0$ and there is nothing to prove. 
Otherwise, let us first write
\[
\norm{\pi_t} = \norm{\Red \Proj h_t} \leq \maxsingular{\Red} \cdot \norm{\Proj h_t}
\, 
\]
and focus on $\norm{\Proj h_t}$. 
Let us assume that we work with PVDMconcat. 
Since, in that case, $h_t$ is the concatenation of $2\nu$ arbitrary one-hot vectors, $\Proj h_t$ is the sum of $2\nu$ arbitrary columns of $\Proj$. 
Using the triangle inequality, we deduce that $\norm{\Proj h_t}$ is smaller than $2\winsize$ times the largest norm of a column of $\Proj$. 
When PVDMmean is used, the reasoning is similar. 
Ignoring the $1/(2\winsize)$ factor (which we consider to be part of $\Proj$), the bound is the same. 
\end{proof}

Since the matrix $\Red$ appears in all the definition of the embeddings, one needs some (mild) assumptions on $\Red$. 
The first one ensures that the condition number of $\Red$ is not equal to $+\infty$. 

\begin{assumption}[Condition number of $\Red$]
\label{assumption:R}
Let us $\Red\in \Reals^{D\times d}$.
We assume that $\range{\Red} \subset \Indic^\perp$, 
and further that the smallest singular value of $\Red$ is non-negative, that is,
\[
\minsingular{\Red} > 0
\, .
\]    
\end{assumption}

The requirement for the range of $\Red$ is needed here in order to work in the setting of Lemma~\ref{lemma:softmax-lipschitz-appendix} and~\ref{lemma:gradient_lipschitz-appendix}, and then use the nice bounds for the (local)-Lipschitz constant of the softmax and its Jacobian.

\begin{lemma}
\label{lemma:gradient_psilin}
Suppose that A.\ref{assumption:R} hold. 
Then $\Psilinear$ satisfies A.\ref{assumption:convex-appendix}.
\end{lemma}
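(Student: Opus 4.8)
The plan is to check, straight from the definition $\Psilinear(\temps,q) = (1-\temps)F(q) + \temps G(q)$ with $F(q) = \tfrac1T\sum_{t\in x}\psi_{x_t}(\pi_t + \Red q)$ and $G(q) = \tfrac1T\sum_{t\in x}\psi_{\xtilde_t}(\pitilde_t + \Red q)$, the two requirements of A.\ref{assumption:convex-appendix}: that $\Psilinear \in \mathcal{C}^{1,2}([0,1]\times\Reals^d;\Reals)$, and that $\nabla^2\Psilinear(\temps,q)$ is positive semi-definite for every $(\temps,q)$.

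For regularity, I would first note that $\sigma$ is a ratio of finite sums of exponentials with a strictly positive denominator, hence real-analytic on $\Reals^D$ with strictly positive components; therefore $\psi_j = -\log\sigma_j$ is $\mathcal{C}^\infty$ on $\Reals^D$ for each coordinate $j$. Composing with the affine map $q\mapsto \pi_t + \Red q$ and summing the finitely many terms $t\in x$ shows $F,G \in \mathcal{C}^\infty(\Reals^d)$; since $\temps\mapsto\Psilinear(\temps,q)$ is affine, $\Psilinear$ is in fact $\mathcal{C}^{\infty,\infty}$ on $[0,1]\times\Reals^d$, so in particular $\nabla^2\Psilinear(\temps,q)$ exists and is continuous.

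For the semi-definiteness, it suffices to prove that $q\mapsto\Psilinear(\temps,q)$ is convex for every $\temps\in[0,1]$, since a $\mathcal{C}^2$ convex function has a positive semi-definite Hessian. Writing $\psi_j(u) = \log\big(\sum_{k=1}^D e^{u_k}\big) - u_j$, each $\psi_j$ is the sum of the log-sum-exp function and an affine term; the Hessian of log-sum-exp at $u$ equals $\Diag(\sigma(u)) - \sigma(u)\sigma(u)^\top$, which is positive semi-definite because $v^\top(\Diag(p) - pp^\top)v = \sum_k p_k v_k^2 - (\sum_k p_k v_k)^2 \ge 0$ for any probability vector $p$ (Cauchy–Schwarz, equivalently Jensen's inequality). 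Hence each $\psi_j$ is convex, $q\mapsto\psi_{x_t}(\pi_t+\Red q)$ is convex as the composition of a convex function with an affine map, $F$ and $G$ are convex as nonnegative combinations of convex functions, and $\Psilinear(\temps,\cdot) = (1-\temps)F + \temps G$ is convex for $\temps\in[0,1]$ as a convex combination of convex functions. This gives A.\ref{assumption:convex-appendix}.

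There is no genuine obstacle here; the only computation worth spelling out is the Hessian of log-sum-exp and the elementary inequality certifying its positive semi-definiteness. I would also remark that the conclusion does not actually use the full strength of A.\ref{assumption:R}: neither $\minsingular{\Red}>0$ nor $\range{\Red}\subset\Indic^\perp$ is needed for smoothness or convexity, these conditions entering only the sharper spectral statements (A.\ref{assumption:linear-appendix} and A.\ref{assumption:local_lipschitz-appendix}) used later in the \texttt{doc2vec} specialization.
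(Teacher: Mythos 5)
Your proof is correct and follows essentially the same route as the paper's: both reduce the claim to the positive semi-definiteness of $\Diag(\sigma(u))-\sigma(u)\sigma(u)^\top$ (the Hessian of the log-softmax) and then use the stability of convexity under affine precomposition, non-negative sums, and the convex combination in $\temps$. Your closing remark is also accurate — the paper's argument for this particular lemma likewise makes no real use of $\minsingular{\Red}>0$ or $\range{\Red}\subset\Indic^\perp$; the only substantive difference is that the paper's proof simultaneously derives the explicit expressions for $\nabla\Psilinear$, $\partial_\temps\nabla\Psilinear$, and $\nabla^2\Psilinear$ over the set of modified contexts, which are reused in the later quantitative lemmas.
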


\begin{proof}
Recall that $\Indset$ denotes the set of modified words. 
Coming back to the definition of $F$ and $G$, we see that, when forming the difference $F-G$, many cancellations happen. 
To be more precise, replacing a word at position $t$ only modifies $\pi_s$ for $s$ belonging to the neighborhood of $t$.
Thus
\begin{equation}
\label{eq:difference_modified_embeddings-appendix}
G(q) - F(q) = \sum_{t \in \Indsetchange} \Big( \psi_{\xtilde_t}(\pitilde_t + \Red q) - \psi_{x_t}(\pi_t + \Red q) \Big)
\, ,
\end{equation}
where $\Indsetchange \subseteq \{s\in [T], \abs{s-t}\leq \winsize \text{ with } t\in \Indset\}$. 
In particular, there is a numerical constant $\ell>0$ such that $\card{\Indsetchange} \leq \ell \winsize \card{\Indset}$. 
From the definition of $\Psilinear$, Eq.~\eqref{eq:difference_modified_embeddings-appendix}, and Lemma~\ref{lemma:softmax-derivatives}, we deduce that
    \begin{align*}
    \nabla \Psilinear(\temps,q) = & \Red^\top\left(\temps \frac{1}{T}\sum_{t\in \Indsetchange} \big(\nabla \psi_{x_t}(\pi_t + \Red q) - \nabla \psi_{\tilde{x}_t}(\tilde{\pi}_t + \Red q) \big)  + \frac{1}{T}\sum_{t\in x}\nabla\psi_{x_t}(\pi_t + \Red q)\right) \Red \\
    = & \Red ^\top\Bigg(-\temps \frac{1}{T}\sum_{t\in \Indsetchange} \big(\sigma(\pi_t + \Red q) - \sigma(\tilde{\pi}_t + \Red q) \big) + \temps \frac{1}{T}\sum_{t\in \Indsetchange} \big(\Indic_{x_t} - \Indic_{\tilde{x}_t} \big)
     - \frac{1}{T}\sum_{t\in x}\big(\sigma(\pi_t + \Red q) - \Indic_{x_t} \big) \Bigg),
    \end{align*}
    \begin{align*}
    \partial_\temps \nabla\Psilinear(\temps,q) = & \Red ^\top\Bigg(\frac{1}{T}\sum_{t\in \Indsetchange} \big(\Indic_{x_t} - \Indic_{\tilde{x}_t}\big)- \frac{1}{T}\sum_{t\in \Indsetchange} \big(\sigma(\pi_t + \Red q) - \sigma(\tilde{\pi}_t + \Red q) \big) \Bigg) \\
    = & \Red ^\top \left(\frac{1}{T}\sum_{t\in \Indsetchange}\int_0^1 \Big( \big(\Indic_{x_t} - \Indic_{\tilde{x}_t}\big) -  \nabla \sigma(u(\pi_t-\tilde{\pi}_t) + \Red q)(\pi_t - \tilde{\pi}_t)\Big)\dd u\right)
    \end{align*}
    and
    \begin{equation}\label{eq:form_gradient-appendix}
    \nabla^2 \Psilinear(\temps,q)
    = \Red ^\top\Bigg(\temps \frac{1}{T}\sum_{t\in \Indsetchange} \left(\nabla\sigma(\pi_t + \Red q) - \nabla\sigma(\tilde{\pi}_t + \Red q) \right)
     + \frac{1}{T}\sum_{t\in x}\nabla\sigma(\pi_t + \Red q)  \Bigg)\Red,
    \end{equation}
    where we remind that $\nabla\sigma = \Diag(\sigma) - \sigma \sigma^\top$.
    Hence, $\nabla^2 \Psilinear(\temps,\cdot)$ is a symmetric non-negative matrix and $\Psilinear$ satisfies A.\ref{assumption:convex-appendix}.
\end{proof}

Next, we show that $\Psilinear$ satisfies A.\ref{assumption:linear-appendix}. 

\begin{lemma}
\label{lemma:psilin_assumption2-appendix}
Suppose that A.\ref{assumption:R} holds.
For all $\temps \in [0,1]$ and all $q\in \Reals^d$,
\[
\kernel{ \nabla^2 \Psilinear(\temps,q)} = \{0\}
\, ,
\]
and $\Psilinear$ satisfies A.\ref{assumption:linear-appendix} with $N_0 = d$.
Let us recall that we defined $\lambda_1$ the smallest non-zero eigenvalue of the Hessian of $\Psilinear$. 
Then, for all $(\temps,q)\in [0,1]\times\Reals^d$, it holds that 
\[
\eigen_1(\temps,q) \ge  \exps{-2\sqrt{2} \Pi} \frac1D \minsingular{\Red}^2 \exps{-2\sqrt{2}\maxsingular{\Red} \norm{q}}
\, .
\]
\end{lemma}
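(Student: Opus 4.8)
The plan is to leverage the closed form of $\nabla^2 \Psilinear$ obtained in the proof of Lemma~\ref{lemma:gradient_psilin}. Regrouping Eq.~\eqref{eq:form_gradient-appendix} position by position, one has for every $(\temps,q) \in [0,1]\times\Reals^d$
\[
\nabla^2 \Psilinear(\temps,q) = \Red^\top M(\temps,q) \Red , \qquad M(\temps,q) \defeq \frac{1}{T}\sum_{t\in x} \Big[ (1-\temps)\nabla\sigma(\pi_t + \Red q) + \temps\, \nabla\sigma(\pitilde_t + \Red q) \Big] ,
\]
where $\nabla\sigma(u) = \Diag(\sigma(u)) - \sigma(u)\sigma(u)^\top$. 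Two elementary facts drive the proof: (i) for any $u$, the matrix $\nabla\sigma(u)$ is symmetric positive semi-definite with $\kernel{\nabla\sigma(u)} = \Reals\Indic$ --- indeed $v^\top \nabla\sigma(u) v$ is the variance of $v$ under the probability vector $\sigma(u)$, which vanishes exactly when $v$ is constant, since every entry $\sigma_j(u)$ is strictly positive --- hence $\range{\nabla\sigma(u)} = \Indic^\perp$; (ii) under A.\ref{assumption:R}, each argument $\pi_t + \Red q$ and $\pitilde_t + \Red q$ lies in $\range{\Red} \subset \Indic^\perp$. By (i), $M(\temps,q)$ is symmetric positive semi-definite with $\kernel{M(\temps,q)} = \Reals\Indic$, the kernel of a non-negative combination of positive semi-definite matrices being the common kernel of the summands carrying positive weight.

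From this, the algebraic claims follow quickly. If $\nabla^2 \Psilinear(\temps,q) v = 0$ then $v^\top \Red^\top M(\temps,q) \Red v = 0$; since $M(\temps,q) \succeq 0$, this forces $M(\temps,q) \Red v = 0$, that is $\Red v \in \kernel{M(\temps,q)} = \Reals\Indic$; but $\Red v \in \range{\Red} \subset \Indic^\perp$, forcing $\Red v = 0$, and $\minsingular{\Red} > 0$ (injectivity of $\Red$) then gives $v = 0$. Hence $\kernel{\nabla^2 \Psilinear(\temps,q)} = \{0\}$ for all $(\temps,q)$, so $\nabla^2 \Psilinear(\temps,q)$ is positive definite with $\range{\nabla^2 \Psilinear(\temps,q)} = \Reals^d$. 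Taking the fixed subspace $\subspace = \Reals^d$ (so $\subspace^\perp = \{0\}$ and $\dimension \subspace = d$), all requirements of A.\ref{assumption:linear-appendix} hold --- the third one, $\partial_\temps \nabla\Psilinear(\temps,q) \in \subspace$, trivially --- so $\Psilinear$ satisfies A.\ref{assumption:linear-appendix} with $\dimim = d$. (Convexity A.\ref{assumption:convex-appendix} was already established in Lemma~\ref{lemma:gradient_psilin}.)

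For the quantitative bound on $\eigen_1(\temps,q)$, the smallest (here, only) positive eigenvalue of $\nabla^2 \Psilinear(\temps,q)$: fix a unit vector $v \in \Reals^d$ and put $w \defeq \Red v$, so $w \in \range{\Red} \subset \Indic^\perp$ and $\norm{w} \ge \minsingular{\Red}$. Then
\[
v^\top \nabla^2 \Psilinear(\temps,q) v = \frac{1}{T}\sum_{t\in x}\Big[ (1-\temps)\, w^\top \nabla\sigma(\pi_t + \Red q) w + \temps\, w^\top \nabla\sigma(\pitilde_t + \Red q) w \Big] .
\]
For each argument $u \in \{\pi_t + \Red q,\ \pitilde_t + \Red q\}$, since $w \in \Indic^\perp$, the quantitative lower bound on the smallest positive eigenvalue of the softmax Jacobian (Lemma~\ref{lemma:smallest-eigenvalue-appendix}, Theorem~\ref{theorem:min_softmax-appendix}) gives $w^\top \nabla\sigma(u) w \ge \frac{1}{D} \exps{-2\sqrt{2}\, \norm{u}} \norm{w}^2$. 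By Lemma~\ref{lemma:bound-pit}, $\norm{\pi_t} \vee \norm{\pitilde_t} \le \Pi$, hence $\norm{u} \le \Pi + \maxsingular{\Red}\norm{q}$ and therefore $w^\top \nabla\sigma(u) w \ge \frac{1}{D}\exps{-2\sqrt{2}\Pi}\exps{-2\sqrt{2}\maxsingular{\Red}\norm{q}} \norm{w}^2$, uniformly in $t$ and in the choice of $u$. Substituting, the $\temps$-weights $(1-\temps) + \temps$ summing to $1$ at each position and the normalization $\frac1T \sum_{t\in x}$ being handled routinely, together with $\norm{w}^2 \ge \minsingular{\Red}^2$, yields $v^\top \nabla^2 \Psilinear(\temps,q) v \ge \exps{-2\sqrt{2}\Pi} \frac{1}{D} \minsingular{\Red}^2 \exps{-2\sqrt{2}\maxsingular{\Red}\norm{q}}$; taking the infimum over unit $v$ gives the stated bound.

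The only genuinely delicate ingredient is the quantitative control of the smallest positive eigenvalue of $\nabla\sigma(u)$: positivity on $\Indic^\perp$ is immediate, but the explicit exponential lower bound $\frac{1}{D}\exps{-2\sqrt2\norm{u}}$ --- and crucially in a form stable enough to be sandwiched between $\Red^\top$ and $\Red$ and to accommodate the crude bound $\norm{u} \le \Pi + \maxsingular{\Red}\norm{q}$ --- is precisely the content of the dedicated softmax analysis (Lemma~\ref{lemma:smallest-eigenvalue-appendix} and Theorem~\ref{theorem:min_softmax-appendix}). Everything else is bookkeeping: A.\ref{assumption:R} to place all softmax arguments in $\Indic^\perp$ and to invert $\Red$ on its range, and Lemma~\ref{lemma:bound-pit} to control $\norm{\pi_t}$.
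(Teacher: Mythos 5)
Your proposal is correct and follows essentially the same route as the paper: write $\nabla^2\Psilinear = \Red^\top(\cdot)\Red$, use $\range{\Red}\subset\Indic^\perp$ together with Lemma~\ref{lemma:smallest-eigenvalue-appendix} and Theorem~\ref{theorem:min_softmax-appendix} to lower-bound the quadratic form by $\frac1D\exps{-2\sqrt2\norm{u}}\norm{\Red v}^2$ at each softmax argument $u$, then control $\norm{u}$ via Lemma~\ref{lemma:bound-pit} and $\maxsingular{\Red}\norm{q}$. The only cosmetic difference is that you establish $\kernel{\nabla^2\Psilinear}=\{0\}$ by a separate algebraic argument ($\Red v\in\Reals\Indic\cap\Indic^\perp$), whereas the paper reads it off directly from the quantitative lower bound; both are valid.
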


\begin{proof}
    Let us remind from Lemma~\ref{lemma:smallest-eigenvalue-appendix} the definition of $\lambdamin$, namely for $z \in \Reals^D$, 
    \[
    \lambdamin(z) = \min\left(\spec{\diag{\sigma(z)}-\sigma(z)\sigma(z)^\top}\backslash\{0\} \right)
    \, .
    \]

    For $q,y \in \Reals^d$ and since $ Ry\in \Indic^{\perp}$ (thanks to A.\ref{assumption:R}), the minimax theorem allows us to write (using Eq.~\eqref{eq:form_gradient-appendix})
    \begin{align*}
        \langle \nabla^2 \Psilinear(\temps,q)y,y\rangle = & \temps \frac{1}{T}\sum_{t\in x} \left\langle \left(\nabla \sigma(\tilde{\pi}_t + \Red q)\right) (\Red y),(\Red y)\right\rangle \\
        & + (1-\temps) \frac{1}{T}\sum_{t\in x} \left\langle \left(\nabla \sigma(\pi_t + \Red q)\right) (\Red y),(\Red y)\right\rangle \\
        \ge &   \frac{1}{T}\sum_{t\in x}\left(\temps \lambdamin(\tilde{\pi}_t + \Red q) + (1-\temps)\lambdamin(\pi_t + \Red q) \right)\norm{\Red y}^2
        \, .
    \end{align*}
    Here we have crucialy used A.\ref{assumption:R} and in particular the fact that $\range{R}\subset \Indic^\perp$ and that $\pi_t \in \Indic^\perp$ in order to make $\lambdamin$ appears.
    Let us set
\[\sigma_{(1)}(z) = \min_{i\in[D]} \sigma_i(z)
\, .
\]
Thanks to Lemma~\ref{lemma:smallest-eigenvalue-appendix}, one has
    \[
        \langle \nabla^2 \Psilinear(\temps,q)y,y\rangle \ge   \frac{1}{T}\sum_{t\in x}  \left(\temps D \sigma_{(1)}(\tilde{\pi}_t + \Red q)^2 + (1-\temps) D \sigma_{(1)}(\pi_t + \Red q)^2 \right) D \norm{\Red y}^2
        \, .
        \]
        Furthermore, thanks to Theorem~\ref{theorem:min_softmax-appendix}, 
        \[\sigma_{(1)}(z) \ge \frac1{D} \exps{-\sqrt{2}\norm{q}},\]
        and we have 
\begin{align*}
        \langle \nabla^2 \Psilinear(\temps,q)y,y\rangle \ge &   \frac{1}{T}\sum_{t\in x}  \left(\temps \exp{-2\sqrt{2}\norm{\tilde{\pi}_t + \Red q}} + (1-\temps)\exp{-2\sqrt{2}\norm{\pi_t + \Red q}} \right) \frac1D \norm{\Red y}^2 \\
         \ge &
          \exps{-2\sqrt{2} \Pi} \exps{-2\sqrt{2}\norm{\Red q}}\frac1D \norm{\Red y}^2 \\
         \ge &
          \exps{-2\sqrt{2} \Pi} \exps{-2\sqrt{2}\maxsingular{\Red} \norm{q}}\frac1D \minsingular{\Red}^2 \norm{y}^2,
\end{align*}
where we remind that $\Pi$ is defined in Lemma~\ref{lemma:bound-pit}.
This implies that $\kernel{\nabla^2 \Psilinear(\temps,q)} = \{0\}$, that $\Psilinear$ fulfills A.\ref{assumption:linear-appendix} with $N_0 = d$, and that
\begin{equation}
\label{eq:bound_eigenvalue_lin-appendix}
\eigen_1(\temps,q) \ge  \exps{-2\sqrt{2} \Pi} \frac1D \minsingular{\Red}^2 \exps{-2\sqrt{2}\maxsingular{\Red} \norm{q}}.
\end{equation}
\end{proof}

Next, we show that $\Psilinear$ satisfies A.\ref{assumption:local_lipschitz-appendix}. 

\begin{lemma}[Local Lipschitz continuity of $\Psilinear$]
\label{lemma:loc_lip_philin-appendix}
Suppose that A.\ref{assumption:R} holds. 
Then $\Psilinear$ satisfies A.\ref{assumption:local_lipschitz-appendix} with
\[
\Cst_{-1}= D \exps{2\sqrt{2} \Pi} \frac1{\minsingular{\Red}^2},\quad \Cst_{0}= 4 \ell \winsize \maxsingular{R}\frac{\card{\Indset}}{T}
\, ,
\quad \Cst_1=\frac{8 \exps{6 \sqrt{2} \Pi}}{(D-1)} \maxsingular{\Red}^3 
\, ,
\quad \Cst_2=\frac{4 \ell \winsize \Pi \exps{4\sqrt{2}\Pi} }{D-1} \maxsingular{\Red}^2 \frac{\card{\Indset}}{T}
\, ,
\]
and
\[
\cst_{-1} = 2\sqrt{2}\maxsingular{\Red}\, ,
\quad \cst_{0}= 0 \, ,
\quad \cst_1=3\sqrt{2}\maxsingular{\Red} \, ,
\quad \text{and} \quad 
\cst_2=2\sqrt{2}\maxsingular{\Red}
\, .
\]
\end{lemma}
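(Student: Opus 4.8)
The plan is to verify each of the bounds required by A.\ref{assumption:local_lipschitz-appendix} for $\Psi = \Psilinear$, reusing the explicit formulas for $\nabla^2\Psilinear$ and $\partial_\temps\nabla\Psilinear$ obtained in the proof of Lemma~\ref{lemma:gradient_psilin}, together with the eigenvalue lower bound of Lemma~\ref{lemma:psilin_assumption2-appendix} and the fine softmax estimates (Lemma~\ref{lemma:softmax-lipschitz-appendix}, Lemma~\ref{lemma:gradient_lipschitz-appendix}, Theorem~\ref{theorem:min_softmax-appendix}). First I would read off $w_{-1}(\rho)$: by Eq.~\eqref{eq:bound_eigenvalue_lin-appendix}, $\eigen_1(\temps,q) \ge \exps{-2\sqrt2\,\Pi}\,\frac1D\,\minsingular{\Red}^2\,\exps{-2\sqrt2\,\maxsingular{\Red}\norm{q}}$, so taking the infimum over $q\in B_d(0,\rho)$ gives $w_{-1}(\rho)\ge \tfrac1{\Cst_{-1}}\exps{-\cst_{-1}\rho}$ with $\Cst_{-1}= D\exps{2\sqrt2\,\Pi}/\minsingular{\Red}^2$ and $\cst_{-1}=2\sqrt2\,\maxsingular{\Red}$, matching the claim.

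Next I would bound $\supgradient(\rho)=\sup_{\temps,\norm{q}\le\rho}\norm{\partial_\temps\nabla\Psilinear(\temps,q)}$. Using the integral formula for $\partial_\temps\nabla\Psilinear$ from Lemma~\ref{lemma:gradient_psilin}, pull out $\Red^\top$ (contributing $\maxsingular{\Red}$) and bound the bracket: the term $\Indic_{x_t}-\Indic_{\tilde x_t}$ has norm $\le\sqrt2$, and $\nabla\sigma(\cdot)(\pi_t-\tilde\pi_t)$ is controlled by $\norm{\pi_t-\tilde\pi_t}\le 2\Pi$ times $\opnorm{\nabla\sigma}$; crucially the sum is only over $t\in\Indsetchange$, so $\card{\Indsetchange}\le\ell\winsize\card{\Indset}$ yields the $\card{\Indset}/T$ factor. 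This should give $\Cst_0 = 4\ell\winsize\,\maxsingular{\Red}\,\card{\Indset}/T$ with $\cst_0=0$ (no $q$-dependence survives since $\nabla\sigma$ is bounded by $1$ in operator norm — here one wants the cheap uniform bound, not a refined one). For $\lipconstant_1$ (the local Lipschitz constant of $\nabla^2\Psilinear$ in $q$), differentiate Eq.~\eqref{eq:form_gradient-appendix}: each difference $\nabla\sigma(\pi_t+\Red q)-\nabla\sigma(\pi_t+\Red q')$ is handled by the Lipschitz estimate for the Jacobian of the softmax restricted to $\Indic^\perp$ (Lemma~\ref{lemma:gradient_lipschitz-appendix}), which carries a factor $\exps{c\norm{z}}/(D-1)$; composing with $\Red$ on both sides produces $\maxsingular{\Red}^3$ and an exponential $\exps{3\sqrt2\,\maxsingular{\Red}\norm q}$ after absorbing the $\exps{\sqrt2\,\Pi}$ constants, giving $\Cst_1 = 8\exps{6\sqrt2\,\Pi}\maxsingular{\Red}^3/(D-1)$, $\cst_1=3\sqrt2\,\maxsingular{\Red}$. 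Similarly $\lipconstant_2$, the Lipschitz constant of $\partial_\temps\nabla\Psilinear$, only involves the sum over $\Indsetchange$ (hence a $\card{\Indset}/T$) and one Jacobian-Lipschitz factor plus the $\norm{\pi_t-\tilde\pi_t}\le 2\Pi$ bound, yielding $\Cst_2 = 4\ell\winsize\,\Pi\,\exps{4\sqrt2\,\Pi}\maxsingular{\Red}^2\,\card{\Indset}/T$ with $\cst_2=2\sqrt2\,\maxsingular{\Red}$.

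The main obstacle will be the bookkeeping of constants in the softmax estimates: one must apply Lemma~\ref{lemma:softmax-lipschitz-appendix} / Lemma~\ref{lemma:gradient_lipschitz-appendix} with the argument $z = \pi_t + \Red q$ (or $\tilde\pi_t+\Red q$), and the exponential weights there depend on $\norm z$, which must be split as $\norm z \le \Pi + \maxsingular{\Red}\norm q$ via Lemma~\ref{lemma:bound-pit}; keeping track of whether one gets $\exps{\sqrt2\norm z}$, $\exps{2\sqrt2\norm z}$, etc., and matching the stated exponents $6\sqrt2\Pi$, $4\sqrt2\Pi$, $3\sqrt2\maxsingular{\Red}$ requires care. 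The other delicate point is ensuring the range condition $\range{\Red}\subset\Indic^\perp$ (A.\ref{assumption:R}) is invoked wherever a softmax estimate is used, since those estimates are only valid on $\Indic^\perp$; this is exactly why A.\ref{assumption:R} is assumed. Everything else is routine: triangle inequality, submultiplicativity of the operator norm, and the elementary bound $\card{\Indsetchange}\le\ell\winsize\card{\Indset}$ already established in Lemma~\ref{lemma:gradient_psilin}.
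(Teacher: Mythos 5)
Your proposal follows the paper's proof essentially step for step: read off $w_{-1}$ from the eigenvalue bound of Lemma~\ref{lemma:psilin_assumption2-appendix}, bound $\supgradient$ via the sum over $\Indsetchange$ with $\card{\Indsetchange}\leq \ell\winsize\card{\Indset}$, and control $\lipconstant_1$ and $\lipconstant_2$ with the local Lipschitz estimates for $\sigma$ and $\nabla\sigma$ (Lemma~\ref{lemma:softmax-lipschitz-appendix} and Corollary~\ref{lemma:gradient_lipschitz-appendix}) after splitting $\norm{\pi_t+\Red q}\leq \Pi+\maxsingular{\Red}\norm{q}$ via Lemma~\ref{lemma:bound-pit}. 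One small bookkeeping note: for $\Cst_0$ the paper applies $\norm{\sigma}\leq 1$ to the \emph{non-integrated} expression of $\partial_\temps\nabla\Psilinear$ (so each summand contributes $\norm{\Indic_{x_t}-\Indic_{\tilde{x}_t}}+\norm{\sigma(\pi_t+\Red q)-\sigma(\tilde{\pi}_t+\Red q)}\leq 2+2=4$), whereas the route you describe through the integral form would yield a $\Pi$-dependent constant rather than the stated $4$.
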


\begin{proof}
We  have for all $\temps \in [0,1]$ and all $q\in \Reals^d$,
\begin{align*}
\norm{\partial_\temps \nabla \Psilinear(\temps,q)} \le & \norm{\Red^\top \Bigg(\frac{1}{T}\sum_{t\in \Indsetchange} \big(\Indic_{x_t} - \Indic_{\tilde{x}_t} \big)- \frac{1}{T}\sum_{t\in \Indsetchange} \big(\sigma(\pi_t + \Red q) - \sigma(\tilde{\pi}_t + \Red q) \big)   \Bigg) } \\
\le &  4 \maxsingular{\Red} \frac{|\Indsetchange|}{T} \\
\le & 4 \ell \winsize \maxsingular{R}\frac{\card{\Indset}}{T},
\end{align*}
where we have used the fact that $\norm{\sigma} \le 1$ and the previous bound gives the value of $\Cst_0$ and $\cst_0$.
Thanks to Lemma~\ref{lemma:softmax-lipschitz-appendix}  $\sigma$ is locally-Lipschitz continuous and thanks to Lemma~\ref{lemma:gradient_lipschitz-appendix}, $\nabla \sigma$ is also locally-Lipschitz continuous, hence for $q,\tilde{q}\in \Reals^d$
\begin{align*}
\norm{\partial_\temps \nabla \Psilinear(\temps,q)-\partial_\temps \nabla \Psilinear(\temps,\tilde{q})}
\le 
& \norm{\Red^\top \frac{1}{T}\sum_{t\in\Indsetchange}\left( \int_0^1 \left( \sigma(u(\pi_t-\tilde \pi_t) + \Red q) -  \sigma(u(\pi_t-\tilde \pi_t) + \Red \tilde{q}) \right)(\pi_t -\tilde{\pi}_t)\dd u\right)}\\
\le 
& 4 \frac{1}{D-1} \exps{2\sqrt{2}(2 \Pi + \maxsingular{\Red}(\norm{q}\vee\norm{\tilde{q}}))} \ell \winsize \maxsingular{\Red}^2 \frac{\card{\Indset}}{T} \Pi \norm{q-\tilde{q}}\\
\le &
\frac{4 \ell \winsize \Pi \exps{4\sqrt{2}\Pi} }{D-1} \frac{\card{\Indset}}{T}\exps{2\sqrt{2}\maxsingular{\Red}(\norm{q}\vee\norm{\tilde{q}})}\maxsingular{R}^2\norm{q-\tilde{q}}  
\, ,
\end{align*}
where we have used Lemma~\ref{lemma:softmax-lipschitz-appendix} and the bound $\frac{D}{D-1} \le 2$, which gives the value of $\Cst_2$ and $\cst_2$.
Finally, let us remark that 
\begin{align*}
    \opnorm{\nabla^2 \Psilinear(\mu,q)-\nabla^2 \Psilinear(\mu,q)} \le &
        \mu \frac{1}{T}\sum_{t \in x } \norm{\Red^\top\left( \nabla \sigma(\tilde{\pi}_t + \Red q)  - \nabla \sigma(\tilde{\pi}_t + \Red \tilde{q})  \right) \Red} \\
        & + (1-\mu) \frac{1}{T}\sum_{t \in x } \norm{\Red^\top\left( \nabla \sigma({\pi}_t + \Red q)  - \nabla \sigma({\pi}_t + \Red\tilde{q})  \right) \Red} \\
        & \le 
         \frac{8 \exps{6 \sqrt{2} \Pi}}{(D-1)} \maxsingular{\Red}^3 \exps{3\sqrt{2}\maxsingular{\Red}(\norm{q}\vee\norm{\tilde{q}})} \norm{q-\tilde{q}}
     \,  ,
\end{align*}
which gives the value of $\Cst_1$ and $\cst_1$.
Finally, Eq.~\ref{eq:bound_eigenvalue_lin-appendix} gives directly that
\[
w_{-1}(\radius) =  \exps{-2\sqrt{2} \Pi} \frac1D \minsingular{R}^2 \exps{-2\sqrt{2}\maxsingular{R} \norm{q}}
\, ,
\]
which concludes the proof. 
\end{proof}

Next, we show that $\norm{q_0}$ is not too large. 

\begin{lemma}[Bound on $\norm{q_0}$]
\label{lemma:bound_q0-appendix}
Suppose that A.\ref{assumption:R} holds.
Then 
\[
\norm{q_0} \le \frac{\sqrt{2}\maxsingular{\Red}}{\alpha}
\, .
\]
\end{lemma}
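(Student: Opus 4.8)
The plan is to exploit the first-order optimality condition for the regularized inference objective. Since $q_0$ minimizes the strongly convex, differentiable function $q\mapsto F(q)+\frac{\alpha}{2}\norm{q}^2$ (Eq.~\eqref{eq:optim-problem}), it satisfies the first-order condition $\nabla F(q_0)+\alpha q_0=0$, whence $\norm{q_0}=\frac{1}{\alpha}\norm{\nabla F(q_0)}$. It therefore suffices to bound $\norm{\nabla F(q)}$, in fact uniformly over $q\in\Reals^d$.

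First I would write $\nabla F$ explicitly. Since $\psi=-\log\sigma$ coordinatewise, the softmax-derivative identity (Lemma~\ref{lemma:softmax-derivatives}) gives $\nabla\psi_j(y)=\sigma(y)-\Indic_j$; applying the chain rule to $q\mapsto\psi_{x_t}(\pi_t+\Red q)$ and averaging over $t\in x$ yields
\[
\nabla F(q)=\frac{1}{T}\sum_{t\in x}\Red^\top\big(\sigma(\pi_t+\Red q)-\Indic_{x_t}\big)
\, .
\]
Then I would bound the distance between a softmax output and a one-hot vector: for $z\in\Reals^D$ and $j\in[D]$, writing $p=\sigma(z)$ (a probability vector),
\[
\norm{p-\Indic_j}^2=(1-p_j)^2+\sum_{k\neq j}p_k^2\le(1-p_j)^2+\Big(\sum_{k\neq j}p_k\Big)^2=2(1-p_j)^2\le 2
\, ,
\]
so that $\norm{\sigma(z)-\Indic_j}\le\sqrt{2}$. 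Combining this with $\opnorm{\Red^\top}=\maxsingular{\Red}$ and the fact that the average over $t\in x$ involves at most $T$ summands gives $\norm{\nabla F(q)}\le\sqrt{2}\,\maxsingular{\Red}$ for every $q$; taking $q=q_0$ in $\norm{q_0}=\frac{1}{\alpha}\norm{\nabla F(q_0)}$ then yields the claimed bound.

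I do not expect any genuine obstacle. The only point needing a little care is producing the sharp constant $\sqrt{2}$ instead of the crude $2$ that would come from $\norm{\sigma(z)}+\norm{\Indic_j}\le 2$; this is why I emphasize that $\sigma(z)$ and $\Indic_j$ are both probability vectors, so their difference has zero coordinate sum and one may use the comparison $\sum_{k\neq j}p_k^2\le(\sum_{k\neq j}p_k)^2$ on the off-$j$ coordinates.
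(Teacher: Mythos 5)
Your proposal is correct and follows essentially the same route as the paper: first-order optimality gives $\alpha q_0=-\nabla F(q_0)$, the gradient is expressed via $\Red^\top(\sigma(\pi_t+\Red q)-\Indic_{x_t})$, and each such term is bounded by $\sqrt{2}\,\maxsingular{\Red}$. The only (immaterial) difference is how the $\sqrt{2}$ bound on $\norm{\sigma(z)-\Indic_j}$ is derived — the paper expands to $\sum_j\sigma_j^2+1-2\sigma_j\le 2$ using $\norm{\sigma}\le 1$, while you compare $\sum_{k\neq j}p_k^2$ to $\bigl(\sum_{k\neq j}p_k\bigr)^2$ — both yield the same constant.
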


We demonstrate Lemma~\ref{lemma:bound_q0-appendix} in practice in Section~\ref{sec:bound-q0-appendix}. 
The key idea behind the proof is that the regularization term $\frac{\alpha}{2}\norm{q}^2$ prevents $q$ from escaping to infinity. 

\begin{proof}
Let us recall that 
\[
q_0  = \Argmin_{q\in\Reals^d} \left\{ \frac{1}{T}\sum_{t\in x} \psi_{x_t}(\pi_t + Rq) + \frac{\alpha}2 \norm{q}^2\right\}
\, .
\]
In view of Lemma~\ref{lemma:gradient_psilin}, $q_0$ is the unique solution of the following equation:
\begin{equation}
\label{eq:singular_value-appendix}
\Red^\top \left( \frac{1}{T}\sum_{t \in x } \Big(\sigma(\pi_t+\Red q) - \Indic_{x_t} \Big) \right) + \alpha q    = 0
\, .
\end{equation}
From Eq.~\eqref{eq:singular_value-appendix}, we deduce that 
\[
\norm{q_0} = \frac{1}{T\alpha} \norm{\Red^\top \left( \sum_{t \in x } \Big(\sigma(\pi_t+\Red q_0) - \Indic_{x_t} \Big) \right)}
\, .
\]
By definition of $\maxsingular{\Red}$ and the triangle inequality, this is upper bounded by 
\begin{equation}
\label{eq:aux-bound-q0}
\frac{\maxsingular{\Red}}{T\alpha} \sum_{t\in x} \norm{\sigma(\pi_t + \Red q_0) - \indic{x_t}}
\, .
\end{equation}
But we notice that, for any $q\in \RR^d$ and $t\in x$,
\begin{align*}
\norm{\sigma(\pi_t+\Red q) - \Indic_{x_t} }^2
= & \sum_{j \neq x_t} \sigma_j(\pi_t+\Red q)^2 + (\sigma_{x_t}(\pi_t+\Red q)-1)^2 \\
= & \sum_{j } \sigma_j(\pi_t+\Red q)^2 + 1 - 2 \sigma_{x_t}(\pi_t+\Red q) \\
\norm{\sigma(\pi_t+\Red q) - \Indic_{x_t} }^2 \le & 2
\, ,
\end{align*}
where we have used the fact that $\norm{\sigma} \le 1$ and $\sigma_i \ge 0$. 
Hence each term in Eq.~\eqref{eq:aux-bound-q0} is upper bounded by $\sqrt{2}$.  
Keeping in mind that the summation over $t\in x$ has at most $T$ terms, we deduce the result.
\end{proof}

We are now ready to apply case $c >0$ of Theorem~\ref{lemma:CS-appendix} to obtain the promised quantitative bounds. 

\begin{theorem}[Quantitative bounds for \texttt{doc2vec} embeddings]
\label{theorem:gronwall_doc2vec-appendix}
Let $\Psilinear$ defined in Eq.~\eqref{eq:def-interpolation-scheme}, and suppose A.\ref{assumption:R} holds. 
Let us define
\[
A \defeq 4 \ell \winsize D \exps{2 \sqrt{2} \Pi} \frac{\maxsingular{R}}{\minsingular{R}^2}
\, ,
\]
\[
B \defeq 64  \ell \winsize D\frac{\maxsingular{R}^2}{\minsingular{R}^2}\exps{10\sqrt{2}\Pi}\left(\frac{\maxsingular{R}^2}{\minsingular{R}^2} + \frac{\Pi}{D-1}\right)
\, ,
\]
and
\[
C \defeq 5 \sqrt{2} \maxsingular{R}
\, .
\]
Suppose that 
\begin{equation}
\label{eq:condition_q0-appendix}
\frac{\card{\Indset}}{T} \le \frac{1}{2B} \exps{-2 \left(AC+1\right) \exps{C\frac{\sqrt{2}\maxsingular{\Red}}{\alpha}}}
\, ,
\end{equation}
Then
\begin{equation}
\sup_{\temps\in[0,1]} \norm{\sol(\temps) - q_0} \le 2 A  \exps{C \norm{q_0}} \frac{\card{\Indset}}{T}
\, .
\end{equation}
\end{theorem}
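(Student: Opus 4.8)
The strategy is to verify that $\Psilinear$ meets all the hypotheses of Theorem~\ref{theorem:minimizer.quantitative-appendix}, then translate its conclusion into the stated bound with the explicit constants $A$, $B$, $C$. First I would invoke the auxiliary lemmas already established in this section: Lemma~\ref{lemma:gradient_psilin} gives A.\ref{assumption:convex-appendix}; Lemma~\ref{lemma:psilin_assumption2-appendix} gives A.\ref{assumption:linear-appendix} with $N_0 = d$ (and the lower bound on $\eigen_1$); Lemma~\ref{lemma:loc_lip_philin-appendix} gives A.\ref{assumption:local_lipschitz-appendix} with the explicit $\Cst_i, \cst_i$ listed there; and Lemma~\ref{lemma:bound_q0-appendix} controls $\norm{q_0} \le \sqrt{2}\maxsingular{\Red}/\alpha$. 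So all structural assumptions are in place and what remains is purely bookkeeping: assemble the constants of Theorem~\ref{theorem:minimizer.quantitative-appendix} from those of Lemma~\ref{lemma:loc_lip_philin-appendix}.

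**Assembling the constants.** Recall from the proof of Theorem~\ref{theorem:minimizer.quantitative-appendix} that the relevant quantities are $a = \Cst_{-1}\Cst_0$, $b = 2\Cst_{-1}(\Cst_0\Cst_{-1}\Cst_1 + \Cst_2)$, and $c = (\cst_{-1} + \cst_0 + \cst_1)\vee \cst_2$. Plugging in the values from Lemma~\ref{lemma:loc_lip_philin-appendix}: since $\cst_{-1} = 2\sqrt2\maxsingular{\Red}$, $\cst_0 = 0$, $\cst_1 = 3\sqrt2\maxsingular{\Red}$, $\cst_2 = 2\sqrt2\maxsingular{\Red}$, we get $c = 5\sqrt2\maxsingular{\Red} = C$. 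For $a$, multiply $\Cst_{-1} = D\exps{2\sqrt2\Pi}/\minsingular{\Red}^2$ by $\Cst_0 = 4\ell\winsize\maxsingular{\Red}\card{\Indset}/T$ to recover $a = A\cdot\card{\Indset}/T$ with $A = 4\ell\winsize D\exps{2\sqrt2\Pi}\maxsingular{\Red}/\minsingular{\Red}^2$. For $b$, a short computation combining $\Cst_0\Cst_{-1}\Cst_1$ and $\Cst_2$ — each carrying one factor of $\card{\Indset}/T$ — shows $b \le B\cdot\card{\Indset}/T$ for the stated $B$ (dominating the two exponential prefactors $\exps{2\sqrt2\Pi}$ and $\exps{6\sqrt2\Pi}$ by $\exps{10\sqrt2\Pi}$ after multiplying by the $\exps{2\sqrt2\Pi}$ from $\Cst_{-1}$, and using $D/(D-1)\le 2$). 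Here I expect the main obstacle to be keeping the powers of $\maxsingular{\Red}/\minsingular{\Red}$ and the exponents of $\Pi$ straight while crudely bounding sums by maxima; this is routine but error-prone.

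**Checking the condition and concluding.** The hypothesis \eqref{eq:final-appendix} of Theorem~\ref{theorem:minimizer.quantitative-appendix} reads $2b < \exp{-2c(\norm{q_0} + a\exps{c\norm{q_0}})}$. Since $b \le B\card{\Indset}/T$, $a = A\card{\Indset}/T \le A$ when $\card{\Indset}\le T$, $c = C$, and $\norm{q_0} \le \sqrt2\maxsingular{\Red}/\alpha$ by Lemma~\ref{lemma:bound_q0-appendix}, one checks that the assumption \eqref{eq:condition_q0-appendix}, namely $\card{\Indset}/T \le \frac{1}{2B}\exps{-2(AC+1)\exps{C\sqrt2\maxsingular{\Red}/\alpha}}$, implies the abstract condition; the factor $AC+1$ absorbs both the $c\norm{q_0}$ term and the $ca\exps{c\norm{q_0}}$ term after bounding $\norm{q_0}$ and using monotonicity of the exponential. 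Then Theorem~\ref{theorem:minimizer.quantitative-appendix} yields $\norm{q(\temps) - q_0} \le 2\temps\Cst_{-1}\Cst_0\exps{(\sum_{i=-1}^2\cst_i)\norm{q_0}}$; since $\temps\le 1$, $\Cst_{-1}\Cst_0 = A\card{\Indset}/T$, and $\sum_i\cst_i = 7\sqrt2\maxsingular{\Red}$ — which I would relax upward to $C = 5\sqrt2\maxsingular{\Red}$... wait, that goes the wrong way; rather I would note the summation in Theorem~\ref{theorem:minimizer.quantitative-appendix} should match $c = C$ (the exponent actually used in the Grönwall step is $c$, not the raw sum), giving $\sup_{\temps\in[0,1]}\norm{q(\temps) - q_0} \le 2A\exps{C\norm{q_0}}\card{\Indset}/T$, which is the claim. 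Finally I would remark that combining this with $\norm{q_0}\le \sqrt2\maxsingular{\Red}/\alpha$ makes the right-hand side fully explicit in the model parameters, and that taking $\temps = 1$ recovers the Lipschitz-in-Hamming statement of Theorem~\ref{th:bounded-traj}.
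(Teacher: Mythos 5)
Your proposal is correct and follows essentially the same route as the paper: verify A.\ref{assumption:convex-appendix}, A.\ref{assumption:linear-appendix}, and A.\ref{assumption:local_lipschitz-appendix} via Lemmas~\ref{lemma:gradient_psilin}, \ref{lemma:psilin_assumption2-appendix}, and \ref{lemma:loc_lip_philin-appendix}, identify $a=\Cst_{-1}\Cst_0=A\card{\Indset}/T$, $b\le B\card{\Indset}/T$, $c=C$, use Lemma~\ref{lemma:bound_q0-appendix} together with $\card{\Indset}/T\le 1$ to show that Eq.~\eqref{eq:condition_q0-appendix} implies Eq.~\eqref{eq:final-appendix}, and then apply Theorem~\ref{theorem:minimizer.quantitative-appendix}. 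Your remark that the exponent in the final bound should be $c=C$ rather than the raw sum $\sum_i\cst_i$ is a correct reading of how Theorem~\ref{lemma:CS-appendix} is actually invoked.
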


\begin{proof}
Remark that $\Psilinear$ satisfies A.\eqref{assumption:convex-appendix} (Lemma~\ref{lemma:gradient_psilin}), A.\eqref{assumption:linear-appendix} (Lemma~\ref{lemma:psilin_assumption2-appendix}) and A.\eqref{assumption:local_lipschitz-appendix} (Lemma~\ref{lemma:loc_lip_philin-appendix}). 
Therefore the assumptions of Theorem~\ref{theorem:minimizer.quantitative-appendix} are satisfied. 
Let us note that 
\[
\cst_0 \cst_{-1} = A \frac{\card{\Indset}}{T}
\, ,
\]
\[
2\Cst_{-1}(\Cst_{-1}\Cst_1\Cst_0 + \Cst_2) \le B \frac{\card{\Indset}}{T}
\]
and
\[
(\cst_{-1}+\cst_1)\vee \cst_2 = C
\, .
\]
Remark that in that setting, thanks to Lemma~\ref{lemma:bound_q0-appendix}, $\norm{q_0} \le \frac{\sqrt{2}\maxsingular{\Red}}{\alpha}$. 
We also have the following straightforward bounds:
\[ 
\frac{\card{\Indset}}{T} \le 1 \quad \text{and} \quad  C \norm{q_0}\le \exps{C\norm{q_0}} \le \exps{C \frac{\sqrt{2}\maxsingular{\Red}}{\alpha}}
\, .
\]
Using Eq.~\eqref{eq:condition_q0-appendix},  one necessarily have 
\begin{align*}
2B \frac{\card{\Indset}}{T} \le &
\exps{-2(AC + 1 ) \exps{C\frac{\sqrt{2}\maxsingular{\Red}}{\alpha}}} \\
\le &
\exps{-2C\left(\norm{q_0} + A  \frac{\card{\Indset}}{T} \exps{C \norm{q_0}} \right)}
\, .
\end{align*}
This guarantees that Eq.~\eqref{eq:final-appendix} and one can apply Theorem~\ref{theorem:minimizer.quantitative-appendix}, which yields the desired result. 
\end{proof}


\section{Gr\"onwall-Bahouri-Bellman type result}
\label{sec:gronwall-appendix}

In this section, we collect all results related to ODEs. 
In our setting, as seen in Lemma~\ref{lemma:lipschitz-Phi-appendix}, and in view of A.\ref{assumption:local_lipschitz-appendix}, the coefficients of Eq.~\ref{eq:ODE-appendix} are not globally Lispchitz (although locally-Lipschitz). 
Thus, while local existence and uniqueness of solutions to Eq.~\eqref{eq:ODE-appendix} is a given (small $\temps$ regime),  existence up to time $1$ and non-explosion of the solutions is much more challenging to achieve (large $\temps$ regime). 
Unfortunately, this is the regime that we are interested into: the local behavior of the ODE at $\mu=0$ does not tell us anything interesting, since what we aim at is the comparison between the starting point ($\mu=0$) and final point ($\mu=1$) of the dynamic. 
Our strategy is to use an \emph{ad hoc} extension of the Gr\"onwall-Bahouri-Bellman lemma to deal with our specific setting. 

Our approach is inspired by proofs of Gr\"onwall-Bellman-Bahouri type lemmas, see for example \citet{dannanIntegralInequalitiesGronwallBellmanBihari1985,agarwalGeneralizationRetardedGronwalllike2005,kimGronwallBellmanPachpatte2009,pachpatteJournalInequalitiesPure2004}.
It relies on an explicit integration of the integral inequality which will pop up in the computations. 
Note that, instead of generic local constants $\lipconstant$, $\supgradient$, and $w_{-1}$, and in view of Section~\ref{sec:doc2vec-appendix}, we will suppose that all those quantity are locally bounded by some exponential functions. 
Our derivation is very close to that of Pachpatte inequality \citep{pachpatteJournalInequalitiesPure2004}, but here we keep track of the constants. 
In doing, so \textbf{we gain an explicit criteria for non-explosion of the solutions} up to time $\temps =1$. 
To view other applications of non-explosion on the time-one map, one could also consult   \cite{bailleulNonexplosionCriteriaRough2020} and the references therein. 

\begin{theorem}[Gr\"onwall-Bahouri-Bellman type inequality]
\label{lemma:CS-appendix}
Let $\funclip : [0,1] \times \Reals^d \to \Reals^d $ be a continuous function and $a,b,c>0$ be numerical constants such that, for all $q,\tilde{q} \in \Reals^d$,
\begin{equation}
\label{eq:gronwall-hyp-simple}
\sup_{\temps \in [0,1]} \norm{\funclip(\temps,q)} \le a \exps{c \norm{q}}
\, ,
\end{equation}
and
\begin{equation}
\label{eq:gronwall-hyp-diff}
\sup_{\temps \in [0,1]}\norm{\funclip(\temps,q) - \funclip(\temps,\tilde{q})} \le b \exps{c(\norm{q}\vee\norm{\tilde{q}})} \norm{q-\tilde{q}}
\, .    
\end{equation}
Let $q_0 \in \Reals^d$ such that either $c=0$ or
    \begin{equation}
    \label{eq:condition_gronwall_lemma} 
2 b < \exp{- 2 c \left(\norm{q_0} + a \exps{c \norm{q_0}}\right)}.
\end{equation}
Then, there exists a unique function $\sol:[0,1] \to \Reals^d$ such that $\sol(0)=0$ and for all $\temps\in[0,1]$
\begin{equation}
\label{eq:general_ODE}
\sol'(\temps) = \funclip(\temps,\sol(\temps))
\, .
\end{equation}
Furthermore, for all $\temps \in [0,1]$,
\[
\norm{\sol(\temps) -  q_0} \le \begin{cases}
    2 \temps a \exps{c \norm{q_0}}  & \text{ if } c >0, \\
    \temps a \exps{b} & \text{ if } c =0\, . \\
\end{cases}
\]
\end{theorem}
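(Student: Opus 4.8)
The plan is to treat Eq.~\eqref{eq:general_ODE} with the initial condition $q(0)=q_0$ by combining a standard Cauchy--Lipschitz local existence step with a \emph{bootstrap} (continuity) argument; the point of the bootstrap is that on the sub-interval where the target estimate $\norm{q(\mu)-q_0}\le 2\mu a\exps{c\norm{q_0}}$ already holds, the nonlinear term $\phi\exps{c\phi}$ can be frozen, turning the inequality into a linear Gr\"onwall one, and condition \eqref{eq:condition_gronwall_lemma} is exactly calibrated so that this linear estimate reproduces a \emph{strict} version of the target bound. First I would record local existence: hypothesis \eqref{eq:gronwall-hyp-diff} shows that $\funclip$ is Lipschitz in $q$ with constant $b\exps{c\rho}$ on every Euclidean ball of radius $\rho$, uniformly in $\temps$, so Cauchy--Lipschitz gives a unique maximal solution $q:[0,T^{\star})\to\Reals^d$ for some $T^{\star}\in(0,+\infty]$. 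Writing $\phi(\mu)\defeq\norm{q(\mu)-q_0}$, using $q(\mu)=q_0+\int_0^\mu\funclip(s,q(s))\dd s$, the bound $\norm{q(s)}\le\norm{q_0}+\phi(s)$, and both \eqref{eq:gronwall-hyp-simple}--\eqref{eq:gronwall-hyp-diff}, I would obtain, with $A_0\defeq a\exps{c\norm{q_0}}$ and $B_0\defeq b\exps{c\norm{q_0}}$,
\[
\phi(\mu)\le A_0\mu+B_0\int_0^\mu\phi(s)\exps{c\phi(s)}\dd s,\qquad\mu\in[0,\min(T^{\star},1)).
\]

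Next, assuming $c>0$, let $T^{\star\star}$ be the supremum of the $\tau\in[0,\min(T^{\star},1))$ such that $\phi\le 2A_0\mathrm{id}$ on $[0,\tau]$. Since $\phi$ is differentiable at $0$ with $\phi'(0^+)=\norm{\funclip(0,q_0)}\le A_0<2A_0$, one has $T^{\star\star}>0$. On $[0,T^{\star\star}]$ we have $\phi(s)\le 2A_0 s<2A_0$, hence $\exps{c\phi(s)}\le\exps{2cA_0}$, and the displayed inequality becomes the linear one $\phi(\mu)\le A_0\mu+B_0\exps{2cA_0}\int_0^\mu\phi(s)\dd s$, so the classical integral Gr\"onwall lemma (with nondecreasing forcing term $A_0\mu$) yields $\phi(\mu)\le A_0\mu\,\exps{B_0\exps{2cA_0}\mu}$. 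The crucial computation is that \eqref{eq:condition_gronwall_lemma} forces $B_0\exps{2cA_0}=b\,\exps{c\norm{q_0}+2ca\exps{c\norm{q_0}}}<\tfrac12\exps{-c\norm{q_0}}\le\tfrac12<\log 2$, whence $\phi(\mu)<2A_0\mu$ for $\mu\in(0,T^{\star\star}]$. By continuity this strict inequality persists slightly past $T^{\star\star}$, contradicting the definition of $T^{\star\star}$ unless $T^{\star\star}=\min(T^{\star},1)$; thus $\phi\le 2A_0\mathrm{id}$ on all of $[0,\min(T^{\star},1))$.

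This a priori bound keeps $q$ inside a fixed ball $\{\norm{\cdot}\le\norm{q_0}+2A_0\}$ on $[0,\min(T^{\star},1))$; on this ball $\funclip$ is bounded and Lipschitz, so if $T^{\star}\le 1$ the limit $\lim_{\mu\to T^{\star}}q(\mu)$ exists and the solution extends beyond $T^{\star}$, contradicting maximality. Hence $T^{\star}>1$, the solution is defined and unique on $[0,1]$, and $\norm{q(\mu)-q_0}\le 2\mu a\exps{c\norm{q_0}}$ there. For $c=0$, hypotheses \eqref{eq:gronwall-hyp-simple}--\eqref{eq:gronwall-hyp-diff} make $\funclip$ globally bounded by $a$ and globally $b$-Lipschitz, so global existence and uniqueness are immediate, and $\phi(\mu)\le a\mu+b\int_0^\mu\phi(s)\dd s$ gives $\phi(\mu)\le a\mu\exps{b\mu}\le\mu a\exps{b}$ by the linear Gr\"onwall lemma.

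The main obstacle is the $c>0$ estimate of the previous two paragraphs: one cannot integrate the nonlinear inequality $\phi\le A_0\mu+B_0\int\phi\exps{c\phi}$ directly while retaining usable constants (the naive separable comparison is precisely the blow-up-prone regime of Remark~\ref{rem:crude-bounds-appendix} / Eq.~\eqref{eq:Gronwall.classical-appendix}). The bootstrap sidesteps this, but it only closes because \eqref{eq:condition_gronwall_lemma} is sharp enough to force $B_0\exps{2cA_0}<\log 2$; making this quantitative link explicit, together with verifying that the bootstrap set is both closed and relatively open in the connected interval $[0,\min(T^{\star},1))$ so that it must be the whole interval, is where the real work lies.
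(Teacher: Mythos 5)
Your proof is correct, and for the crucial $c>0$ case it takes a genuinely different route from the paper. The paper works with the averaged quantity $\cQ(\temps)=\frac{1}{\temps}\int_0^\temps(\cdot)$, derives the nonlinear differential inequality $\cQ'\le b\exps{2c\norm{q_0}}\cQ\exps{c\cQ}$, and then integrates it \emph{explicitly} in Bihari fashion: it bounds $x\exps{cx}\le\frac1c\exps{2cx}$, multiplies by $\exps{-2c\cQ}$, integrates to get a closed-form bound on $\exps{c\cQ(\temps)}$ valid as long as $\exps{-2ca\exps{c\norm{q_0}}}-2b\exps{2c\norm{q_0}}\temps>0$ (this is exactly where condition \eqref{eq:condition_gronwall_lemma} comes from), and then re-injects that bound into the linear-in-$\cQ$ inequality and integrates a second time. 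You instead run a bootstrap/continuation argument: on the maximal subinterval where $\phi\le 2A_0\,\mathrm{id}$ you freeze $\exps{c\phi}\le\exps{2cA_0}$, apply the linear Gr\"onwall lemma, and check that \eqref{eq:condition_gronwall_lemma} forces the resulting exponent $B_0\exps{2cA_0}$ below $\log 2$, so the bound self-improves strictly and propagates to all of $[0,1]$; your arithmetic here ($b\exps{c\norm{q_0}}\exps{2cA_0}<\frac12\exps{-c\norm{q_0}}\le\frac12<\log 2$) is correct, as is the subsequent non-explosion step via confinement to a fixed ball. Your approach is more elementary (one linear Gr\"onwall application instead of two explicit integrations) and in fact shows the hypothesis has slack — you only need $b\exps{c\norm{q_0}+2ca\exps{c\norm{q_0}}}<\log 2$, which is weaker than \eqref{eq:condition_gronwall_lemma}; what the paper's explicit integration buys is a derivation that \emph{produces} the admissibility condition and the constant $2$ organically, together with intermediate bounds (e.g.\ Eq.~\eqref{eq:first_bound_cQ-appendix}) that retain a finer dependence on $\temps$. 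The only cosmetic point worth noting is that the theorem's statement says $\sol(0)=0$ where it clearly means $\sol(0)=q_0$; both you and the paper implicitly prove the latter.
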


\begin{proof}
\textbf{Step 1: Existence of the map satisfying \eqref{eq:general_ODE}.}
Note that since $\funclip$ is locally Lipschitz continuous, thanks to the Cauchy-Lipschitz/Picard-Lindel\"of theorem (see \citet[Chapter 2]{arnoldOrdinaryDifferentialEquations1978}), there exists an open interval $I^\star$ of $[0,1]$ and a unique function $\sol : I^\star \to \Reals^d$ such that $\sol$ is the unique solution to Eq.~\eqref{eq:general_ODE}. 
Note that an open interval of $[0,1]$ which contains $0$ is necessarily of the form $[0,\tau)$ with $0<\tau<1$ or $[0,1]$. 
Remark also that for all $\temps \in I^\star$, thanks to the regularity assumption on $\Lambda$, on $I^\star$, $\temps \mapsto \Lambda(\temps,q(\temps))$ is continuous and for $\temps \in I^\star$ the following integral equation  is satisfied:
\[
q(\temps) = q_0 + \int_0^{\temps} \Lambda(\tilde{\temps},q(\tilde{\temps}))\dd \tilde{\temps}
\, .
\]

\textbf{Step 2: .}
Taking the norm in the previous display and using the triangle inequality, we see that 
\[
\norm{q(\temps) - q_0} \leq \int_0^{\temps} \norm{\funclip(\tilde{\temps},q_0)} \dd \tilde{\temps} + \int_0^\temps \norm{\funclip(\tilde{\temps},\sol(\temps))-\funclip(\tilde{\temps},q_0)} \dd \tilde{\temps}
\, .
\]
Using our assumptions on $\Lambda$, namely Eqs.~\eqref{eq:gronwall-hyp-simple} and~\eqref{eq:gronwall-hyp-diff}, we obtain
\[
\norm{q(\temps) - q_0} \leq \temps a \exps{c\norm{q_0}} + b \int_0^\temps \exps{c (\norm{\sol(\tilde{\temps})}+\norm{q_0})}   \norm{\sol(\tilde{\temps})-q_0}\dd \tilde{\temps}
\, .
\]
Since $\norm{\sol(\mutilde)}-\norm{\sol(q_0)}\leq \norm{\sol(\mutilde)-q_0}$, we deduce that 
\begin{align*}
\norm{q(\temps) - q_0} \le  \temps a \exps{c\norm{q_0}} + b \exps{2c\norm{q_0}} \int_0^\temps \exps{c \norm{\sol(\tilde{\temps})-q_0}}\norm{\sol(\tilde{\temps})-q_0}\dd \tilde{\temps}
\, .
\end{align*}
Let us define for all $\temps\in I^\star$,
\[
\cQ(\temps) =  
\begin{cases}
a \exps{c\norm{q_0}} + b \exps{2c\norm{q_0}} \frac{1}{\temps} \int_0^\temps \exps{c \norm{\sol(\tilde{\temps})-q_0}}\norm{\sol(\tilde{\temps})-q_0}\dd \tilde{\temps} & \text{ if } \temps>0 \, ,\\
a \exps{c\norm{q_0}} & \text{ if } \temps = 0\, .
\end{cases}
\]
Note that $\frac{1}{\temps} \int_0^\temps \exps{c \norm{q(\tilde{\temps})-q_0}} \norm{q(\tilde{\temps}) - q_0} \dd \tilde{\temps} \to_{\temps \to 0} \exps{c \norm{q(0)-q_0}} \norm{q(0) - q_0} = 0$ and $\cQ$ is continuous in $\temps = 0$. 
Furthermore, for $\temps \in I^\star \backslash\{0\}$,
\begin{equation}
\label{eq:derivatives_cQ-appendix}
\cQ'(\mu) = -\frac{1}{\temps^2 } b \exps{2c\norm{q_0}} \int_0^\temps \exps{c \norm{q(\tilde{\temps})-q_0}}  \norm{q(\tilde{\temps}) - q_0}  \dd \tilde{\temps} + \frac{1}{\mu}b \exps{c \norm{q(\temps)-q_0}} \norm{q(\temps) - q_0}
\, . 
\end{equation}

With this notation in hand, for any $\temps \in I^\star\backslash\{0\}$, $\norm{\sol(\temps)-q_0} \le \temps \cQ(\temps)$. 
Since we restrict our attention to $\temps \leq 1$, we have 
\begin{equation}
\label{eq:gronwall-first-possible-improvement}
\norm{\sol(\temps)-q_0} \le \cQ(\temps)  
\, .
\end{equation}

\textbf{Step 3: Differential inequality.}
Since $x\mapsto x \exps{cx}$ is a non-decreasing function, we have for $ \temps \in I^\star\backslash\{0\}$
\begin{align}
\cQ'(\temps) & \le b \exps{2c\norm{q_0}} \frac{\norm{\sol(\temps)-q_0}}{\temps} \exps{c \temps \frac{\norm{\sol(\temps)-q_0}}{\temps}} \nonumber \\
 & \le b \exps{2c\norm{q_0}} \cQ(\temps) \exps{c \cQ(\temps)}
\, ,
\label{eq:ineq_CQ_proof_CS-appendix} 
\end{align}
where we used Eq.~\eqref{eq:derivatives_cQ-appendix} for a direct bound one the derivative and Eq.~\eqref{eq:gronwall-first-possible-improvement} to obtain the last display. 

\textbf{Step 4: Cauchy-Lipschitz setting ($c=0$).}
Suppose for a moment that $c=0$ so that we are in the standard setting of global Cauchy-Lipschitz/Picard Lindel\"of Theorem and standard Gr\"onwall Lemma. 
We have for $\temps\in I^\star$
\[
\log\left(\frac{\cQ(\temps)}{a}\right) \le b \temps
\, ,
\]
 $I^\star=[0,1]$ and 
\[\norm{\sol(\temps)-q_0} \le \temps \cQ(\temps) \le a \temps \exps{b \temps}
\, .
\]

\textbf{Step 5: Gr\"onwall-Bahouri-Bellman integration ($c>0$).}
Suppose now that $c>0$. 
Let $\beta>0$. 
Let us remark that for all $x\ge 0$, $x \exps{cx} \le  \frac1{c}\exps{2cx}$, and we have 
\begin{equation}
\label{eq:gronwall-second-possible-improvement}
\cQ'(\temps) \le \frac{b}{c} \exps{2c\norm{q_0}} \exps{2c \cQ(\temps)}
\, .
\end{equation}
Multiplying both sides of Eq.~\eqref{eq:gronwall-second-possible-improvement} by $\exps{-2c \cQ(\temps)}$, one recognize (up to constants) the derivative of $\exps{-2c\cQ}$. Integrating from $0$ to $\temps$, we have proved that  
\begin{equation}
\label{eq:gronwall-after-integration}
\frac{\exps{-2 c a \exps{ c \norm{q_0}}} - \exps{-2c \cQ(\temps) }}{2} \le b \exps{2c\norm{q_0}} \temps 
\, .
\end{equation}
When 
\begin{equation}\label{eq:condition:gronwall} 
\exps{- 2 c a \exps{c \norm{q_0}}} - 2 b \exps{2c \norm{q_0}} \temps >0
\, ,
\end{equation}
we have
\begin{equation}\label{eq:first_bound_cQ-appendix}
\exps{c \cQ(\temps) } \le\left(\exps{-2 c a \exps{c \norm{q_0}}} - 2 b \exps{2c \norm{q_0}} \temps\right)^{-\frac{1}{2}}
\, .
\end{equation}
Furthermore, whenever Eq~\eqref{eq:condition_gronwall_lemma} holds (namely Eq.~\eqref{eq:condition:gronwall} is true for all $\temps \in[0,1]$)
we can take $I^\star = [0,1]$, since Eq.~\eqref{eq:first_bound_cQ-appendix} guaranty that $\cQ$ does not explode.

For $\temps \in I^\star\backslash\{0\}$ which satisfies Eq.~\eqref{eq:condition:gronwall}, when using the previous bound and Eq.~\eqref{eq:ineq_CQ_proof_CS-appendix}, we have the following inequality :
\[
\cQ'(\temps) \le b \exps{2 c \norm{q_0}} \left(\exps{- 2 c a \exps{ c \norm{q_0}}} - 2 b \exps{2c \norm{q_0}} \temps\right)^{-\frac{1}{2}} \cQ(\temps)
\, .
\]
When dividing by $\cQ(\temps)$ and integrating, we get
\[
\log(\cQ(\temps))-\log(\cQ(0)) 
\le 
\exp{b \exps{2 c \norm{q_0}} \int_0^\temps\left(\exps{- 2 c a \exps{ c \norm{q_0}}} - 2 b \exps{2c \norm{q_0}} \tilde{\temps}\right)^{-\frac{1}{2}} \dd \tilde{\temps}}
\, .
\]
Therefore, for all $\mu$ which satisfies Eq.~\eqref{eq:condition:gronwall},
\begin{align}
\norm{\sol(\temps) - q_0} \le  \temps \cQ(\temps) \le & \temps a \exps{c \norm{q_0}} \exp{ \int_0^\temps b \exps{2c\norm{q_0}}\left(\exps{-2 c a \exps{ c \norm{q_0}}} - 2 b \exps{2c \norm{q_0}} \tilde\temps\right)^{-\frac{1}{2}} \dd  \tilde{\temps} }\nonumber\\
\le & \temps a \exps{c \norm{q_0}} \exp{ \left(\exps{- 2 c a \exps{ c \norm{q_0}}}\right)^{\frac{1}{2}} - \left(\exps{-2 c a \exps{c \norm{q_0}}}-2 b \exps{2c \norm{q_0}} \temps\right)^{\frac{1}{2}}} \label{eq:condition_gronwall_explanation-appendix}\\
\le & \temps a \exps{ c \norm{q_0}} \exp{ \left(2 b  \exps{2 c \norm{q_0}} \temps \right)^{\frac12}  }
\, , \nonumber
\end{align}
where we have use that for $0\le v < \tilde{v} $, $\sqrt{\tilde{v}} - \sqrt{v} \le \sqrt{\tilde{v} - v}$.
Note that Eq.~\eqref{eq:condition_gronwall_explanation-appendix}  makes sense since Eq.~\eqref{eq:condition:gronwall} is satisfied. 
Finally, one can use Eq.~\eqref{eq:condition:gronwall} and write
\[
2 b  \exps{2 c \norm{q_0}} \temps \le 2 b  \exps{2 c \norm{q_0}} \le \exps{-2ac \exps{2\norm{q_0}}} \le 1
\, ,
\]
which gives 
\[
\norm{\sol(\temps) - q_0} \le \temps 2 a \exps{c \norm{q_0}}
\, ,
\]
which is the wanted result. 
\end{proof}

\begin{remark}[Improving Theorem~\ref{theorem:gronwall_doc2vec-appendix}]
There are several open avenues to improve Theorem~\ref{theorem:gronwall_doc2vec-appendix}. One possibility is to keep a finer the dependency in $\temps$ when bounding $\norm{q(\temps)-q_0}$ (namely keeping the $\mu$ factor when deriving Eq.~\eqref{eq:gronwall-first-possible-improvement}). 
A second possible improvement is to use a finer inequality than $y\leq \exps{y}$ when deriving Eq.~\eqref{eq:gronwall-second-possible-improvement}. 
Unfortunately, in both cases, we were unsuccessful in integrating these more complicated expressions in a tractable form (derivation leading to Eq.~\eqref{eq:gronwall-after-integration}). 
\end{remark}



\section{Technical results related to the softmax function}
\label{sec:softmax-appendix}


In this section, we collect all technical facts related to the softmax function used throughout the proofs. 
Let us recall that we defined the softmax function from $\Reals^D$ to
$\Reals^D$ as $\sigma (x) = (\sigma_1 (x), \ldots, \sigma_D (x))^\top$, where
for all $i \in [D]$,
\[ 
\sigma_i (x) = \frac{\exps{x_i}}{\sum_{j = 1}^D \exps{x_j}} . \]
We also defined, for all $x\in \Reals^D$ and all $i\in[D]$,
\[
\psi_i(x) = -\log(\sigma_i(x))
\, .
\]


\subsection{Basics on the softmax function}

We start by recalling elementary properties of the softmax function. 

\begin{lemma}[Softmax derivatives]
\label{lemma:softmax-derivatives}
We have
\[
\frac{\partial }{\partial x_k} \Softmax_\ell(x) = 
\begin{cases}
\Softmax_k(x)(1-\Softmax_k(x)) &\text{ if } k=\ell \\
-\Softmax_k(x)\Softmax_\ell(x) & \text{ otherwise.}
\end{cases}
\]
In a more concise way,
\[
\nabla \sigma (x) = \Diag(\sigma(x))  - \sigma(x) \sigma(x)^\top
\, .
\]
\end{lemma}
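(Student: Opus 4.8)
The plan is to differentiate the softmax directly from its definition and then reassemble the resulting partial derivatives into matrix form. Write $Z(x)\defeq\sum_{j=1}^D\exps{x_j}$ for the normalizing sum, so that $\Softmax_\ell(x)=\exps{x_\ell}/Z(x)$ for every $\ell\in[D]$.

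First I would record the two elementary identities $\partial_{x_k}\exps{x_\ell}=\delta_{k\ell}\exps{x_\ell}$ and $\partial_{x_k}Z(x)=\exps{x_k}$. The quotient rule applied to $\Softmax_\ell=\exps{x_\ell}/Z$ then gives, for all $k,\ell\in[D]$,
\[
\frac{\partial}{\partial x_k}\Softmax_\ell(x)=\frac{\delta_{k\ell}\exps{x_\ell}Z(x)-\exps{x_\ell}\exps{x_k}}{Z(x)^2}=\delta_{k\ell}\Softmax_\ell(x)-\Softmax_\ell(x)\Softmax_k(x).
\]
Splitting into the case $k=\ell$ (where $\delta_{k\ell}=1$, yielding $\Softmax_k(x)(1-\Softmax_k(x))$) and the case $k\neq\ell$ (where $\delta_{k\ell}=0$, yielding $-\Softmax_k(x)\Softmax_\ell(x)$) produces the first displayed formula.

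For the concise matrix statement, I would observe that the $(\ell,k)$ entry of the Jacobian $\nabla\sigma(x)$ equals $\partial_{x_k}\Softmax_\ell(x)=\delta_{k\ell}\Softmax_\ell(x)-\Softmax_\ell(x)\Softmax_k(x)$, and then match terms: $\delta_{k\ell}\Softmax_\ell(x)$ is the $(\ell,k)$ entry of $\Diag(\sigma(x))$, while $\Softmax_\ell(x)\Softmax_k(x)$ is the $(\ell,k)$ entry of $\sigma(x)\sigma(x)^\top$. Hence $\nabla\sigma(x)=\Diag(\sigma(x))-\sigma(x)\sigma(x)^\top$; note that this matrix is symmetric, so the ordering convention used for the Jacobian is immaterial here.

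There is essentially no obstacle in this proof: it is a routine differentiation. The only point deserving a line of care is keeping the Jacobian index convention consistent with the rest of the paper — a concern the symmetry of $\Diag(\sigma(x))-\sigma(x)\sigma(x)^\top$ renders moot.
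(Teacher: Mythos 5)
Your proof is correct: the quotient-rule computation and the reassembly into $\Diag(\sigma(x))-\sigma(x)\sigma(x)^\top$ are exactly the standard derivation, and the paper itself omits the proof of this lemma, treating it as a routine computation (citing Gao and Pavel for the analogous identities). Your remark that the symmetry of the Jacobian makes the index convention immaterial is a nice touch, but nothing further is needed.
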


A straightforward consequence of Lemma~\ref{lemma:softmax-derivatives} is the computation of the first derivatives of $\psi_i$ (these are very standard computations, see for instance Proposition~1 and~2 in \citet{gao_pavel_2017}).

\begin{lemma}[Gradient of $\psi_i$]
\label{lemma:psi-gradient}
	We have
	\[
	\frac{\partial }{\partial x_k} \psi_i(x) = 
	\begin{cases}
	-1 + \Softmax_k(x) & \text{ if } k=i \\
	\Softmax_k(x) & \text{ otherwise.}
	\end{cases}
	\]
 In more concise notation, $\nabla \psi_i = \Softmax - \indic{i}$. 
 \end{lemma}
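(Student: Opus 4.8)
The plan is to reduce $\psi_i$ to a transparent closed form and differentiate directly. First I would use the definition $\sigma_i(x) = e^{x_i}/\sum_{j=1}^D e^{x_j}$ to write
\[
\psi_i(x) = -\log \sigma_i(x) = -x_i + \log\left(\sum_{j=1}^D e^{x_j}\right)
\, ,
\]
which is valid for every $x \in \Reals^D$ since the argument of the logarithm is strictly positive.

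Next I would differentiate this expression coordinate-wise. The term $-x_i$ contributes $-\indic{\{k=i\}}$ to $\partial \psi_i / \partial x_k$, while the log-sum-exp term contributes, by the chain rule, $e^{x_k}\big/\sum_{j=1}^D e^{x_j} = \sigma_k(x)$. Adding these two contributions yields $\partial \psi_i(x)/\partial x_k = \sigma_k(x) - \indic{\{k=i\}}$, which is exactly the claimed case distinction ($\sigma_k(x)-1$ when $k=i$, and $\sigma_k(x)$ otherwise). Collecting the coordinates into a vector gives the concise form $\nabla \psi_i = \sigma - \indic{i}$.

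As a cross-check I would also mention the route that avoids rewriting $\psi_i$: apply the chain rule together with Lemma~\ref{lemma:softmax-derivatives}, using $\partial \psi_i(x)/\partial x_k = -\sigma_i(x)^{-1}\,\partial \sigma_i(x)/\partial x_k$. When $k=i$ this gives $-\sigma_i(x)^{-1}\sigma_i(x)(1-\sigma_i(x)) = \sigma_i(x)-1$, and when $k\neq i$ it gives $-\sigma_i(x)^{-1}(-\sigma_i(x)\sigma_k(x)) = \sigma_k(x)$, matching the previous computation. There is no genuine obstacle here; the only point worth a remark is that the denominator $\sum_j e^{x_j}$ never vanishes, so $\psi_i$ is well-defined and smooth, which is immediate.
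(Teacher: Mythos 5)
Your proof is correct. The paper gives no explicit argument here, simply noting the lemma is a straightforward consequence of Lemma~\ref{lemma:softmax-derivatives}, which is exactly your chain-rule cross-check; your primary route via the rewrite $\psi_i(x) = -x_i + \log\big(\sum_j e^{x_j}\big)$ is an equally standard and, if anything, slightly more self-contained derivation, so both computations match the stated result.
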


Similarly, we have:

\begin{lemma}[Hessian of $\psi_i$]
\label{lemma:psi-hessian}
We have 
\[
\frac{\partial^2}{\partial x_k\partial x_\ell} \psi_i(x) = 
\begin{cases}
\softmax{x}_k(1-\softmax{x}_k) \text{ if } k=\ell \\
-\softmax{x}_k\softmax{x}_\ell \text{ otherwise.}
\end{cases}
\]
In more concise notation,  
\begin{equation}
\label{eq:hessian-psi-ind}
\nabla^2 \psi_i = \nabla \sigma = \Diag(\sigma) - \sigma \sigma^\top.
\end{equation}
\end{lemma}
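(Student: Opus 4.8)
The plan is to obtain the Hessian of $\psi_i$ by differentiating once more the gradient formula already established in Lemma~\ref{lemma:psi-gradient}. Recall that $\psi_i = -\log\sigma_i$ and that Lemma~\ref{lemma:psi-gradient} gives the compact expression $\nabla\psi_i = \sigma - \indic{i}$, where $\indic{i}$ is the $i$-th vector of the canonical basis of $\Reals^D$.

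First I would observe that $\indic{i}$ does not depend on $x$, so its contribution to the Hessian vanishes and we simply get $\nabla^2\psi_i = \nabla\sigma$, the Jacobian of the softmax map. Then I would invoke Lemma~\ref{lemma:softmax-derivatives}, which states precisely that $\partial_{x_k}\sigma_\ell(x) = \sigma_k(x)(\delta_{k\ell} - \sigma_\ell(x))$, that is, $\sigma_k(x)(1-\sigma_k(x))$ when $k=\ell$ and $-\sigma_k(x)\sigma_\ell(x)$ otherwise; in matrix form, $\nabla\sigma = \Diag(\sigma) - \sigma\sigma^\top$. Combining the two displays yields both the coordinatewise formula and the concise identity in the statement. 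Alternatively, one may bypass the gradient lemma and argue directly from $\psi_i(x) = -x_i + \log\bigl(\sum_{j=1}^D e^{x_j}\bigr)$: the linear term $-x_i$ has zero Hessian, and two successive differentiations of the log-sum-exp term reproduce exactly the entries of $\Diag(\sigma) - \sigma\sigma^\top$.

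There is no genuine obstacle here --- the result is an immediate corollary of Lemma~\ref{lemma:softmax-derivatives} and Lemma~\ref{lemma:psi-gradient} --- so the only points requiring (minor) care are bookkeeping the Kronecker-delta cases correctly and noting the symmetry $\nabla^2\psi_i = (\nabla^2\psi_i)^\top$, which is automatic since $\Diag(\sigma) - \sigma\sigma^\top$ is symmetric. It is worth emphasizing, as the statement does, that the Hessian does not depend on $i$; this is exactly the fact recorded in Eq.~\eqref{eq:hessian-psi-ind} and exploited repeatedly in the \texttt{doc2vec} analysis, for instance in the computation of $\nabla^2\Psilinear$ in Lemma~\ref{lemma:gradient_psilin}.
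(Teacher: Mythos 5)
Your proposal is correct and follows exactly the route the paper intends: the lemma is an immediate consequence of $\nabla\psi_i = \sigma - \indic{i}$ (Lemma~\ref{lemma:psi-gradient}) together with the softmax Jacobian formula (Lemma~\ref{lemma:softmax-derivatives}), the constant term $\indic{i}$ dropping out upon differentiation. The paper leaves this as a standard computation, and your argument (including the log-sum-exp alternative) fills it in correctly.
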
 

\begin{corollary}[Convexity of log-softmax]
\label{lemma:convexity_logsoftmax-appendix}
For any $i\in [D]$, the function $\psi_i$ is convex. 
\end{corollary}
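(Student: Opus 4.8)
The plan is to read off convexity directly from the Hessian formula established in Lemma~\ref{lemma:psi-hessian}, namely $\nabla^2 \psi_i(x) = \Diag(\sigma(x)) - \sigma(x)\sigma(x)^\top$. Since a twice-differentiable function on $\Reals^D$ is convex if and only if its Hessian is positive semi-definite everywhere, it suffices to check that this matrix is PSD for every $x \in \Reals^D$. Fix $x$ and write $s \defeq \sigma(x)$, so that $s_j \geq 0$ for all $j$ and $\sum_{j=1}^D s_j = 1$.

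The key (and essentially only) step is the inequality $v^\top \big(\Diag(s) - s s^\top\big) v \geq 0$ for all $v \in \Reals^D$. Expanding, this reads $\sum_{j=1}^D s_j v_j^2 - \big(\sum_{j=1}^D s_j v_j\big)^2 \geq 0$, which is exactly the statement that the variance of the random variable taking value $v_j$ with probability $s_j$ is non-negative; equivalently it is the Cauchy--Schwarz / Jensen inequality $\big(\sum_j s_j v_j\big)^2 = \big(\sum_j \sqrt{s_j}\cdot \sqrt{s_j}\, v_j\big)^2 \leq \big(\sum_j s_j\big)\big(\sum_j s_j v_j^2\big) = \sum_j s_j v_j^2$, using $\sum_j s_j = 1$. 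There is no real obstacle here; the only thing to be careful about is invoking that $\sigma$ is $\mathcal{C}^\infty$ (clear from its explicit formula, or from Lemma~\ref{lemma:softmax-derivatives}) so that the Hessian characterization of convexity applies, and noting the argument holds at every $x$ since $\sigma(x)$ is a probability vector for every $x$. This completes the proof; I would present it in two or three lines.
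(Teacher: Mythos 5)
Your proposal is correct and follows essentially the same route as the paper: both read convexity off the positive semi-definiteness of the Hessian $\nabla^2\psi_i = \Diag(\sigma)-\sigma\sigma^\top$ by recognizing the quadratic form $v^\top\nabla^2\psi_i(x)\,v$ as the variance of $v$ under the probability vector $\sigma(x)$. The paper establishes non-negativity by completing the square into $\sum_k\sigma_k(x)\bigl(v_k-\sum_\ell\sigma_\ell(x)v_\ell\bigr)^2$ while you invoke Cauchy--Schwarz, but these are interchangeable one-line justifications of the same fact.
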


The proof of the previous fact relies on the Courant minimax theorem, which gives the value of the eigenvalue of a real symmetric matrix. Furthermore, we also use that fact that a function such that its Hessian is a non-negative symmetric matrix is convex. 

\begin{proof}
Let $x,v\in \Reals^D$. Since $\sum_{i} \sigma_i(x) = 1$, we have 
\begin{align*}
    \scalar{\nabla^2 \psi_i(x) v }{v} &= \sum_k \sigma_k(x) v_k^2  - \sum_k\sigma_k(x) \sum_{\ell} \sigma_\ell(x) v_\ell v_k  \\ 
    &= \sum_k \sigma_k(x) v_k^2  - \left(\sum_k\sigma_k(x)  v_k\right)^2 = \sum_{k}\sigma_k(x)\left(v_k - \sum_{\ell} \sigma_\ell(x) v_\ell\right)^2 \ge 0
    \, ,
\end{align*}
Hence, thanks to the Courant minimax principle, all the eigenvalues of $\nabla^2 \psi_i$ are non-negative. Hence $\nabla^2 \psi_i$ is a non-negative symmetric matrix, and $\psi_i$ is a convex function.  
\end{proof}

The following proposition controls the spectrum of the Hessian of $\psi_i$, that is the gradient of the softmax, in function of the minimal and maximal values of the softmax function. 

\begin{lemma}[Spectrum of the softmax Jacobian]
\label{lemma:smallest-eigenvalue-appendix}
Let $\radius>0$. For $x\in \Reals^D$, let us define 
\[
\sigma_{(1)}(x) \defeq \min_{i \in [D]} \sigma_i (x)
\, ,\]
and 
\[
\sigma_{(D)}(x) \defeq \max_{i \in [D]} \sigma_i (x)
\, .
\]
Let us define 
\[
\lambdamin(x) \defeq \min \left\{\spec{\nabla \sigma(x)} \backslash\{0\}\right\}
\, ,
\]
and
\[
\lambdamax(x) \defeq \max \left\{\spec{\nabla \sigma(x)}\right\} 
\, .
\]
Then
\[
D \sigma_{(1)}^2(x) \le \lambdamin(x) \le \lambdamax(x) \le D \sigma_{(D)}^2(x)
\, .
\]
\end{lemma}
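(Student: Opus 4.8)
The matrix $\nabla\sigma(x) = \Diag(\sigma(x)) - \sigma(x)\sigma(x)^\top$ is symmetric positive semi-definite (this is essentially Corollary~\ref{lemma:convexity_logsoftmax-appendix}), and it has a one-dimensional kernel spanned by $\Indic$, since $\nabla\sigma(x)\Indic = \sigma(x) - \sigma(x)(\sigma(x)^\top\Indic) = \sigma(x) - \sigma(x) = 0$ and, by the strict convexity computation $\scalar{\nabla^2\psi_i(x)v}{v} = \sum_k \sigma_k(x)(v_k - \bar v)^2$ with $\bar v = \sum_\ell \sigma_\ell(x)v_\ell$, this quadratic form vanishes only when all $v_k$ are equal. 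Consequently $\lambdamin(x)$ is the smallest eigenvalue of the restriction of $\nabla\sigma(x)$ to $\Indic^\perp$, and $\lambdamax(x)$ is the largest eigenvalue over all of $\Reals^D$. The plan is to bound both by sandwiching the Rayleigh quotient $\scalar{\nabla\sigma(x)v}{v}/\norm{v}^2$ between $D\sigma_{(1)}^2$ and $D\sigma_{(D)}^2$ on the appropriate subspaces.

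For the upper bound on $\lambdamax$, I would drop the negative term: for any $v$, $\scalar{\nabla\sigma(x)v}{v} = \sum_k \sigma_k v_k^2 - \bar v^2 \le \sum_k \sigma_k v_k^2 \le \sigma_{(D)}\norm{v}^2$. This already gives $\lambdamax \le \sigma_{(D)}$, which is stronger than $D\sigma_{(D)}^2$ unless $\sigma_{(D)} \ge 1/D$ — but $\sigma_{(D)} \ge 1/D$ always holds since the largest of $D$ numbers summing to $1$ is at least the average, so $D\sigma_{(D)}^2 \ge \sigma_{(D)} \ge \lambdamax$ and we are done. For the lower bound on $\lambdamin$, I would use the variational characterization: $\lambdamin(x) = \min\{\scalar{\nabla\sigma(x)v}{v} : v\in\Indic^\perp,\ \norm{v}=1\}$. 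Using the identity $\scalar{\nabla\sigma(x)v}{v} = \sum_k \sigma_k(v_k - \bar v)^2$, and noting that for $v\in\Indic^\perp$ one has $\sum_k v_k = 0$, I want to lower-bound $\sum_k \sigma_k(v_k-\bar v)^2$. The cleanest route: $\sum_k \sigma_k(v_k - \bar v)^2 \ge \sigma_{(1)}\sum_k (v_k - \bar v)^2 = \sigma_{(1)}(\norm{v}^2 - D\bar v^2)$. It remains to control $\bar v^2 = (\sum_k \sigma_k v_k)^2$; since $v\in\Indic^\perp$ we can write $\bar v = \sum_k (\sigma_k - 1/D)v_k$, and by Cauchy–Schwarz $\bar v^2 \le \norm{\sigma - \Indic/D}^2 \norm{v}^2$. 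One then checks $\norm{\sigma - \Indic/D}^2 = \norm{\sigma}^2 - 1/D \le \sigma_{(D)} - 1/D \le 1 - 1/D$, giving $D\bar v^2 \le (D-1)\norm{v}^2$, hence $\scalar{\nabla\sigma(x)v}{v} \ge \sigma_{(1)}(1 - (D-1))\norm{v}^2$, which is useless (negative for $D\ge 3$). So that bound is too lossy.

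The real obstacle is the lower bound on $\lambdamin$, and I expect it needs a sharper argument than naive Cauchy–Schwarz. The right approach is probably to diagonalize more carefully or to use the Sherman–Morrison/interlacing structure: $\nabla\sigma(x)$ is a diagonal matrix $\Diag(\sigma)$ minus a rank-one perturbation $\sigma\sigma^\top$. By the eigenvalue interlacing theorem for rank-one updates, the eigenvalues of $\nabla\sigma(x)$ interlace those of $\Diag(\sigma)$, which are exactly $\sigma_1,\dots,\sigma_D$; moreover the nonzero eigenvalues are the roots of the secular equation $1 - \sum_k \sigma_k^2/(\sigma_k - \lambda) = 0$ (the secular equation for $\Diag(\sigma) - \sigma\sigma^\top$, using that $\sigma^\top\Diag(\sigma)^{-1}\sigma = \sum_k \sigma_k = 1$, which is precisely why $0$ is an eigenvalue). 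Analyzing this secular equation on $(0, \sigma_{(1)})$ should yield $\lambdamin \ge D\sigma_{(1)}^2$: one shows the function $f(\lambda) = 1 - \sum_k \sigma_k^2/(\sigma_k - \lambda)$ is still positive at $\lambda = D\sigma_{(1)}^2$ (equivalently, that the smallest root exceeds this value) by bounding each term $\sigma_k^2/(\sigma_k - \lambda) \le \sigma_k^2/(\sigma_k - D\sigma_{(1)}^2)$ and summing, using $\sigma_{(1)} \le 1/D$ so that $D\sigma_{(1)}^2 \le \sigma_{(1)} \le \sigma_k$ keeps denominators positive. I would carry this out as the crux of the proof, then assemble the four inequalities $D\sigma_{(1)}^2 \le \lambdamin \le \lambdamax \le D\sigma_{(D)}^2$, the middle one being trivial since $\nabla\sigma(x)$ is PSD with $\lambdamin$ defined as a minimum over a subset of the spectrum of $\lambdamax$.
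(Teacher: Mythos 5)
Your argument is correct in substance but follows a genuinely different route from the paper. The paper proves both bounds at once via the ``four-letter'' (Lagrange) identity $v^{\top}\nabla\sigma(x)v=\sum_{j<k}\sigma_j\sigma_k(v_j-v_k)^2$, squeezes the weights $\sigma_j\sigma_k$ between $\sigma_{(1)}^2$ and $\sigma_{(D)}^2$, and uses that $\sum_{j<k}(v_j-v_k)^2=D\norm{v}^2$ whenever $v\in\Indic^{\perp}$; this is entirely elementary and symmetric in the two bounds. You instead split the two bounds: for $\lambdamax$ you drop the rank-one term to get $\lambdamax\le\sigma_{(D)}\le D\sigma_{(D)}^2$ (using $\sigma_{(D)}\ge 1/D$), and for $\lambdamin$ you invoke the rank-one interlacing/secular-equation structure of $\Diag(\sigma)-\sigma\sigma^{\top}$. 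Both of your bounds are actually \emph{sharper} than the lemma's: interlacing for a negative rank-one perturbation gives directly $\lambdamin\ge\sigma_{(1)}\ge D\sigma_{(1)}^2$ (since $\sigma_{(1)}\le 1/D$), with no need to evaluate the secular function at a specific point. One detail in your sketch is backwards: $f(\lambda)=1-\sum_k\sigma_k^2/(\sigma_k-\lambda)$ satisfies $f(0)=0$ and $f'<0$ on $(-\infty,\sigma_{(1)})$, so $f$ is strictly \emph{negative}, not positive, on $(0,\sigma_{(1)})$; the correct conclusion is that $f$ has no root there, and the eigenvalues coinciding with repeated diagonal entries are themselves $\ge\sigma_{(1)}$, so the smallest nonzero eigenvalue is at least $\sigma_{(1)}$. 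With that sign fixed (or bypassed entirely by quoting interlacing), your proof goes through; your preliminary identification of $\kernel{\nabla\sigma(x)}=\vecspan{\Indic}$ and of $\lambdamin$ as the Rayleigh minimum over $\Indic^{\perp}$ matches the paper. The paper's identity-based route buys self-containedness and a uniform treatment of both ends of the spectrum; yours buys stronger constants ($\sigma_{(1)}$ and $\sigma_{(D)}$ in place of $D\sigma_{(1)}^2$ and $D\sigma_{(D)}^2$) at the price of importing the rank-one interlacing theorem.
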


\begin{proof}
According to Lemma~\ref{lemma:psi-hessian}, 
\[
\nabla \sigma (x) = \Diag(\sigma(x)) - \sigma(x) \sigma(x)^\top
\, .
\]
This matrix is symmetric, and according to Corollary~\ref{lemma:convexity_logsoftmax-appendix}, its eigenvalues are non-negative real numbers. 
Since $\sum_i \sigma_i(x) = 1$, one has
\[
\nabla \sigma (x) \Indic = 0
\, ,
\]
where, as before, $\Indic = (1,\ldots,1)^\top$. 
Since for all $i\in[D]$, $\sigma_i(x)\neq 0$, if 
$v\in \kernel{\nabla \sigma (x)}$ then necessarily for all 
$i\in[D]$, $v_i \sigma_i(x) - \sigma_i(x) \sum_j \sigma_j(x) v_j = 0 $, and 
$v = v_1 \Indic$. 
Hence, $\range{\nabla \sigma(x)} = \Indic^{\perp}$ and $\kernel{\nabla \sigma(x)}=\vecspan{\Indic}$. 
Using the Courant minimax characterization of eigenvalues, we have 
\begin{equation}
\label{eq:minimax-appendix}
\lambdamin(x) = \min_{\substack{v\in \Indic^{\perp} \\ \norm{v} = 1 }} \scalar{\nabla \sigma(x) v}{v} = \min_{\substack{v\in \Indic^{\perp} \\ \norm{v} = 1 }} v^{\top}\left(\nabla \sigma(x)\right) v \quad \text{and} \quad \lambdamax(x) = \max_{\substack{v\in \Indic^{\perp} \\ \norm{v} = 1 }} \scalar{\nabla \sigma(x) v}{v} =\max_{\substack{v\in \Indic^{\perp} \\ \norm{v} = 1 }} v^{\top}\left(\nabla \sigma(x)\right) v
\, .
\end{equation}
Note then that for $v\in \Indic^\perp$ (and dropping the $x$ dependency), 
\[
v^{\top}\left(\nabla \sigma(x)\right) v = \sum_{i=1}^D \sigma_i v_i^2 - \left(\sum_{i=1}^D \sigma_j v_j\right)^2
\, .
\]

Now, the Cauchy-Schwarz inequality guarantees that the previous display is non-negative, but this is not enough to conclude. 
We resort to the \emph{four-letter identity} (\citet[Exercise~3.7]{steele_2004}, see also \citet[Proposition~13]{garreau_mardaoui_2021}) to write
\begin{equation}
\label{eq:aux-lower-bound-2}
v^{\top}\left(\nabla \sigma(x)\right) v = 
\sum_{i=1}^D \sigma_iv_i^2 - \left(\sum_{i=1}^D \sigma_iv_i\right)^2 = \sum_{j < k} \sigma_j\sigma_k (v_k-v_j)^2
\, .
\end{equation}
Keeping in mind that the $\sigma_i$s are non-negative, this last identity  gives 
\[
\sigma_{(1)}^2 \sum_{j < k} \sigma_j\sigma_k (v_k-v_j)^2 \le v^{\top}\left(\nabla \sigma(x)\right) v \le \sigma_{(D)}^2 \sum_{j < k} \sigma_j\sigma_k (v_k-v_j)^2
\, .
\]
In the term
\[
\left(\sigma_{(1)}\right)^2 \cdot \sum_{j<k} (v_k-v_j)^2
\, .
\]
we recognize ($D^2$ times) the \emph{variance} of the $v_i$s. 
More precisely, 
\[
\frac{1}{D^2}\sum_{j<k} (v_k-v_j)^2 = \frac{1}{D} \sum_{i=1}^D \left(v_i - \frac{1}{D}\sum_j v_j\right)^2
\, .
\]
Since $v\in \vecspan{\Indic}^\bot$, we know that $\sum_j v_j=0$, and the previous display reduces to ($1/D$ times) the norm of $v$. Whenever $\norm{v} = 1$ and $v \in \Indic^\perp$ we have shown 
\[D \sigma_{(1)}(x)^2 \le v^{\top}\left(\nabla \sigma(x)\right) v \le D \sigma_{(D)}(x)^2.\]
Coming back to the characterization of the eigenvalues given by Eq.~\eqref{eq:minimax-appendix}, we deduce the result. 
\end{proof}

The previous bound, associated with estimates on the infimum and supremum of the softmax function on balls gives estimates on the (local)-Lipschitz constant of the softmax.

\begin{lemma}[local-Lipschitz continuity of the softmax]
\label{lemma:softmax-lipschitz-appendix}
For all $x,y\in \Reals^D$ such that $x,y \in \Indic^\perp$,
\[
\norm{\sigma(x) - \sigma(y)} \le \frac{D}{(D-1)^2} \exp{2\sqrt{\frac{D}{D-1}}(\norm{x}\vee\norm{y})}\norm{x-y}
\, .
\]
\end{lemma}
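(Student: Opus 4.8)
The plan is to bound $\opnorm{\nabla\sigma}$ uniformly along the segment joining $x$ and $y$ and then invoke the fundamental theorem of calculus. For $t\in[0,1]$ set $z_t \defeq (1-t)y+tx$; then $\sigma(x)-\sigma(y)=\int_0^1 \nabla\sigma(z_t)(x-y)\,\dd t$, so
\[
\norm{\sigma(x)-\sigma(y)} \le \Big(\sup_{t\in[0,1]} \opnorm{\nabla\sigma(z_t)}\Big)\,\norm{x-y}
\, .
\]
Two elementary remarks streamline the right-hand side: since $x,y\in\Indic^\perp$ and $\Indic^\perp$ is a subspace, $z_t\in\Indic^\perp$ for every $t$; and $\norm{z_t}\le(1-t)\norm{y}+t\norm{x}\le \norm{x}\vee\norm{y}$. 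Hence it suffices to prove that for every $z\in\Indic^\perp$ one has $\opnorm{\nabla\sigma(z)} \le \frac{D}{(D-1)^2}\exp{2\sqrt{\tfrac{D}{D-1}}\,\norm{z}}$.

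For this, I would first pass from the operator norm to the largest softmax coordinate. By Lemma~\ref{lemma:softmax-derivatives} and Corollary~\ref{lemma:convexity_logsoftmax-appendix}, the matrix $\nabla\sigma(z)=\Diag(\sigma(z))-\sigma(z)\sigma(z)^\top$ is symmetric positive semi-definite, so $\opnorm{\nabla\sigma(z)}=\lambdamax(z)$, and Lemma~\ref{lemma:smallest-eigenvalue-appendix} gives $\lambdamax(z)\le D\,\sigma_{(D)}(z)^2$ with $\sigma_{(D)}(z)=\max_i\sigma_i(z)$. It then remains to bound $\sigma_{(D)}(z)$ for $z\in\Indic^\perp$. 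Writing $\sigma_{(D)}(z)=e^{z_{(D)}}/\sum_j e^{z_j}$ with $z_{(D)}=\max_j z_j$, I would exploit the constraint $\sum_j z_j=0$ twice. By convexity of $\exp$ (Jensen), $\sum_j e^{z_j}\ge D\exp{\tfrac1D\sum_j z_j}=D$; and, taking $z_{(D)}=z_1$ without loss of generality so that $z_1=-\sum_{j\ge 2}z_j$, Cauchy--Schwarz gives $z_1^2\le (D-1)\sum_{j\ge2}z_j^2=(D-1)(\norm{z}^2-z_1^2)$, that is, $z_{(D)}\le\sqrt{(D-1)/D}\,\norm{z}$. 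Combining, $\sigma_{(D)}(z)\le \frac1D\exp{\sqrt{(D-1)/D}\,\norm{z}}$, hence $\lambdamax(z)\le \frac1D\exp{2\sqrt{(D-1)/D}\,\norm{z}}$, which is at most the desired bound because $\frac1D\le\frac{D}{(D-1)^2}$ and $\sqrt{(D-1)/D}\le\sqrt{D/(D-1)}$. Plugging this back into the first display finishes the proof.

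The computations are all routine; the only point requiring care is honest book-keeping of the dimension-dependent constants, and, more conceptually, recognizing why the hypothesis $x,y\in\Indic^\perp$ matters. Without the zero-mean constraint one can only bound $\sum_j e^{z_j}$ and $z_{(D)}$ trivially and recover the useless global estimate $\opnorm{\nabla\sigma}\le 1$; it is precisely the restriction to $\Indic^\perp$ (equivalently, replacing $z$ by its zero-mean translate, under which $\sigma$ is invariant and norms only decrease) that produces a bound which is small near the origin yet grows exponentially, exactly the shape fed into the Gr\"onwall--Bahouri--Bellman argument of Section~\ref{sec:gronwall-appendix}.
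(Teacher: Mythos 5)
Your proof is correct, and its skeleton coincides with the paper's: write $\sigma(x)-\sigma(y)$ as an integral of $\nabla\sigma$ along the segment (which stays in $\Indic^\perp$ and in the ball of radius $\norm{x}\vee\norm{y}$), identify $\opnorm{\nabla\sigma(z)}=\lambdamax(z)$ for the symmetric PSD Jacobian, and invoke Lemma~\ref{lemma:smallest-eigenvalue-appendix} to reduce everything to an upper bound on $\sigma_{(D)}(z)$. Where you genuinely diverge is in that last ingredient. The paper bounds $\sigma_{(D)}$ by citing Theorem~\ref{theorem:min_softmax-appendix}, whose proof is a full constrained-optimization argument (Lagrange multipliers on the sphere intersected with $\Indic^\perp$), and then relaxes the resulting exact maximum $\bigl(1+(D-1)\exps{-\sqrt{D/(D-1)}\norm{z}}\bigr)^{-1}$ to $\exps{\sqrt{D/(D-1)}\norm{z}}/(D-1)$. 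You instead get $\sigma_{(D)}(z)\le \frac{1}{D}\exps{\sqrt{(D-1)/D}\,\norm{z}}$ from two elementary facts: Jensen gives $\sum_j \exps{z_j}\ge D$ under $\sum_j z_j=0$, and Cauchy--Schwarz gives $z_{(D)}\le\sqrt{(D-1)/D}\,\norm{z}$. Both steps check out, and your final constant $\frac{1}{D}\exps{2\sqrt{(D-1)/D}\,\norm{z}}$ is in fact \emph{sharper} than the stated $\frac{D}{(D-1)^2}\exps{2\sqrt{D/(D-1)}\,\norm{z}}$ (your reductions $1/D\le D/(D-1)^2$ and $\sqrt{(D-1)/D}\le\sqrt{D/(D-1)}$ are exactly right, and both bounds agree with the exact value $1/D$ at $z=0$). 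What your route buys is a short, self-contained proof of this lemma that does not lean on the heaviest softmax result in the appendix; what it does not buy is a global saving, since Theorem~\ref{theorem:min_softmax-appendix} is still indispensable elsewhere for the matching \emph{lower} bound on $\sigma_{(1)}$ (the eigenvalue floor in Lemma~\ref{lemma:psilin_assumption2-appendix}), which Jensen and Cauchy--Schwarz cannot deliver. Your closing remark about why the $\Indic^\perp$ hypothesis is what turns the useless global bound $\opnorm{\nabla\sigma}\le 1$ into an exponentially graded local one is exactly the right conceptual takeaway.
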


In order to prove the previous lemma, one only has to remember that the operator norm for real symmetric matrices is the greatest eigenvalue, and use the fundamental theorem of analysis.

\begin{proof}
Let $x,y\in \Indic^\perp$.  
We write
\begin{align*}
\norm{\sigma(x) - \sigma(y)} = & \norm{\int_0^1 \nabla \sigma(u(x-y)+ y)(x-y) \dd u} \\
\le & \int_0^1 \opnorm{\nabla \sigma (u(x-y) + y)} \norm{x-y}\dd u.
\end{align*}
One can then use Theorem~\ref{theorem:min_softmax-appendix} and Lemma~\ref{lemma:smallest-eigenvalue-appendix}, and we have for all $u\in[0,1]$,
\begin{align*}
\opnorm{\nabla \sigma (u(x-y) + y)} &= \lambdamax(u(x-y) + y) \\
& \le D \sigma_{(D)}(u(x-y) + y)^2 \\
&\le D \left(\frac{1}{1 + (D-1) \exps{-\sqrt{\frac{D}{D-1}} \norm{u(x-y)+y}}}\right)^2 \\
&\le \frac{D \exps{2\sqrt{\frac{D}{D-1}} \norm{u(x-y)+y}}}{(D-1)^2} \\
&\le \frac{D}{(D-1)^2}  \exps{2\sqrt{\frac{D}{D-1}} (\norm{x}\vee\norm{y})}
\, .
\end{align*}
Putting everything together, we have 
\[
\norm{\sigma(x) - \sigma(y)} \le \frac{D}{(D-1)^2} \exps{2\sqrt{\frac{D}{D-1}} \norm{x}\vee\norm{y}}
\norm{x-y}
\, ,
\]
which is the desired result.
\end{proof}

\begin{remark}[Lipschitz continuity of the softmax]
Note that usually, the Lipschitz continuity of the softmax is considered, but with respect to the Frobenius norm. 
One can obtain a crude bound starting from the squared Frobenius norm of the Jacobian, namely 
\begin{equation} 
\label{eq:frobenius-norm-jacobian-softmax}
\sum_i \Softmax_i^2(1-\Softmax_i)^2 + \sum_{i\neq j} \Softmax_i^2\Softmax_j^2
\, .
\end{equation}
Since the Frobenius norm is always greater than the operator norm, this implies the result for a (global) Lispchitz constant equal to $1$.
A finer study of Eq.~\eqref{eq:frobenius-norm-jacobian-softmax} yields a better Lipschitz constant for $\Softmax$. 
This is what \citet{alghamdi_et_al_2022} do, proving $1/2$-Lipschitz continuity for the softmax function (Proposition~1 in Appendix A.4). 
\end{remark}

In view of the specific form of the gradient of the softmax, this implies that we have (almost) the same local-Lipschitz constant for the gradient of the softmax. 

\begin{corollary}[local-Lispchitz continuity of the softmax Jacobian]
\label{lemma:gradient_lipschitz-appendix}
For all $x,y \in \Indic^{\perp}$,
\[
\opnorm{\nabla\sigma(x) - \nabla\sigma(y)} \le  \frac{2D^2}{(D-1)^3 }\exps{3\sqrt{\frac{D}{D-1}}(\norm{x}\vee\norm{y})}\norm{x-y}
\, .
\]
\end{corollary}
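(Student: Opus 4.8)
The plan is to reduce the statement to the local Lipschitz estimate for $\sigma$ (Lemma~\ref{lemma:softmax-lipschitz-appendix}) together with the pointwise bound on $\sigma_{(D)}$ (Theorem~\ref{theorem:min_softmax-appendix}), exploiting the explicit form $\nabla\sigma(z) = \Diag(\sigma(z)) - \sigma(z)\sigma(z)^\top$ from Lemma~\ref{lemma:softmax-derivatives}. Fix $x,y\in\Indic^\perp$ and set $M \defeq \norm{x}\vee\norm{y}$ and $\kappa \defeq \sqrt{D/(D-1)}$, so that the target constant reads $\tfrac{2D^2}{(D-1)^3}\exps{3\kappa M}$. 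Subtracting the two expressions for the Jacobian, I would split
\[
\nabla\sigma(x) - \nabla\sigma(y) = \Diag\bigl(\sigma(x)-\sigma(y)\bigr) - \bigl(\sigma(x)\sigma(x)^\top - \sigma(y)\sigma(y)^\top\bigr)
\]
and bound the operator norm of each summand separately.

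For the diagonal part, $\opnorm{\Diag(\sigma(x)-\sigma(y))} = \norm{\sigma(x)-\sigma(y)}_\infty \le \norm{\sigma(x)-\sigma(y)}$, so Lemma~\ref{lemma:softmax-lipschitz-appendix} gives at once a bound of $\tfrac{D}{(D-1)^2}\exps{2\kappa M}\norm{x-y}$. For the rank-two part I would use the telescoping identity $\sigma(x)\sigma(x)^\top - \sigma(y)\sigma(y)^\top = \sigma(x)(\sigma(x)-\sigma(y))^\top + (\sigma(x)-\sigma(y))\sigma(y)^\top$, so that its operator norm is at most $(\norm{\sigma(x)}+\norm{\sigma(y)})\,\norm{\sigma(x)-\sigma(y)}$. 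The key point is then the elementary inequality $\norm{\sigma(z)}^2 = \sum_i\sigma_i(z)^2 \le \sigma_{(D)}(z)\sum_i\sigma_i(z) = \sigma_{(D)}(z)$, which, combined with Theorem~\ref{theorem:min_softmax-appendix} in the relaxed form $\sigma_{(D)}(z) \le \exps{\kappa\norm{z}}/(D-1)$, yields $\norm{\sigma(x)}+\norm{\sigma(y)} \le 2\exps{\kappa M/2}/\sqrt{D-1}$. Multiplying by Lemma~\ref{lemma:softmax-lipschitz-appendix} again bounds this summand by $\tfrac{2D}{(D-1)^{5/2}}\exps{(5/2)\kappa M}\norm{x-y}$.

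Adding the two estimates and using $M\ge0$ to absorb $\exps{2\kappa M}$ and $\exps{(5/2)\kappa M}$ into $\exps{3\kappa M}$, I would obtain $\opnorm{\nabla\sigma(x)-\nabla\sigma(y)} \le \bigl(\tfrac{D}{(D-1)^2}+\tfrac{2D}{(D-1)^{5/2}}\bigr)\exps{3\kappa M}\norm{x-y}$, and the claimed constant follows from the elementary inequality $(D-1)+2\sqrt{D-1}\le 2D$ (equivalently $(\sqrt{D-1}-1)^2+1\ge0$), which rearranges to $\tfrac{D}{(D-1)^2}+\tfrac{2D}{(D-1)^{5/2}}\le\tfrac{2D^2}{(D-1)^3}$. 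I expect the only mildly delicate points to be keeping the exponents straight when merging the two summands and checking this last algebraic rearrangement; everything else is bookkeeping with rank-one matrices and the softmax bounds already established. A more hands-on alternative would integrate $\nabla^2\sigma$ along the segment $[x,y]$ and bound the resulting third-order tensor using $\lambdamax(z)\le D\sigma_{(D)}(z)^2$ (Lemma~\ref{lemma:smallest-eigenvalue-appendix}); this also works but leads to messier constants.
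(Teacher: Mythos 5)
Your proof is correct, but it follows a genuinely different route from the paper's. The paper bounds $\opnorm{\nabla\sigma(x)-\nabla\sigma(y)}$ as a quadratic form: it restricts test vectors $v$ to $\Indic^{\perp}$ with $\norm{v}=1$, invokes the four-letter identity $v^\top\nabla\sigma(z)v=\sum_{j<k}\sigma_j(z)\sigma_k(z)(v_j-v_k)^2$ already used for Lemma~\ref{lemma:smallest-eigenvalue-appendix}, telescopes the products $\sigma_i(x)\sigma_k(x)-\sigma_i(y)\sigma_k(y)$, and closes with $\sum_{i<k}(v_i-v_k)^2=D$ for such $v$. You instead split the matrix itself as $\Diag(\sigma(x)-\sigma(y))$ minus the telescoped rank-one difference $\sigma(x)(\sigma(x)-\sigma(y))^\top+(\sigma(x)-\sigma(y))\sigma(y)^\top$, using $\opnorm{ab^\top}=\norm{a}\norm{b}$ and the neat observation $\norm{\sigma(z)}^2\le\sigma_{(D)}(z)$; both summands then reduce to Lemma~\ref{lemma:softmax-lipschitz-appendix} and Theorem~\ref{theorem:min_softmax-appendix}, exactly the same two ingredients the paper uses. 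Your version is somewhat more elementary and self-contained (no need to argue that the operator norm of the difference is attained on $\Indic^{\perp}$, no reuse of the four-letter identity), at the cost of a final algebraic check $(D-1)+2\sqrt{D-1}\le 2D$ to squeeze your constant $\tfrac{D}{(D-1)^2}+\tfrac{2D}{(D-1)^{5/2}}$ under the stated $\tfrac{2D^2}{(D-1)^3}$ --- which is fine, and indeed your intermediate constant is slightly sharper than the one claimed. The hypotheses $x,y\in\Indic^{\perp}$ are used in both arguments only to invoke the softmax bounds, so nothing is lost.
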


The proof is a direct consequence of the particular form of the Jacobian (see Lemma~\ref{lemma:softmax-derivatives}) and of the fact that 
\[
\abs{\sigma_i(x) - \sigma_i(y)} \le \norm{\sigma(x) - \sigma(y)}
\, .
\]

\begin{proof}
Let $x,y \in \Indic^\perp$. We have 
\[
\opnorm{\nabla \sigma (x) - \nabla \sigma(y)} = \sup_{\substack{v \in \Reals^D\\ \norm{v} = 1}} v^\top \left(\nabla \sigma (x) - \nabla \sigma(y)\right) v
\, .
\]
Furthermore, using the same argument as in the proof of Lemma~\ref{lemma:smallest-eigenvalue-appendix}, one can only consider $v\in \Indic^\perp$ with $\norm{v}=1$.
Applying Eq.~\eqref{eq:aux-lower-bound-2} to $x$ and $y$ and forming the difference, we obtain  
\begin{align*}
v^\top \left(\nabla \sigma (x) - \nabla \sigma(y)\right) v = &\sum_{i<k} \Big(\sigma_i(x)\sigma_k(x) - \sigma_i(y)\sigma_k(y)\Big)(v_i - v_k)^2 \\
= & 
\sum_{i<k} \Big(\sigma_i(x) - \sigma_i(y)\Big)\sigma_k(x)(v_i - v_k)^2
+
\sum_{i<k} \sigma_i(y) \Big(\sigma_k(x) - \sigma_k(y)\Big)(v_i - v_k)^2
\end{align*}
Each of these terms can be bounded, using successively the local Lipschitz continuity of the softmax (Lemma~\ref{lemma:softmax-lipschitz-appendix}) and the definition of $\sigma_{(D)}$.
The last display is upper bounded by
\begin{align*}
\left( \frac{D}{(D-1)^2} \exps{2 \sqrt{\frac{D}{D-1}}(\norm{x} \vee \norm{y}) } \sigma_{(D)}(x) + \frac{D}{(D-1)^2} \exps{2 \sqrt{\frac{D}{D-1}}(\norm{x} \vee \norm{y}) } \sigma_{(D)}(y) \right) \sum_{i<k} (v_i -v_k)^2 \norm{x-y}
\, ,
\end{align*}
which, in turn, is smaller than
\[
(\sigma_{(D)}(x) + \sigma_{(D)}(y))\frac{D}{(D-1)^2} \exps{2\sqrt{\frac{D}{D-1}}(\norm{x}\vee\norm{y}) }\sum_{i<k} (v_i -v_k)^2 \norm{x-y}
\, .
\]
Using the bound on $\sigma_{(D)}$ given by Theorem~\ref{theorem:min_softmax-appendix}, we have
\begin{align*}
v^\top \left(\nabla \sigma (x) - \nabla \sigma(y)\right) v \leq \frac{2D}{(D-1)^3} \exps{3 \sqrt{\frac{D}{D-1}}(\norm{x} \vee \norm{y}) } \norm{x-y} \sum_{i<k} (v_i -v_k)^2.
\end{align*}
Using again the same argument as in the proof of Lemma~\ref{lemma:softmax-lipschitz-appendix}, we have 
$\sum_{i<k} (v_i -v_k)^2 = D$, and finally for $v\in \Indic^\perp$ with $\norm{v} = 1$, we have 
\[
v^\top (\nabla \sigma(x) - \nabla \sigma (y) ) v \le \frac{2D^2}{(D-1)^3} \exps{2 \sqrt{\frac{D}{D-1}} (\norm{x} \vee \norm{y}) } \norm{x-y}
\, , 
\]
which gives the wanted result by taking the supremum on $v$.
\end{proof}


\subsection{Minimization of the softmax function}

In this section, we study the extremal values of the softmax function. 
The reason of this study is the close connection of these extremal values with the spectrum of the softmax and log-softmax function. 
Intuitively, the trivial bound $\sigma_i(x)\leq 1$ can be greatly strengthened when the norm of $x$ is constrained: $\sigma_i(x)=1$ is achieved only when $x_i \to +\infty$, which can not be if $x$ lives in a ball of radius $\rho$. 

\begin{theorem}[Bounding the softmax function]
\label{theorem:min_softmax-appendix}
Let $\radius > 0$ and $D \ge  2$. We have
\[ 
\min_{i \in [D]} \inf_{\substack{
\norm{x} \le \radius\\
\sum_j x_j = 0
     }} \sigma_i (x) = \frac{1}{1 + (D - 1) \exps{ \sqrt{\frac{D}{D -
     1}} \radius}}
     \, . 
     \]
     and 
\[ \max_{i \in [D]} \sup_{\substack{
\norm{x} \le \radius\\
\sum_j x_j = 0}} \sigma_i (x) = \frac{1}{1 + (D - 1) \exps{ -\sqrt{\frac{D}{D -
     1}} \radius}} 
     \, . 
     \]
\end{theorem}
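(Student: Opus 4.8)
The plan is to reduce the optimization to a single scalar variable using the symmetry of the softmax. By symmetry under permuting coordinates, it suffices to analyze $\inf \sigma_1(x)$ and $\sup \sigma_1(x)$ over the constraint set $\{\norm{x}\le\rho, \sum_j x_j=0\}$. Writing $\sigma_1(x) = 1/\bigl(1 + \sum_{j\ge 2} \exps{x_j - x_1}\bigr)$, I see that minimizing $\sigma_1$ amounts to maximizing $S(x) \defeq \sum_{j=2}^D \exps{x_j-x_1}$ and maximizing $\sigma_1$ amounts to minimizing $S(x)$, both over the same compact convex constraint set. Since the set is compact and $\sigma_1$ is continuous, extrema are attained, so the "$\min$/$\max$'' and "$\inf$/$\sup$'' in the statement are justified a posteriori.

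First I would solve the minimization of $\sigma_1$. By a Lagrange/KKT argument on $S$, or more simply by a convexity/symmetry argument, I expect the optimizer to have $x_2=\cdots=x_D$ all equal to some common value $b$ and $x_1 = a$ with $a < b$. The constraint $\sum_j x_j = 0$ forces $a + (D-1)b = 0$, i.e. $a = -(D-1)b$. The norm constraint at the boundary (the extremum clearly lies on $\norm{x}=\rho$, since increasing the spread only helps) gives $a^2 + (D-1)b^2 = \rho^2$, i.e. $(D-1)^2 b^2 + (D-1) b^2 = \rho^2$, so $b^2 = \rho^2/\bigl(D(D-1)\bigr)$ and hence $b - a = Db = \sqrt{D/(D-1)}\,\rho$ (taking $b>0$). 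Then $S = (D-1)\exps{-(b-a)} \cdot$ — wait, let me recompute the sign: we want $\sigma_1$ small, so $x_1$ small relative to the others, $a<b$, and $S = (D-1)\exps{b-a} = (D-1)\exps{\sqrt{D/(D-1)}\,\rho}$, giving $\sigma_1 = 1/\bigl(1+(D-1)\exps{\sqrt{D/(D-1)}\,\rho}\bigr)$, as claimed. The maximization of $\sigma_1$ is the mirror image: take $x_1 = a$ large, $x_2=\cdots=x_D=b$ with $a>b$, same algebra with signs reversed gives $b-a = -\sqrt{D/(D-1)}\,\rho$ and $\sigma_1 = 1/\bigl(1+(D-1)\exps{-\sqrt{D/(D-1)}\,\rho}\bigr)$.

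The main obstacle is rigorously justifying that the optimizer has the claimed symmetric form $x_2 = \cdots = x_D$, rather than just computing its value. I would handle this by a smoothing/averaging argument: fix $x_1$ and the value of $\sum_{j\ge 2}(x_j - \bar{x})^2$ is irrelevant to the linear constraint, and by convexity of $t\mapsto \exps{t}$, replacing $x_2,\ldots,x_D$ by their average decreases $\sum_{j\ge2}\exps{x_j}$ (hence increases $\sigma_1$) while not increasing $\norm{x}$ — this settles the maximization; for the minimization, I instead argue that among vectors with fixed $x_1$ and fixed $\sum_{j\ge 2} x_j$, pushing all the mass of $\sum_{j\ge2}(x_j-\bar x)^2$ onto making the $x_j$ equal is forced once we also saturate the norm — more carefully, one fixes the multiset $\{x_j\}_{j\ge 2}$ up to its sum and its $\ell^2$ norm and checks that $\sum\exps{x_j}$ subject to fixed sum and fixed sum-of-squares is extremized at a two-point configuration, then an exchange argument collapses it to the all-equal case. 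Alternatively — and this is probably cleanest for the write-up — one sets up the full Lagrangian $\nabla\sigma_1(x) = \lambda x + \mu\Indic$ on the sphere, uses $\nabla\sigma_1 = \sigma_1(\onehot{1} - \sigma(x))$ from Lemma~\ref{lemma:softmax-derivatives}, and deduces that all coordinates $x_j$ with $j\ge 2$ satisfy the same scalar equation, hence are equal (distinguishing the at-most-two-values structure and ruling out the spurious branch by direct comparison of the two candidate critical values). Once the structure is pinned down, the remaining computation is the elementary algebra carried out above.
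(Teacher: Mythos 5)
Your outline correctly identifies the answer, the reduction to coordinate $1$ by permutation symmetry, and the Lagrangian structure, and your Jensen-averaging argument for the \emph{maximum} of $\sigma_1$ is actually a cleaner route than the paper's for that half: replacing $x_2,\ldots,x_D$ by their mean preserves $\sum_j x_j=0$, weakly decreases $\norm{x}$, and by convexity of $\exp$ decreases $\sum_{j\geq 2}\exps{x_j}$, so the supremum is attained on the one-parameter symmetric family and a scalar computation finishes. The paper instead handles the maximum by mirroring its minimum argument.

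For the \emph{minimum}, however, there is a genuine gap exactly where you flag "the main obstacle." The stationarity condition only yields $y_i\exps{-y_i}=c$ for all $i\geq 2$, and since $z\mapsto z\exps{-z}$ is not injective the coordinates $y_2,\ldots,y_D$ may split into two groups of sizes $n$ and $D-1-n$ taking the values $y_-(c)<1<y_+(c)$. Ruling these mixed configurations out is not a "direct comparison of two candidate critical values": for each $n\in\{1,\ldots,D-2\}$ there is a one-parameter family of critical configurations indexed by $y_1\in[-\sqrt{D/(D-1)}\,\radius,0]$, obtained by solving the linear and quadratic constraints $y_1+ny_-+(D-1-n)y_+=0$ and $y_1^2+ny_-^2+(D-1-n)y_+^2=\radius^2$ for $y_\pm$ as functions of $(n,y_1)$. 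The paper's proof spends its Steps 3--4 on precisely this: after substituting the explicit $y_\pm(n,y_1)$ into $\sigma_1(y)$, it must show that the resulting expression is bounded below by $\bigl(1+(D-1)\exps{-\frac{D}{D-1}y_1}\bigr)^{-1}$ uniformly in $n$, which requires the monotonicity of $\Delta(y_1)=\sqrt{(D-1)\radius^2-Dy_1^2}$ together with the inequality $\sqrt{n(D-1-n)}\,h\bigl(\Delta(y_1)/(D-1)\bigr)\leq D-1$ for $h(z)=a\exps{-z/a}+a^{-1}\exps{az}$ with $a=\sqrt{(D-1-n)/n}$. Your alternative "exchange argument" sketch also does not close this: maximizing the convex function $\sum_{j\geq2}\exps{x_j}$ over the slice $\{\sum_{j\geq2}x_j=-x_1,\ \norm{(x_2,\ldots,x_D)}^2\leq\radius^2-x_1^2\}$ puts the maximizer on the boundary sphere of that slice, but every point of that sphere is an extreme point, so convexity alone does not collapse the problem to the symmetric configuration --- indeed the counterintuitive content of the theorem is that the fully symmetric point beats the "spiky" ones, and that is exactly what the omitted computation establishes. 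As written, your proposal defers the entire load-bearing step.
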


\begin{remark}[Bounding the softmax]
The softmax function is ubiquitous in machine learning, and many bounds can be found in the literature~\citep{wei_et_al_2023}. 
Generally, these bounds are pointwise, and not applicable in our case since we need a global bound on the ball of radius $\rho$ (with the additional constraint $\sum_j x_j=0$ coming from our algebraic assumption).  
\end{remark}

\begin{proof}
\textbf{Step 1: the infimum is achieved and invariant by permutation.}
For any $x\in\Reals^D$ such that $\norm{x}\leq \radius$,  $\sigma_i (x) \in (0,1)$ for all $i\in [D]$.
Furthermore,
  \[ 
  \nabla \sigma_i (x) = \sigma_i (x) \Indic_i - \sigma_i (x) \sigma (x)
  \, , 
  \]
  where we remind that $(\Indic_1, \ldots, \Indic_D)$ is the canonical basis of $\Reals^D$. 
  Hence   $\nabla \sigma_i (x) \neq 0$ and the supremum is achieved 
  on the sphere. 
  Note that $B^0(\radius) \defeq  \left\{ x \in \Reals^D : \norm{x} = \radius, \sum_j x_j = 0 \right\}$ is a compact set, and the infimum is  a
  minimum.
  Consider $i_0 \in [D]$ and $y \in B^0(\radius)$ a joint minimizer such that
  \begin{equation}
  \label{eq:def-i0-softmax}
    \sigma_{i_0} (y) = \min_{i \in [D]}
    \min_{x \in B^0(\radius)} \sigma_i (x)
    \, .
  \end{equation}
  Remark that Eq.~\eqref{eq:def-i0-softmax} is invariant by permutation, \ie for any permutation $\tau: [D]\to [D]$, we have
  \[
    \sigma_{\tau(i_0)} (\tau \cdot y) = \sigma_{i_0} (y) = \min_{i \in [D]} \min_{x \in B^0(\radius)}
    \sigma_i (x) 
    \, ,
  \]
  where $\tau \cdot y = (y_{\tau(i)})_{i \in \{ 1, \ldots, D\}}$.
  Hence, one can suppose without loss of generality that $i_0 = 1$.

\textbf{Step 2: the coordinates of a minimizer are equal under $z \mapsto z \exps{-z}$ except at $i_0$.}
  In this setting, since we have for all $i \in \{ 2, \ldots, D \}$, $\sigma_1
  (y) \le \sigma_i (y)$ this implies that $y_1 \le y_i$. 
  Using the
  fact that $\sum_j y_j = 0$, when summing the previous inequality for all $i
  \in [D]$, one gets $y_1 \le 0$. Note that in fact $y_1
  < 0$. 
  Indeed, if $y_1 = 0$, we have $y_i = 0$ for all $i \in [D]$ and $\norm{y} = 0 \neq \radius$.

We are in the setting of a minimization problem under constrains, namely $y$ solves
\begin{equation*}
\text{minimize } \sigma(x)
\qquad 
\text{subject to }
\quad \norm{x}^2 = \rho^2,\quad \scalar{x}{\Indic} = 0
\, .
\end{equation*}
  Using the Lagrange-Multiplier Theorem, there exist $\alpha, \beta \in
  \Reals$ such that for the aforementioned solution $y$ we have
  \[\nabla \sigma(y) + \alpha \nabla\left(\norm{\cdot}^2-\rho^2\right)(y) + \beta \nabla \left(\scalar{\cdot}{\Indic}\right)(y) = 0
  \, ,
  \]
  which translate into
  \begin{align*}
    \sigma_1 (y) - \sigma_1 (y)^2 + 2 \alpha y_1 + \beta &= 0 \\
    - \sigma_1 (y) \sigma_i (y) + 2 \alpha y_i + \beta &= 0 & \text{for } i \in \{ 2, \ldots, D \} \, .
  \end{align*}
  Remark that $\beta = 0$ and $\alpha \neq 0$.
  Indeed, by summing all these previous equality, and using that $\sum y_i = 0$ and
  $\sum \sigma_i = 1$, one gets $D \beta = 0$ and $\beta = 0$. Remind that
  $y_1 < 0$, and since
  \[ 
  \sigma_1 (y) - \sigma_1 (y)^2 + 2 \alpha y_1 = 0
  \, , 
  \]
  if $\alpha = 0$ then $\sigma_1 (y) (1 - \sigma_1 (y)) = 1$, which is not
  possible.
  Hence $\alpha \neq 0$.
  
  We also have that for all $i, j \in \{ 2, \ldots, D \}$,
  \[ 
  \sigma_1 (y) = \frac{2 \alpha y_i}{\sigma_i (y)} = \frac{2 \alpha
     y_j}{\sigma_j (y)} 
     \, . 
     \]
  Using that fact that $\alpha \neq 0$, this implies that $\frac{y_i}{\exps{y_i}}
  = \frac{y_2}{\exps{y_2}}$ for all $i \in \{ 2, \ldots, D \}$ and that
  \[ 
  0 = y_1 + \sum_{i = 2}^D y_i = y_1 + \left( \sum_{i = 2}^D y_2 \exps{- y_2}
     \exps{y_i} \right) = y_1 + \left( \sum_{i = 1}^D \exps{y_i} - \exps{y_1} \right)
     y_2 \exps{- y_2} = y_1 + \exps{y_1} \frac{1 - \sigma_1 (y)}{\sigma_1 (y)} y_2
     \exps{- y_2} 
     \, .\]
  As a consequence, for all $i \in \{ 2, \ldots, D \}$,
  \begin{equation}\label{eq:ratio-softmax-constant}
    y_i \exps{- y_i} = y_2 \exps{- y_2} = - y_1 \exps{- y_1} \frac{\sigma_1 (y)}{1 - \sigma_1 (y)} 
    \, . 
  \end{equation}

\textbf{Step 3: expression of the minimum in function of the solution of $z \exps{-z} = c$.}
  Since $y_1 < 0$, the previous equality~\eqref{eq:ratio-softmax-constant} implies that $y_i > 0$ for $i \in \{
  2, \ldots, D \}$.
  For any $0 < c < \exps{- 1}$, the equation $x \exps{- x} = c$
  has exactly two solutions, which we call $0 < y_- (c) < 1 < y_+ (c)$. Let us
  define
  \[ 
  n =\card{\left\{ 2 \le i \le D, y_i = y_- \left( - y_1 \exps{-
     y_1} \frac{\sigma_1 (y)}{1 - \sigma_1 (y)} \right) \right\}}
     \, 
     \]
the number of ``negative'' solutions. 
By definition of $n$, we necessarily have
  \begin{equation}\label{eq:form-softmax-argmin1}
    \sigma_1 (y) = \frac{\exps{y_1}}{\exps{y_1} + n \exps{y_-} + (D - 1 - n) \exps{y_+}}
    \, .
  \end{equation}
Recall that $\sum_j y_j = 0$ and $\norm{y}=\rho$, hence
  \begin{align}
    y_1 + n y_- + (D - 1 - n) y_+ &= 0 \label{eq:system-softmax-nosquare} \\
    y_1^2 + n y_-^2 + (D - 1 - n) y_+^2 &= \radius^2
    \, . \label{eq:system-softmax-square} 
  \end{align}
  When $n=D-1$, one can solve the previous equations and we have $y_1 = \-\rho \sqrt{\frac{D}{D-1}}$ and $y_j = \rho \sqrt{\frac{1}{D(D-1)}}$ for all $j\in\{2,\cdots,D\}$, and $\sigma_1(y) = \frac{1}{1+(D-1)\exps{\sqrt{\frac{D}{D-1}}\rho}}$.
  
  Since the problem here is symmetric in $y_-$ and $y_+$, one can suppose that $1\le n \le  D-2$. Hence rewriting Eq.~\eqref{eq:system-softmax-nosquare}, we obtain
  \[ 
  y_+ = - \left( \frac{n}{D - 1 - n} y_- + \frac{1}{D - 1 - n} y_1
     \right) 
     \, . 
     \]
  Replacing the value of $y_+$ by the right-hand side of the previous display in Eq.~\eqref{eq:system-softmax-square}, we obtain
  \[ 
  \left( n + \frac{n^2}{D - 1 - n} \right) y_-^2 + 2 \frac{n}{D - 1 - n}
     y_1 y_- - \left( \radius^2 - y_1^2 \left( 1 + \frac{1}{D - 1 - n} \right)
     \right) = 0
     \, . 
     \]
  Dividing by $\left( n + \frac{n^2}{D - 1 - n} \right)$, we get
  \[ y_-^2 + \frac{2}{D - 1} y_1 y_- - \left( \frac{D - 1 - n}{n (D - 1)} \radius^2
     - \frac{D - n}{n (D - 1)} y_1^2 \right) = 0 
     \, . \]
We can see the previous display as a quadratic equation in $y_-$, which we now solve.
There exists $\varepsilon \in \{ - 1, 1 \}$ such that
  \[ y_- = \frac{- y_1 - \varepsilon \sqrt{\frac{D-1-n}{n}} \Delta(y_1)}{D - 1},\]
  where
  \[\Delta(y_1) = \sqrt{(D - 1)
     \radius^2 -D y_1^2}
     \, .
     \]
     Note that in this setting one necessarily have 
     \begin{equation}
     \label{eq:domainy1-appendix}
     -\sqrt{\frac{D}{D-1}} \radius\le y_1 \le 0
     \, ,
     \end{equation}
     since we have already seen that the minimization problem under constrains has a solution,  $y_-$ and $y_+$ exist and $1\le n\le D-2$. When the previous condition is not satisfied, necessarily in the case $n=D-1$ or $n=0$ holds which has already been treated. 
     
Finally, when using the fact that $y_+ = -\frac{1}{D-1-n}(y_1 + n y_-)$,
  \[ 
  y_+ = \frac{- y_1 + \varepsilon \sqrt{\frac{n}{D - 1 - n}} \Delta(y_1)}{D - 1}
  \, . 
  \]
  And since $y_+ > y_-$, we have
  \[ 
  \varepsilon \sqrt{\frac{n}{D - 1 - n}} > - \varepsilon \sqrt{\frac{D - 1
     - n}{n}} 
     \, ,
     \]
  and we conclude that $\varepsilon = 1$, \ie
  \begin{align}
    y_- &= \frac{- y_1 - \sqrt{\frac{D-1-n}{n}} \Delta(y_1)}{D - 1} \label{eq:form-softmax-ymoins} \\
    y_+ &= \frac{- y_1 + \sqrt{\frac{n}{D - 1 - n}} \Delta(y_1)}{D - 1}
    \, . \label{eq:form-softmax-yplus}
  \end{align}
Taking a step back, we have managed to express all coordinates as an explicit function of $y_1$. 

\textbf{Step 4: closed-form expression of the minimum.}
  Replacing $y_{-}$ and $y_+$ in Eq.~\eqref{eq:form-softmax-argmin1} by the expression obtained in Eqs.~\eqref{eq:form-softmax-ymoins} and~\eqref{eq:form-softmax-yplus}, we have to minimize the function of $y_1$ defined
  \begin{align*}
    g (y_1)  = & \frac{\exps{y_1}}{\exps{y_1} {+ n \exps{\frac{- y_1 - \sqrt{\frac{D-1-n}{n}} \Delta(y_1)}{D - 1}}}  + (D - 1 - n)
    {\exps{\frac{- y_1 + \sqrt{\frac{n}{D-1-n}} \Delta(y_1)}{D - 1}}} }\\
       = & \left( 1 + \exps{- \frac{D}{D - 1} y_1} \left( n \exps{- \frac{\sqrt{\frac{D
    - 1 - n}{n}} \Delta(y_1)}{D - 1}} + (D - 1 - n)
    \exps{\frac{\sqrt{\frac{n}{D - 1 - n}} \Delta(y_1)}{D - 1}}
    \right) \right)^{- 1}\\
      = & \Bigg( 
      1 + \exps{- \frac{D}{D - 1} y_1} \sqrt{n (D - 1 - n)}\\
      &\hspace{2cm}\times \left( \sqrt{\frac{n}{D - 1 - n}} \exps{- \frac{\sqrt{\frac{D - 1 -
      n}{n}} \Delta(y_1)}{D - 1}} + \sqrt{\frac{D - 1 -
      n}{n}} \exps{\frac{\sqrt{\frac{n}{D - 1 - n}} \Delta(y_1)}{D - 1}} \right)
    \Bigg)^{- 1}\\
  \end{align*}
  Note that for $y_1$ satisfying Eq.~\eqref{eq:domainy1-appendix} $y_1$ is non-positive. 
  It is elementary to show that $y \mapsto
  \Delta(y)$ is an increasing function on $\Reals_-$. 
  Moreover, for  all $a > 0$, $h : x \mapsto a \exps{- \frac{x}{a}} + \frac{1}{a} \exps{a x}$ is an increasing function on $\Reals_+$. 
  Thus $y\mapsto h(\Delta(y)/(D-1))$ is a decreasing mapping on $\Reals_-$. 
  Hence, by taking $a=\sqrt{\frac{D-1-n}{n}}$, we have 
  \[
  h\left(\frac{\Delta(y_1)}{D-1}\right) \le h(0) = \sqrt{\frac{D-1-n}{n}} + \sqrt{{\frac{n}{D-1-n}}}
  \, ,
  \]
  and  
  \[
  \sqrt{(D-1-n)n} h\left(\frac{\Delta(y)}{D-1}\right) \le \sqrt{(D-1-n)n} \left(\sqrt{\frac{D-1-n}{n}} + \sqrt{{\frac{n}{D-1-n}}}\right) = D-1
  \, . 
  \]
Using this last display, we write
  \[ 
  g (y_1) \ge  \frac{1}{1+(D-1)\exps{-\frac{D}{D-1} y_1 }}
     \, . 
     \]
     The right-hand side is an increasing function of $y_1$, whose minimum value is $-\sqrt{\frac{D-1}{D}}\radius$, and this gives
     \begin{equation}
    \label{eq:equality-bound}
     g (y_1) = \frac{1}{1+(D-1)\exps{\sqrt{\frac{D}{D-1}}\radius}}
     .
     \end{equation}
Thus equality in the key bound is reached for
  \[ 
  y = \left( - \sqrt{\frac{D - 1}{D}} \radius, \sqrt{\frac{1}{D (D - 1)}} \radius,
     \ldots, \sqrt{\frac{1}{D (D - 1)}} \radius \right)^\top 
\, , \]
with value given by Eq.~\eqref{eq:equality-bound}. 
    
\textbf{Step 5: Proof for the maximum.}
     Following the same reasoning as in the proof of Theorem~\ref{theorem:min_softmax-appendix}, we show that the maximum is reached for the point
\[
\left( \sqrt{\frac{D - 1}{D}} \radius, -\sqrt{\frac{1}{D (D - 1)}} \radius,
     \ldots, -\sqrt{\frac{1}{D (D - 1)}} \radius\right)^\top
     \, ,
     \]
and the coordinate $\sigma_1$, and we get the wanted result.
\end{proof}


\section{Additional experimental results}
\label{sec:more-results-appendix}

In this section we collect additional experimental results. 

\subsection{Illustration of Theorem~\ref{th:bounded-traj} with another implementation}

In Figure~\ref{fig:influence-length-document-gensim} and~\ref{fig:influence-number_replacements-gensim}, we present a replication of the experiment presented in Section~\ref{sec:doc2vec-robustness} of the main paper. 
This time, we used the \texttt{gensim} implementation of the \texttt{doc2vec} model. 
The main difference is the use of \emph{hierarchical softmax} instead of softmax. 
Despite this difference, the empirical results remain consistent with our theoretical claims and experimental results with an \emph{ad hoc} implementation. 
We conjecture that the hierarchical softmax has similar algebraic properties to the softmax, in particular kernel stability, which would justify conducting the same analysis. 

\begin{figure}
    \centering
\includegraphics[scale=0.22]{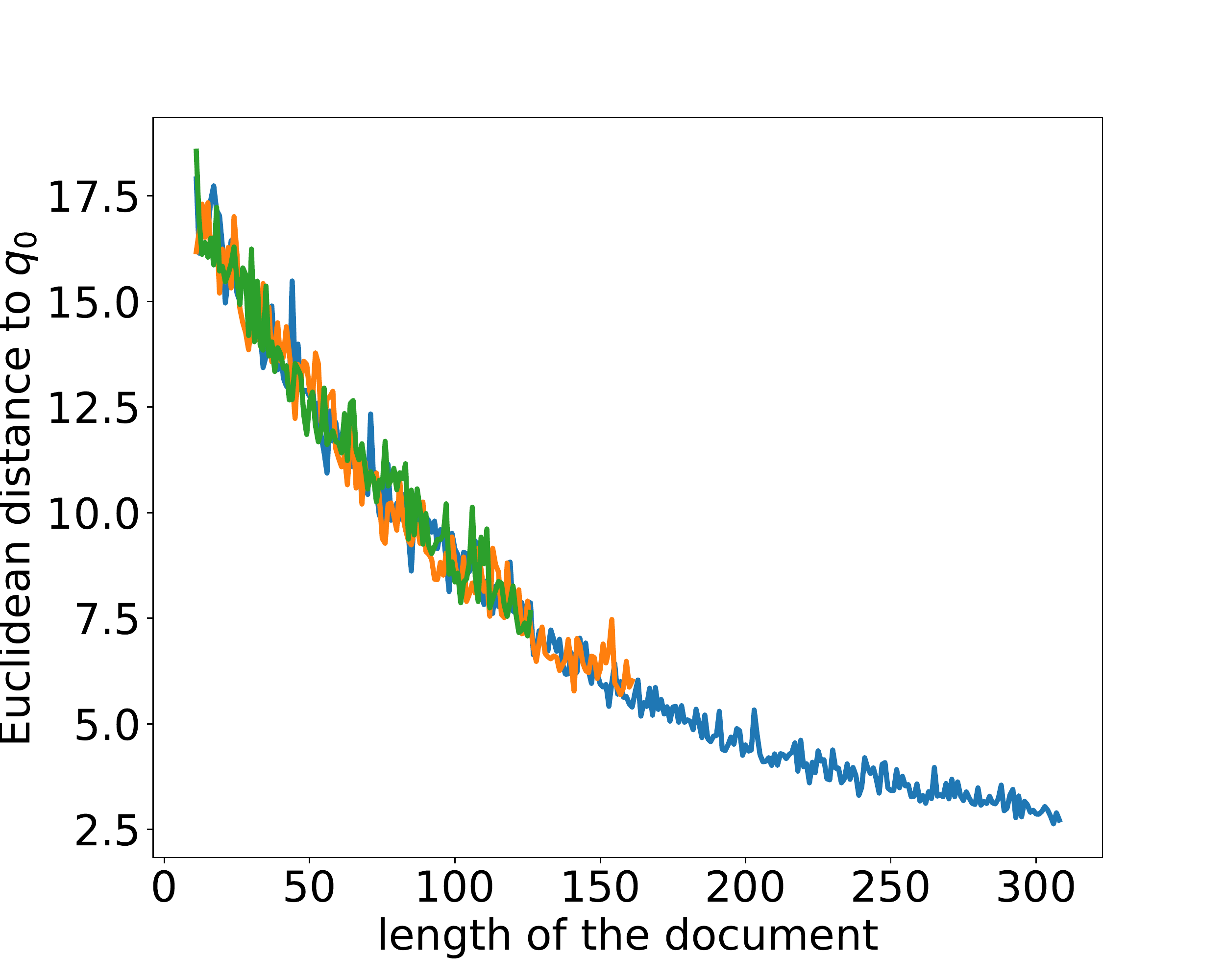}
    \hfill 
    \includegraphics[scale=0.22]{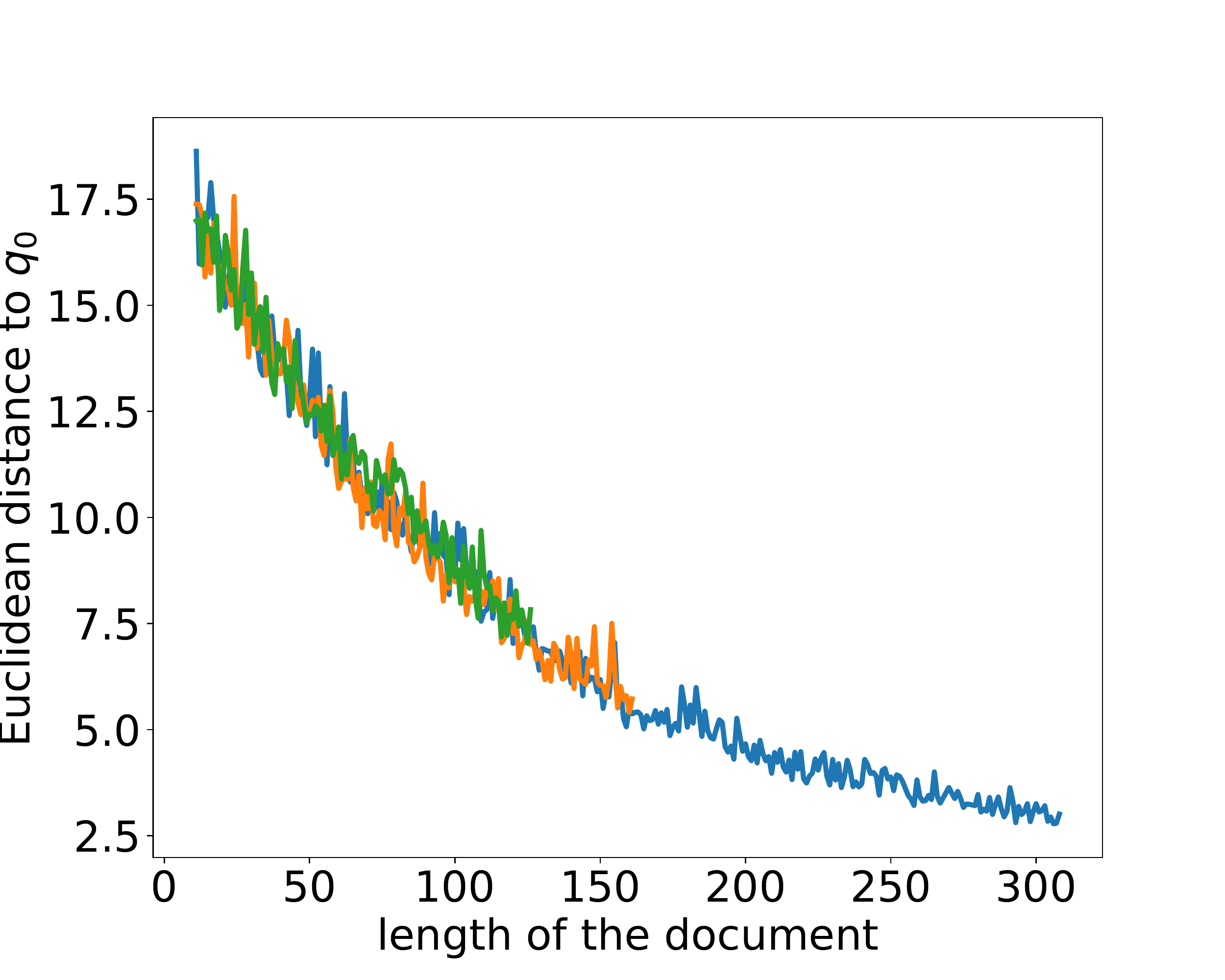}
    \hfill
    \includegraphics[scale=0.22]{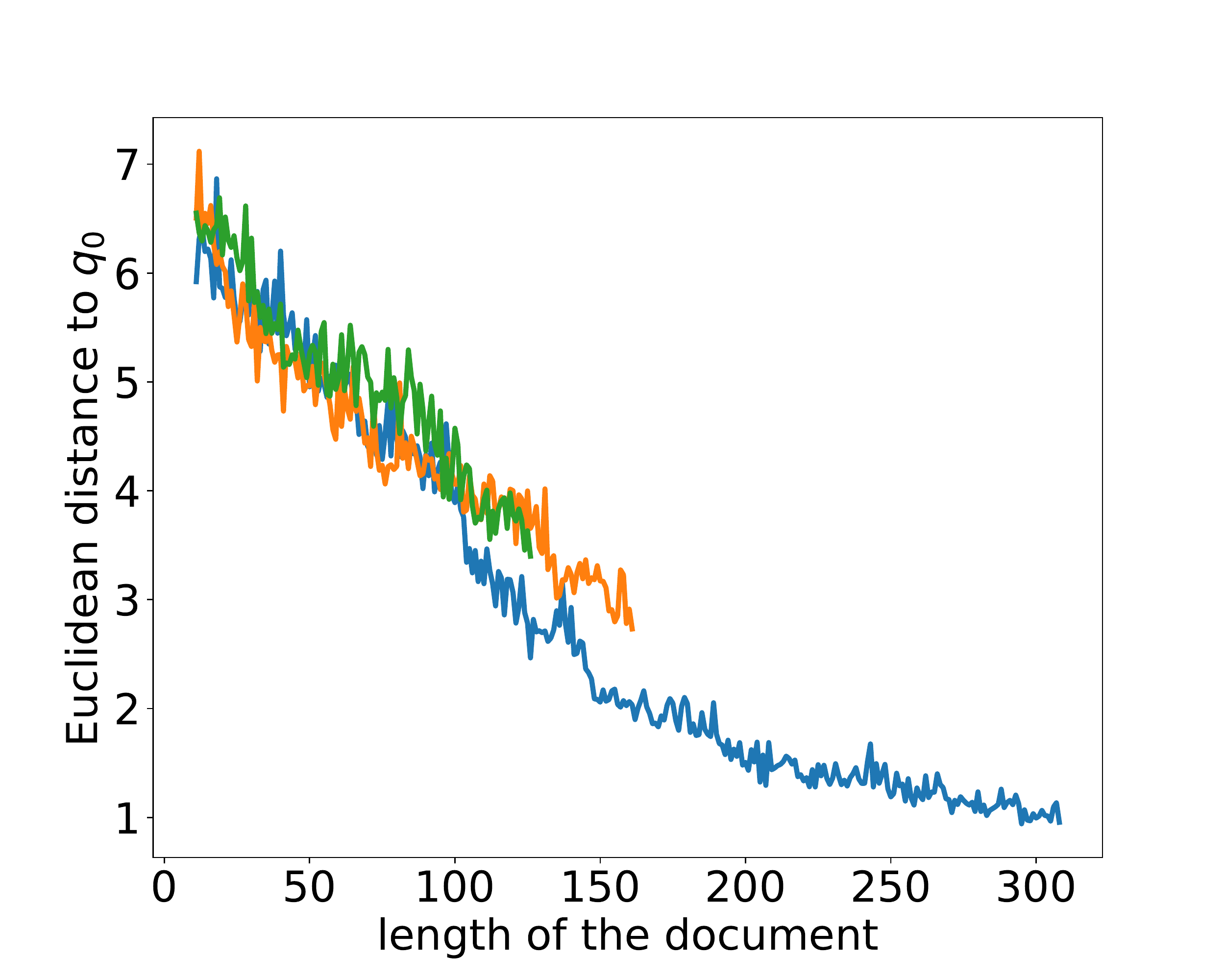}
    \caption{\label{fig:influence-length-document-gensim}Influence of the length of the document with \texttt{gensim} implementation of \texttt{doc2vec}. Increasing the length of a document by considering the first words of $3$ IMDB examples and replacing $5$ words at random several times for each document lengTheorem Dimension of the embedding is $d=50$, size of the dictionary is $D=23,048$.}
\end{figure}

\begin{figure}
    \centering
\includegraphics[scale=0.22]{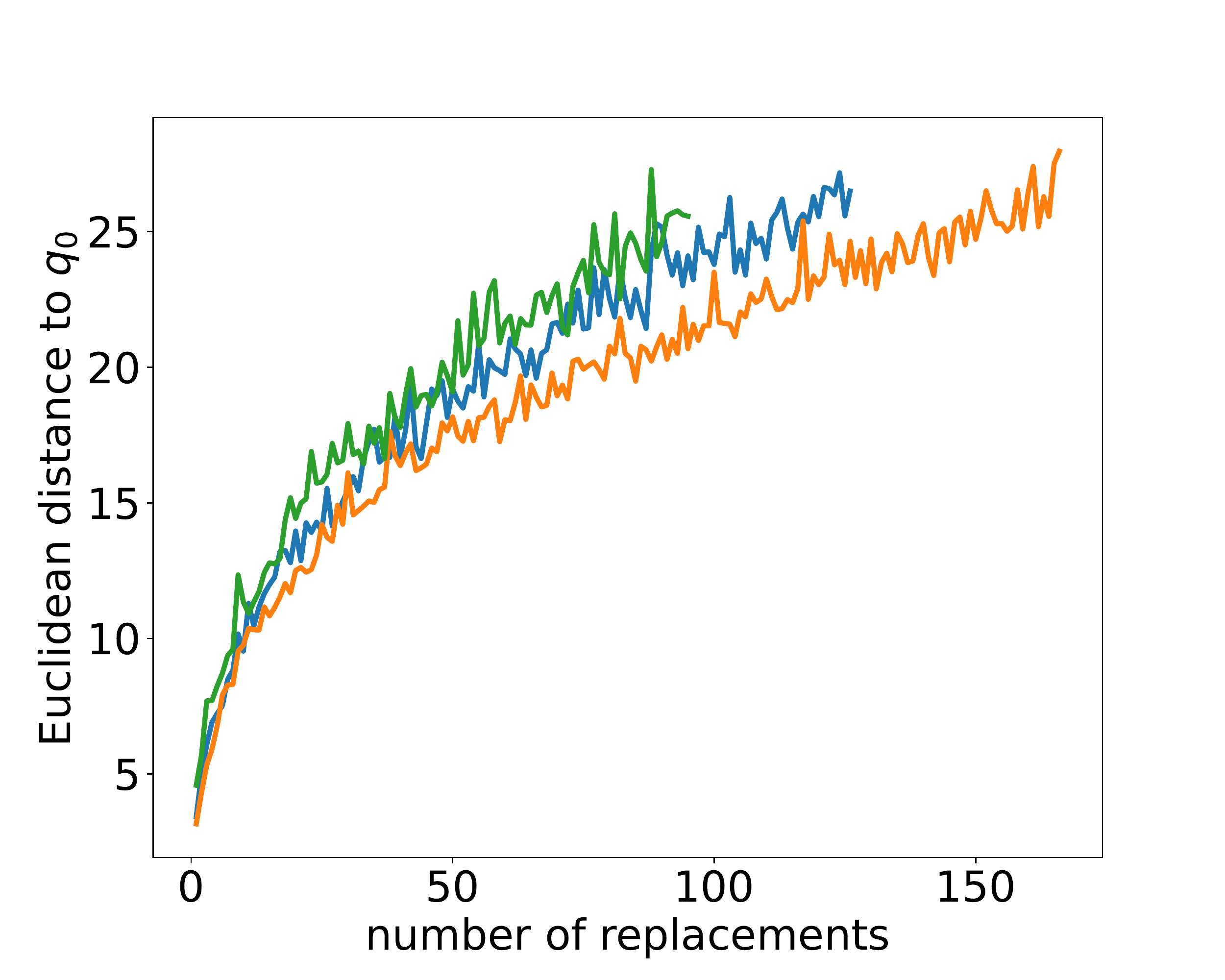}
    \hfill 
    \includegraphics[scale=0.22]{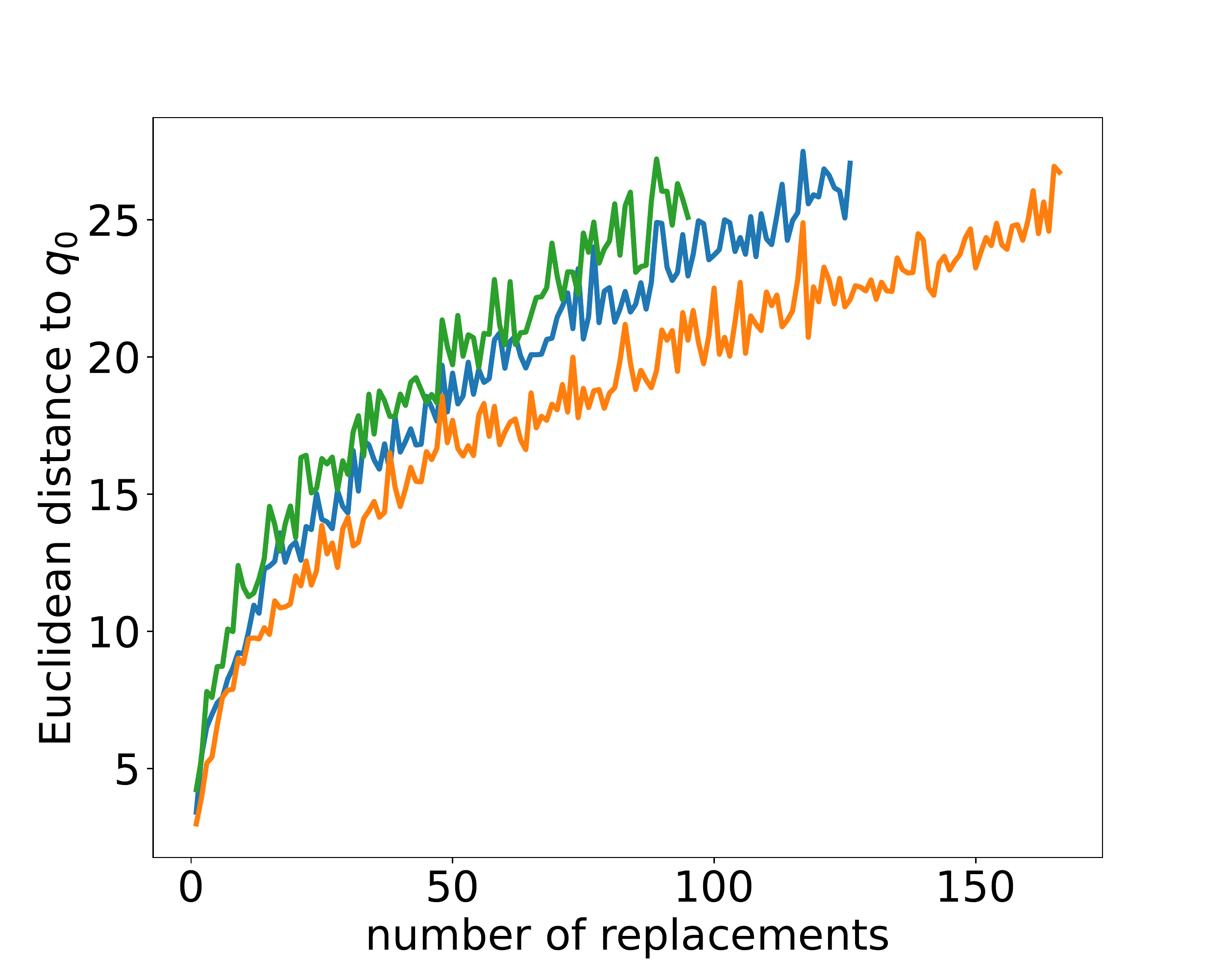}
    \hfill
    \includegraphics[scale=0.22]{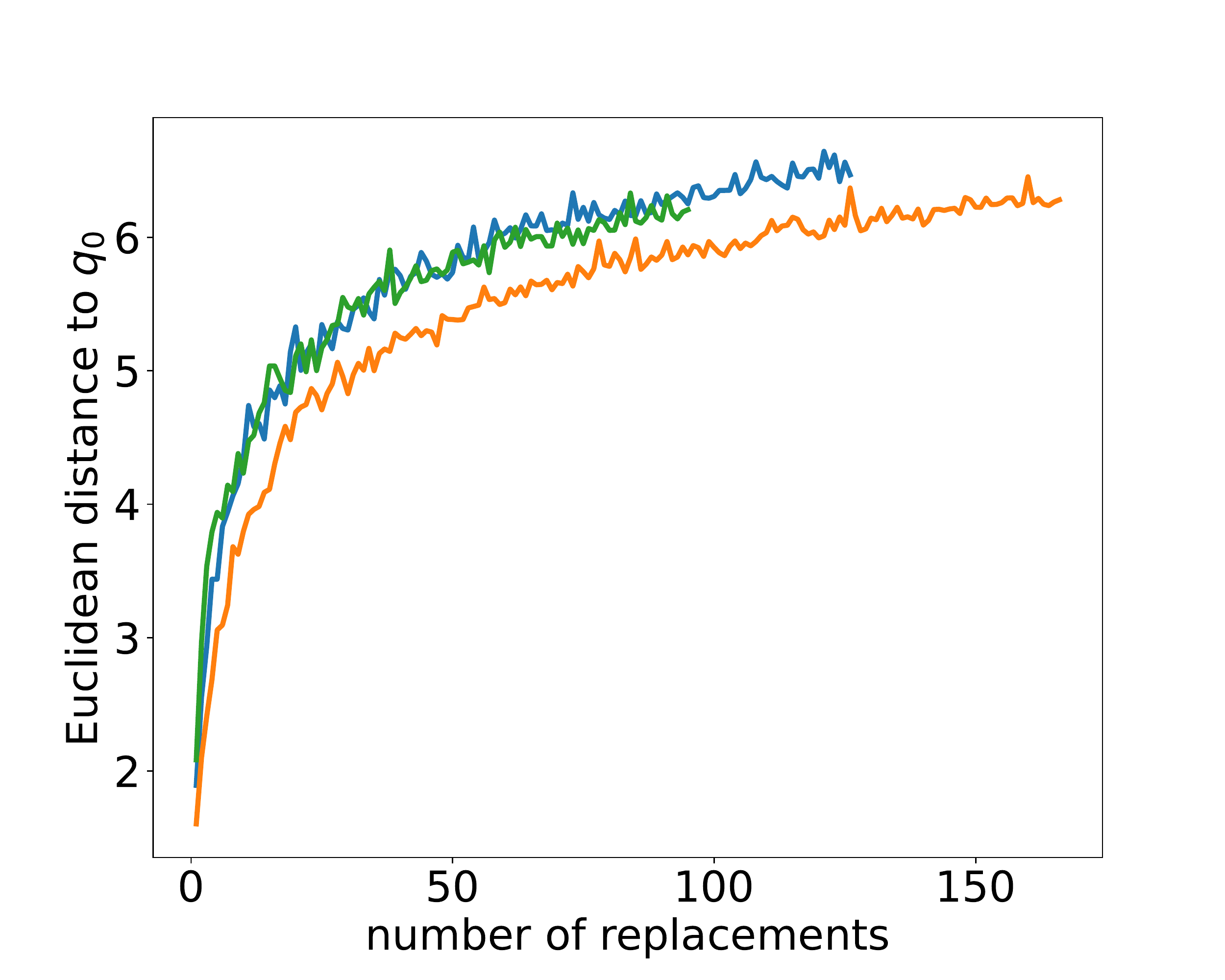}
    \caption{\label{fig:influence-number_replacements-gensim}Influence of number of replacements with \texttt{gensim} implementation of \texttt{doc2vec}.  Considering $3$ examples from the IMDB dataset. Dimension of the embedding is $d=50$, size of the dictionary is $D=23,048$.}
\end{figure}


\subsection{Illustration of Lemma~\ref{lemma:bound_q0-appendix}}
\label{sec:bound-q0-appendix}

In Figure~\ref{fig:norm-original-embedding-gensim}, we illustrate the bound provided by Lemma~\ref{lemma:bound_q0-appendix}. 
We consider the $5$ longest examples of the IMDB dataset and create artificial documents of increasing length as before. 
We observe no asymptotic dependency in $T$, as predicted by the theoretical result. 

\begin{figure}
    \centering
\includegraphics[scale=0.22]{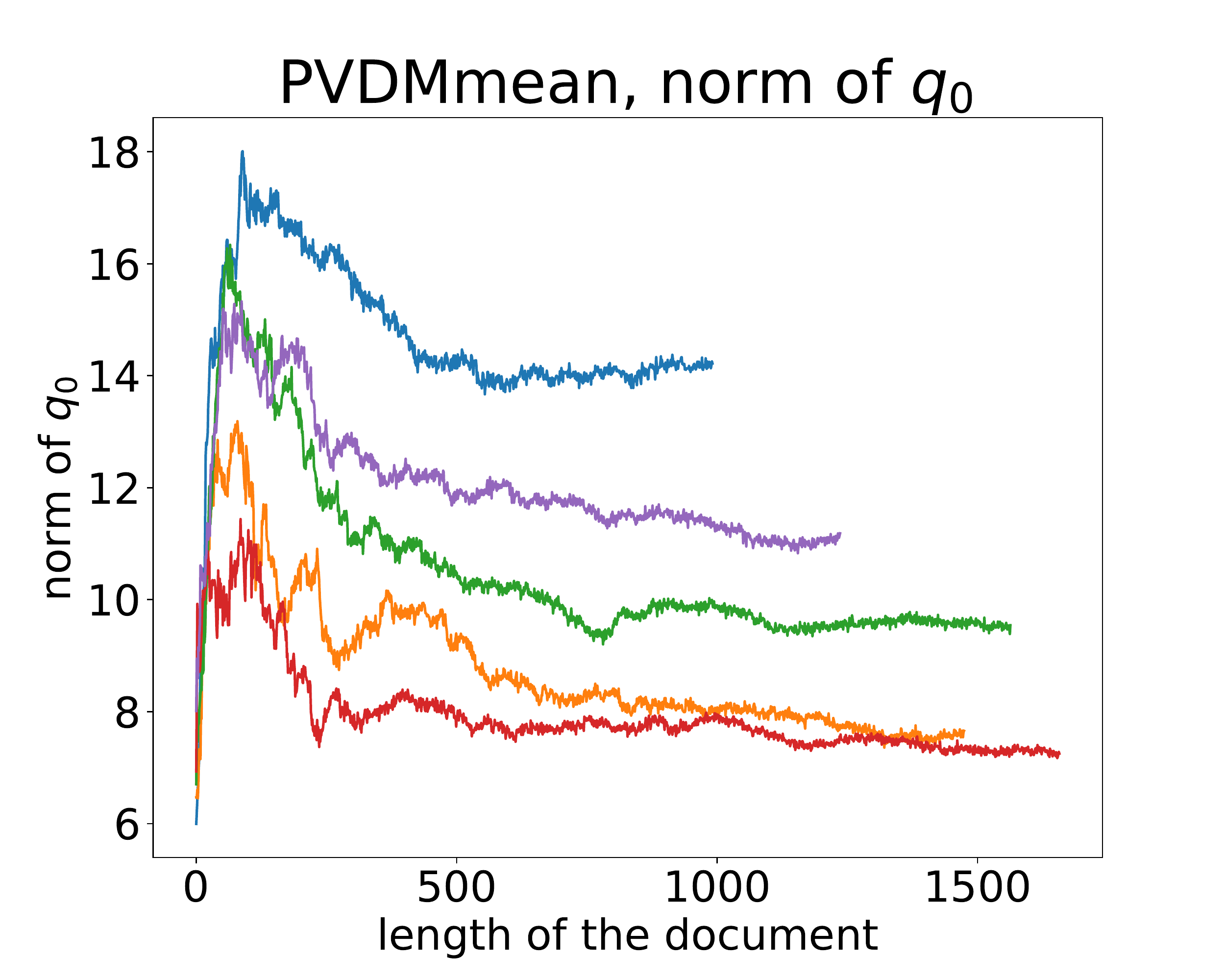}
    \hfill 
    \includegraphics[scale=0.22]{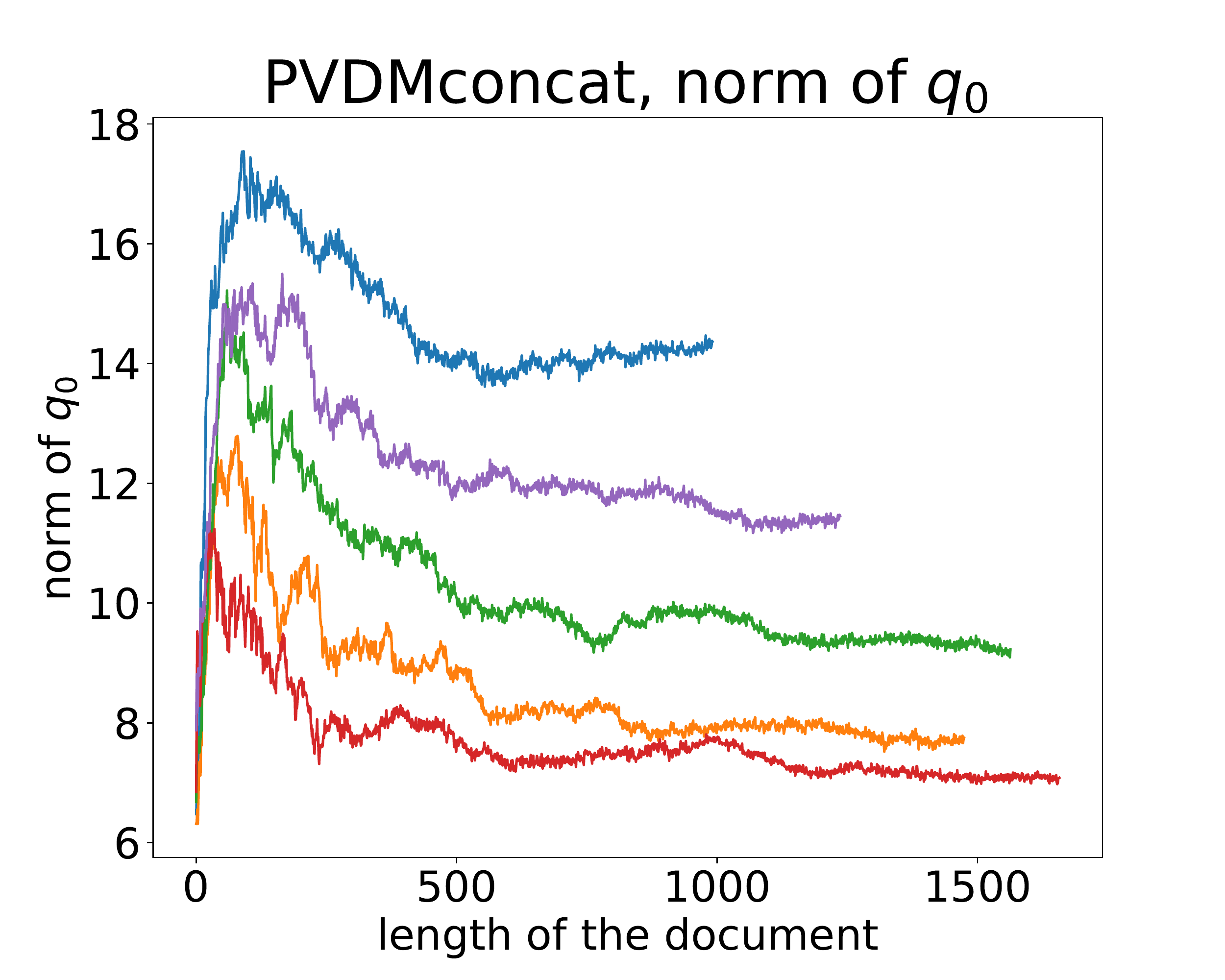}
    \hfill
    \includegraphics[scale=0.22]{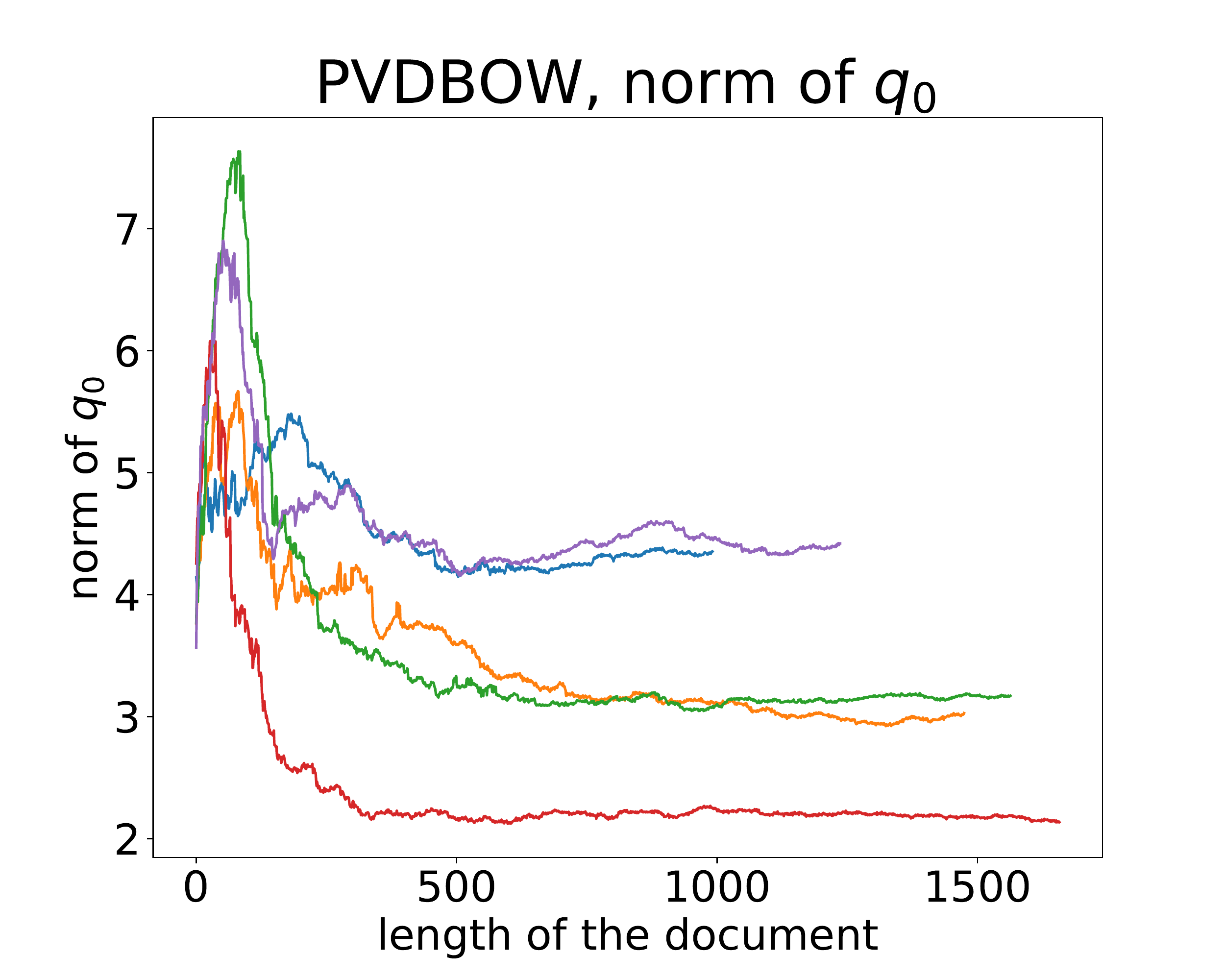}
    \caption{\label{fig:norm-original-embedding-gensim}Norm of the original embedding as a function of $T$.}
\end{figure}


\subsection{Singular values of $\Red$}
\label{sec:svd-appendix}

In Figure~\ref{fig:perso-svd}, we empirically check that the singular values of the (learned) $\Red$ are well-behaved. 
We considered the matrices from our local model and report the histogram of their singular values in log scale in Figure~\ref{fig:perso-svd}. 

\begin{figure}
    \centering
\includegraphics[scale=0.22]{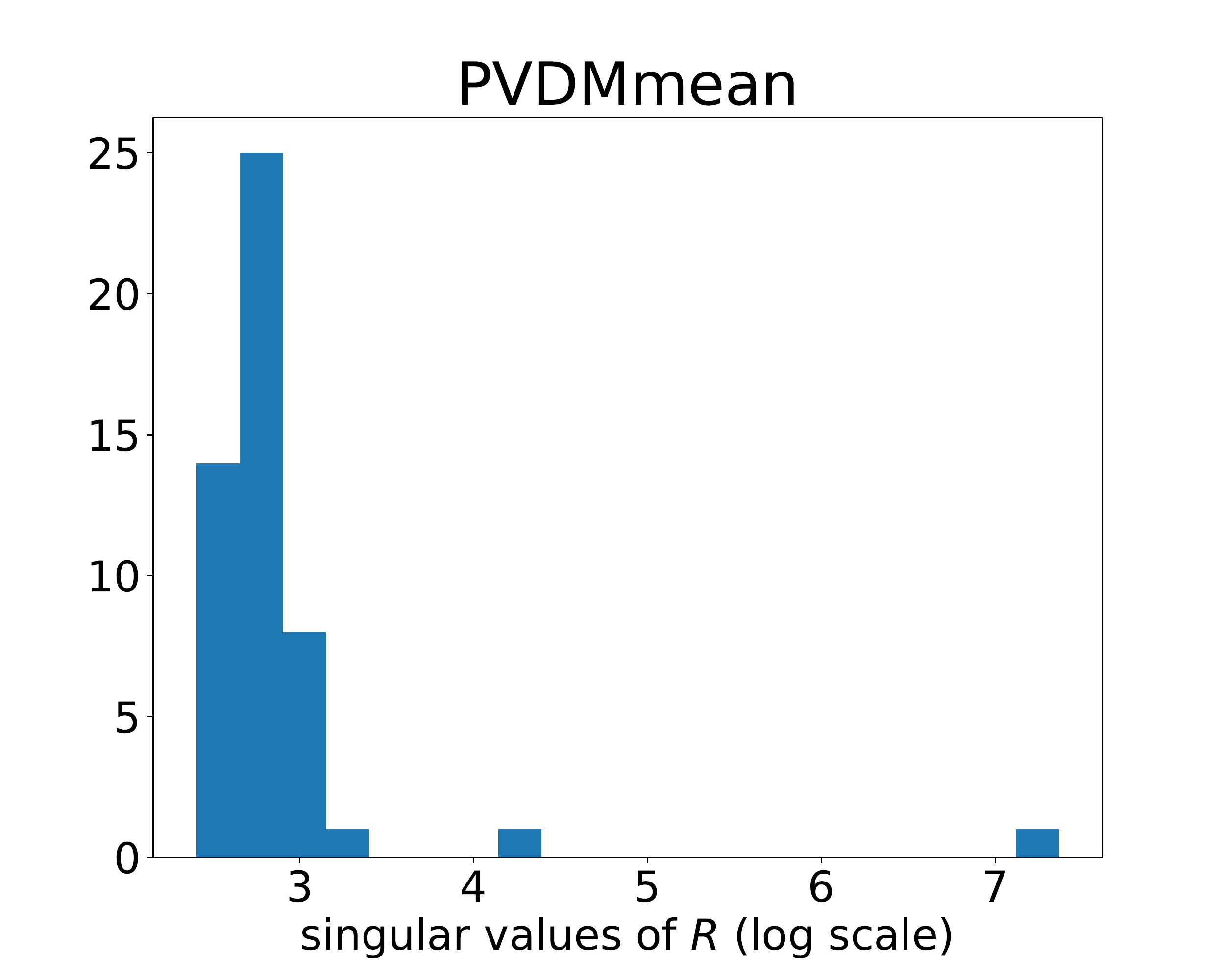}
    \hfill 
    \includegraphics[scale=0.22]{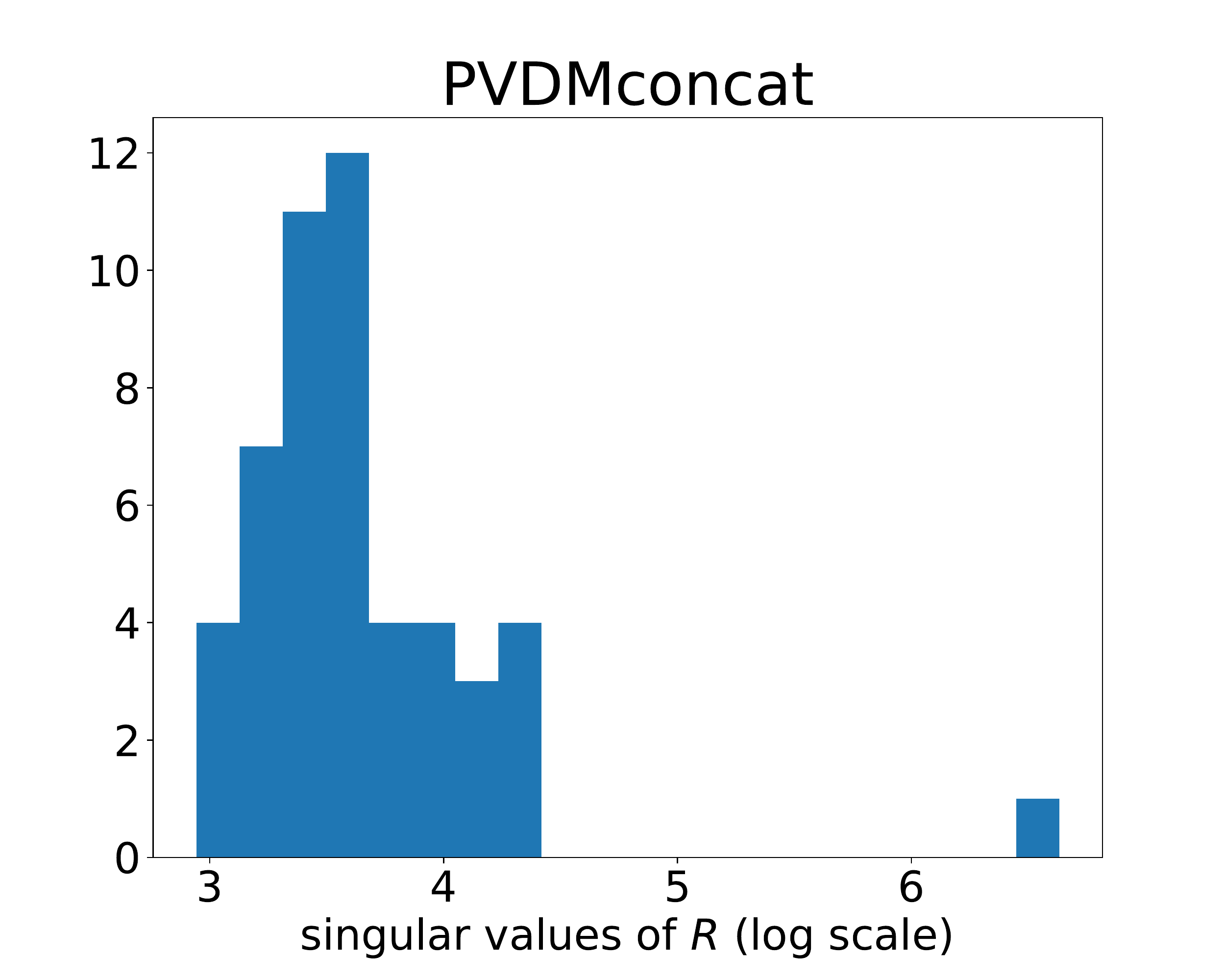}
    \hfill
    \includegraphics[scale=0.22]{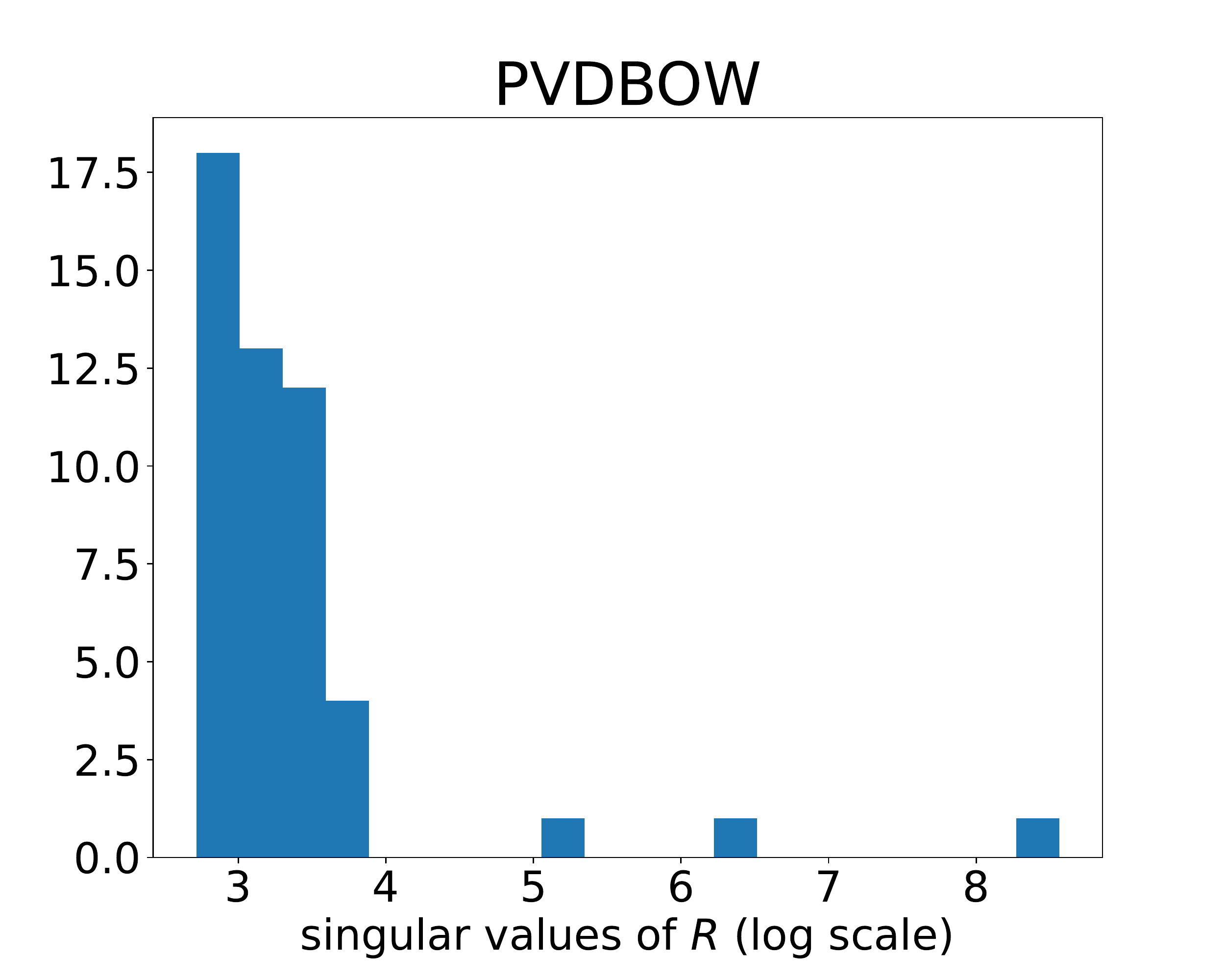}
    \caption{\label{fig:perso-svd}Singular values of $\Red$, in log scale. We observe that $\minsingular{\Red}>0$.}
\end{figure}



\end{document}